%% file: iclr2027_conference.tex
\documentclass{article} 
\usepackage{iclr2027_conference,times}
\iclrfinalcopy
\makeatletter
\def\@noticestring{Preprint. Under review.}
\makeatother

\input{math_commands.tex}

\usepackage{makecell}
\usepackage{hyperref}
\usepackage{url}
\usepackage{booktabs}
\usepackage{amsmath,amssymb,amsfonts,amsthm}
\usepackage{bm}
\usepackage{booktabs}
\usepackage{microtype}
\usepackage{longtable}
\usepackage{graphicx}
\usepackage{subcaption}
\usepackage{booktabs}
\usepackage{comment}
\usepackage{wrapfig}

\newtheorem{theorem}{Theorem}
\newtheorem{proposition}[theorem]{Proposition}
\newtheorem{lemma}[theorem]{Lemma}
\newtheorem{corollary}[theorem]{Corollary}
\newtheorem{definition}[theorem]{Definition}
\newtheorem{remark}[theorem]{Remark}

\usepackage{multirow}
\usepackage{longtable}
\usepackage{booktabs}
\usepackage{amssymb} 
\usepackage{capt-of}
\usepackage{tabularx}

\newcommand{\cM}{\mathcal{M}}
\newcommand{\cU}{\mathcal{U}}
\newcommand{\cW}{\mathcal{W}}
\newcommand{\cV}{\mathcal{V}}

\newcommand{\cL}{\mathcal{L}}
\newcommand{\dist}{\operatorname{dist}}

\hypersetup{
  hidelinks,
  colorlinks=false,
  pdfborder={0 0 0},
  pdftitle={When Does Unsupervised Learning Succeed or Fail?
  A PoS Perspective on Reconstruction-Based Anomaly Detection},
  pdfauthor={Mehmet Yamac, Yagmur Mustu, Muhammad Numan Yousaf,
  Lei Xu, Marcel van Gerven}
}

\title{When Does Unsupervised Learning Succeed or Fail? A PoS Perspective on Reconstruction-Based Anomaly Detection}

\author{
Mehmet Yama\c{c}$^{1}$,
Yagmur Mustu$^{1}$,
Muhammad Numan Yousaf$^{1}$,
Lei Xu$^{1}$,
Marcel van Gerven$^{2}$\\[0.6em]
$^{1}$Tampere University, Tampere, Finland\\
$^{2}$Radboud University, Nijmegen, The Netherlands
}

\begin{document}

\maketitle

\begin{abstract}
Reconstruction-based unsupervised learning can fail in two opposing ways: a model may reconstruct anomalies too accurately or discard valid nominal variation. Using the Pursuit of Subspaces hypothesis, we characterize these failures through the meet, union, and join geometries induced by the nominal components. Excess learned range produces join blindness, while insufficient capacity produces meet preference and loss of nominal fidelity. We show that the compact nominal union is optimal among nominal faithful ranges and generally requires a nonlinear reconstruction map. Based on this geometry, we introduce Dynamic Push and Pull, which learns from controlled perturbations without anomaly labels, and nested manifold carving, which applies the same principle recursively in latent space. Experiments confirm the predicted changes in latent geometry across every tested Push and Pull configuration. The proposed methods improve reconstruction-based anomaly detection across standard benchmarks and unseen image degradations, while also improving pretrained ECG representations for downstream classification. These results connect reconstruction failures to identifiable geometric conditions and provide practical mechanisms for learning compact representations.

\end{abstract}

\section{Introduction}

The recently proposed Pursuit of Subspaces (PoS) hypothesis provides a
postulate-based geometric foundation for representation learning
\citep{pos}. It formalizes recurring properties of successful networks through
four idealized and falsifiable postulates. \textbf{P1: Geometric compaction} states that ideal learning refines
an \(m\)-dimensional candidate data manifold \(\cM_D\subset\R^n\) into a
structured target \(\cM:=\bigcup_{i=1}^{M}\cM_{D_i}\subseteq\cM_D\), where
\(\dim(\cM_{D_i})=k_i<m\). \textbf{P2: Nonlinear orthogonal projection}
specifies the ideal learned operation: within an encoder-decoder
encapsulation, the network acts as
\(P_{\cM}:=f_D\circ f_E:\R^n\rightarrow\cM\), producing
\(\hat{s}=P_{\cM}(s)\in\cM\) and satisfying
\(P_{\cM}^{2}=P_{\cM}\); hence \(P_{\cM}(\hat{s})=\hat{s}\).
\textbf{P3: Normal-space removal} realizes this projection as
\(\hat{s}=P_{\cM}(s)=s-P_{\cM}^{\perp}(s)\), where
\(P_{\cM}^{\perp}(s)\in N_{\hat{s}}\cM_{D_i}\) and
\(P_{\cM}=I-P_{\cM}^{\perp}\). Thus, the identity path preserves \(s\), while
the learned branch estimates the normal-space component to be removed;
\emph{normal space} denotes the geometric object and \emph{residual branch}
its architectural realization. \textbf{P4: Nested projections through depth}
reapplies P1-P3 at every level: the representation at level \(\ell-1\)
becomes the input to
\(P^{(\ell)}:=f_D^{(\ell)}\circ f_E^{(\ell)}
:\R^{n_{\ell-1}}\rightarrow\cM^{(\ell)}\), where
\(\cM^{(\ell)}:=\bigcup_{i=1}^{M_\ell}\cM_{D_i}^{(\ell)}\).
Depth therefore realizes a hierarchy of nested, progressively refined
nonlinear projections.

Remarks~1--3 of PoS~\cite{pos} show that, once the network has learned
\(P_{\cM}\) as a nonlinear orthogonal projector onto a union of submanifolds,
it can be extended while preserving this property. Provided that the added
transformation \(\Phi\) is learned as an isometry,
\(\Phi\circ P_{\cM}\circ\Phi^{-1}\) remains a nonlinear orthogonal projector
onto \(\Phi(\cM)\). These results explain how an established PoS projection can
be preserved during network growth, but not how the network can be encouraged
to learn \(P_{\cM}\) as such a projection in the first place. This paper
addresses precisely that open problem:

\begin{quote}
\textit{Which learning mechanisms encourage neural modules to approximate
nonlinear orthogonal projection onto compact unions of submanifolds?}
\end{quote}

The PoS postulates enable a falsifiable predict-then-measure approach: they
specify both the target representation geometry and how representations should
evolve as compactness increases. We derive two failure predictions and two
corresponding remedies, then test the predicted dynamics at the level of
individual samples and across configurations. We
distinguish properties enforced directly by the proposed objectives from
geometric conclusions that additionally assume the ideal projection in P2. Let \(\cM_D\) contain the component submanifolds
\(\{\cM_{D_i}\}_{i=1}^{M}\), and locally represent them in common coordinates
by tangent subspaces \(S_i\). Define their \emph{meet}
\(\cW:=\bigcap_i S_i\), local nominal \emph{union}
\(\cU:=\bigcup_i S_i\), and \emph{join}
\(\cV:=\sum_i S_i=\operatorname{span}(\bigcup_i S_i)\), the smallest linear
subspace containing every \(S_i\). Since
\(\cW\subseteq\cU\subseteq\cV\), set inclusion reverses distance:
\begin{equation}
    \dist(s,\cV)\leq\dist(s,\cU)\leq\dist(s,\cW).
    \label{eq:score-sandwich-main}
\end{equation}
This relation predicts two limiting failures: a learned range approaching the
join can reduce anomaly scores and cause false negatives, whereas one
approaching the meet assigns positive scores to nominal points in
\(\cU\setminus\cW\), causing false positives. These distances coincide with
reconstruction errors only under the nearest-point projection prescribed by
P2; we do not assume that an arbitrary learned range lies between \(\cW\) and
\(\cV\). Formal definitions and proofs are given in
Appendix~\ref{app:union-interval}.

\begin{figure}[htbp]
    \centering
    \begin{subfigure}[b]{0.23\textwidth}
        \centering
        \includegraphics[trim=12pt 10pt 30pt 10pt, clip, width=\textwidth]{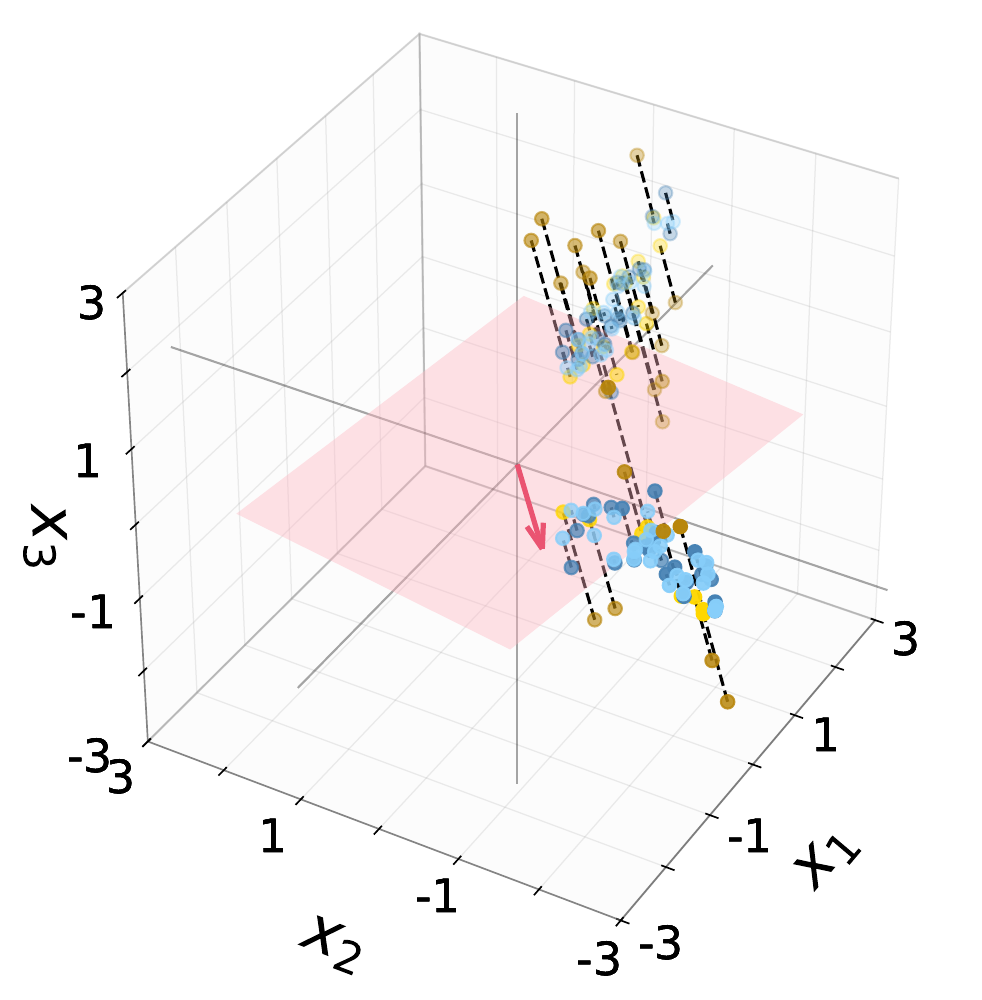}
        \caption{}
        \label{fig:pdf1}
    \end{subfigure}
    \hfill
    \begin{subfigure}[b]{0.23\textwidth}
        \centering
        \includegraphics[trim=12pt 10pt 30pt 10pt, clip, width=\textwidth]{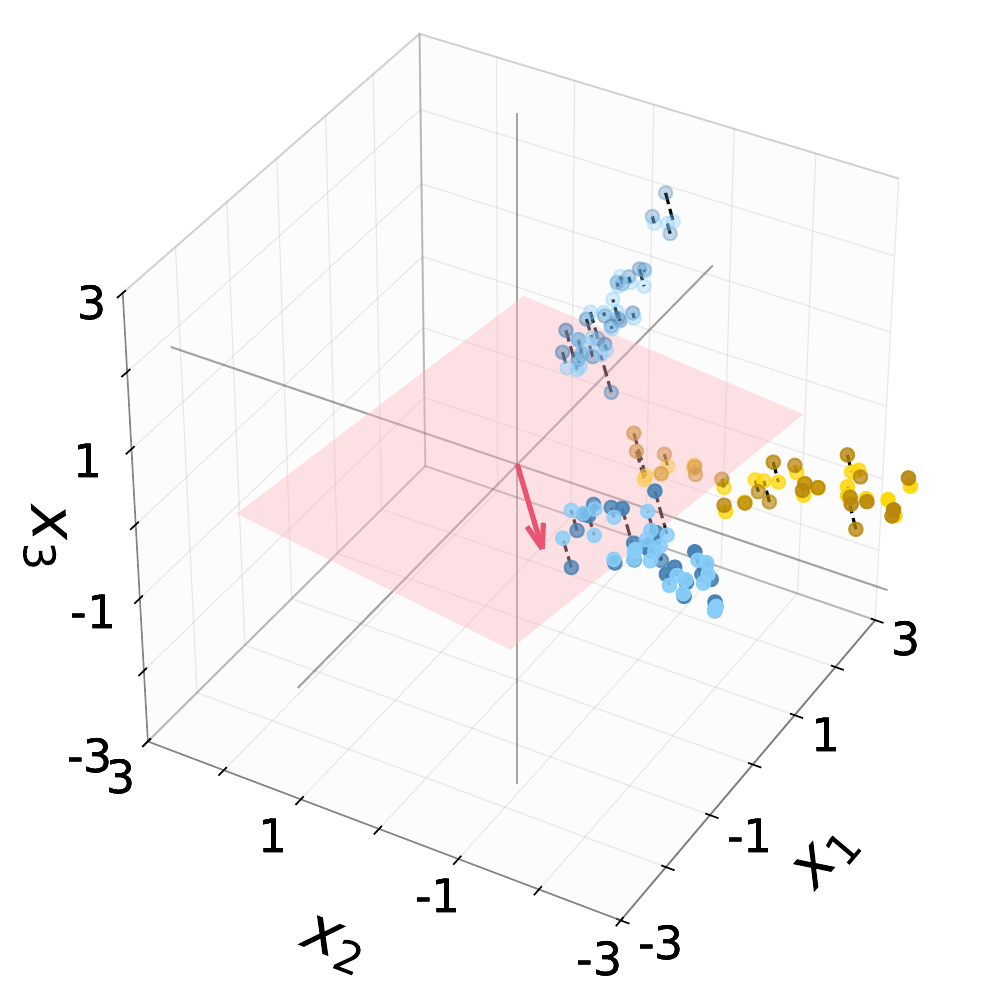}
        \caption{}
        \label{fig:pdf2}
    \end{subfigure}
    \hfill
    \begin{subfigure}[b]{0.23\textwidth}
        \centering
        \includegraphics[trim=12pt 10pt 30pt 10pt, clip, width=\textwidth]{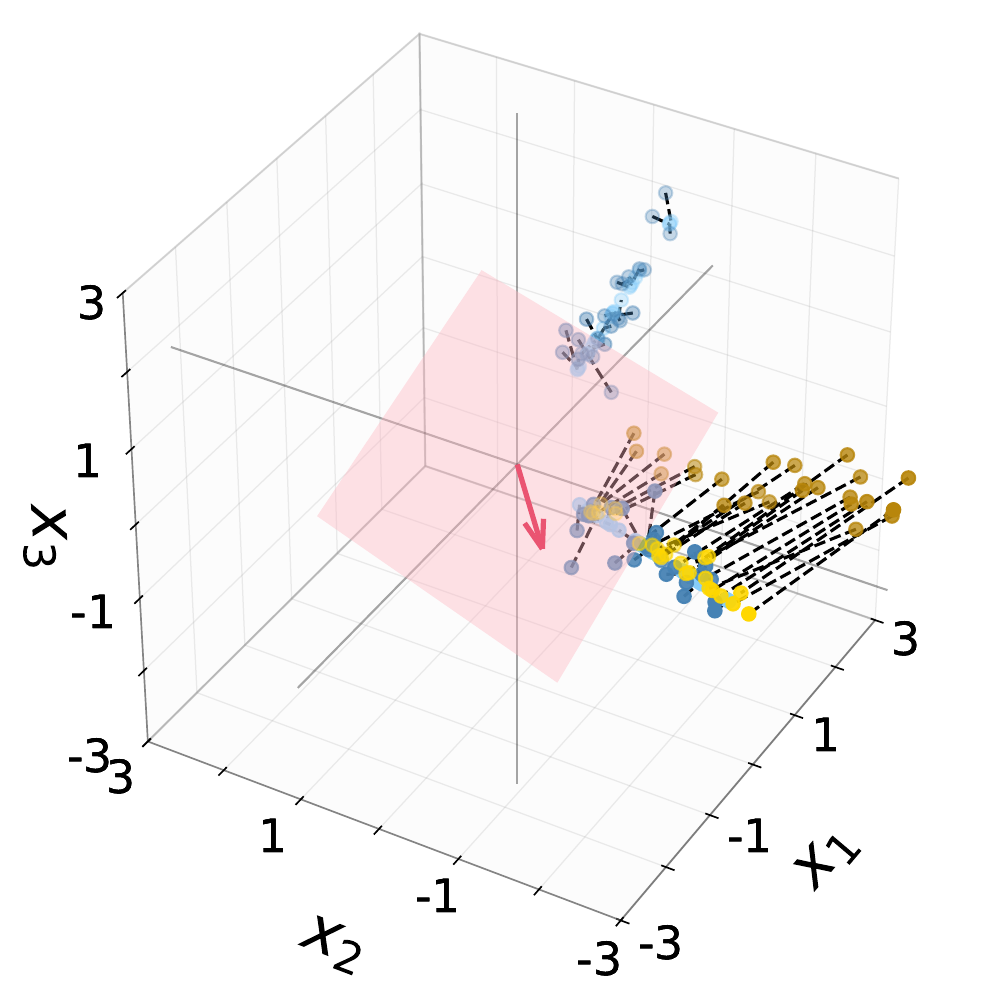}
        \caption{}
        \label{fig:pdf3}
    \end{subfigure}
    \hfill
    \begin{subfigure}[b]{0.23\textwidth}
        \centering
        \includegraphics[trim=12pt 10pt 30pt 10pt, clip, width=\textwidth]{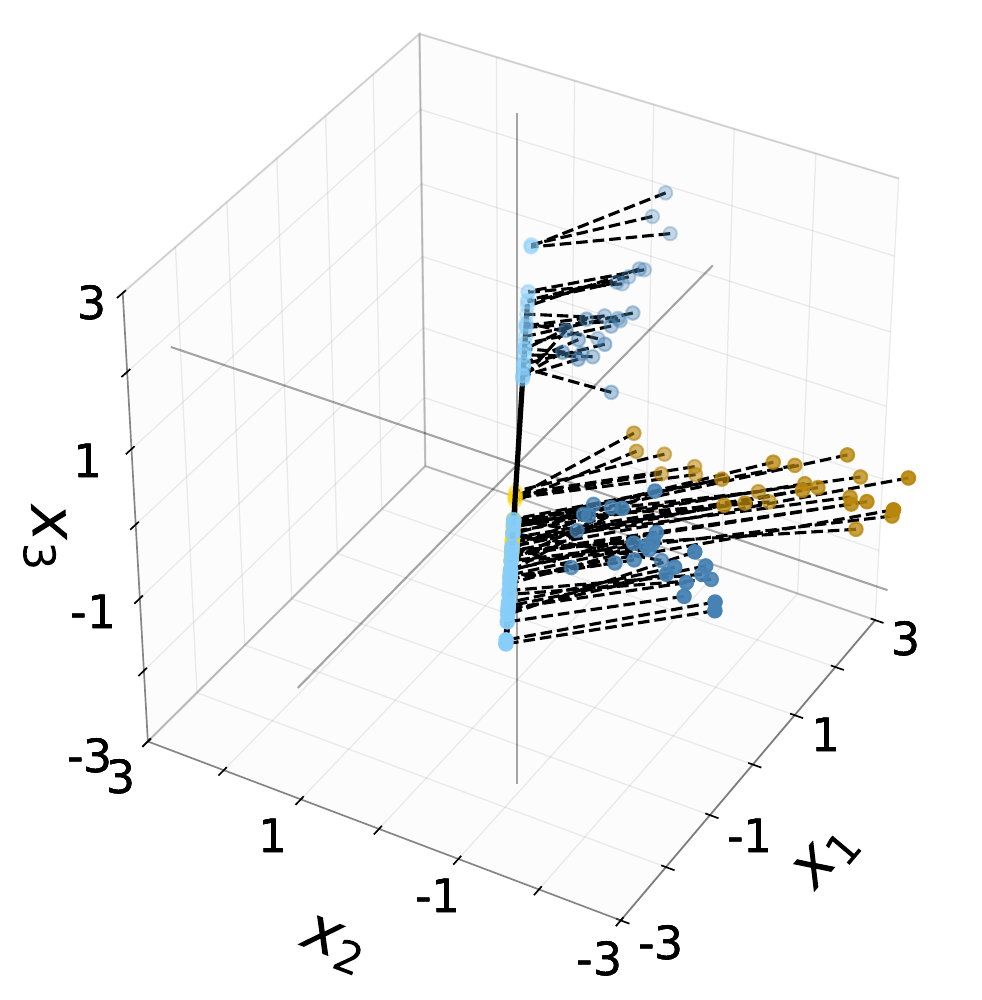}
        \caption{}
        \label{fig:pdf4}
    \end{subfigure}
    \vspace{2pt}

    \noindent
    \makebox[\linewidth][l]{%
        \includegraphics[trim=5pt 10pt 20pt 5pt, clip, width=\linewidth]{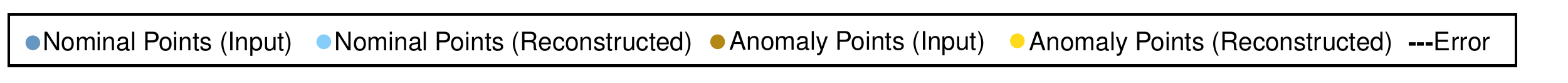}%
    }
    \caption{\textbf{Toy geometry of successful and failed anomaly detection.}
    The shaded plane denotes the join \(\cV\) of the two nominal subspaces;
    dark blue and ochre denote nominal and anomalous inputs, light blue and
    yellow their reconstructions, and dashed segments reconstruction errors.
    \textbf{(a)} A linear network with range \(\cV\) detects anomalies outside
    the join through their nonzero reconstruction errors.
    \textbf{(b)} The same range is blind to anomalies in
    \(\cV\setminus\cU\), which can be reconstructed as accurately as nominal
    data.
    \textbf{(c)} A nonlinear network learns the compact union \(\cU\), preserving
    the nominal components while producing nonzero errors for anomalies both
    inside and outside \(\cV\).
    \textbf{(d)} Rank deficient learning produces a low dimensional compromise range
\(T_{\mathrm{under}}\subset\cV\) that poorly represents both nominal and
anomalous data, inflating both scores and reducing their fidelity to nominal
union membership.}
    \label{fig:toy-failure-geometry}
\end{figure}

\noindent\textbf{PoS Prediction I (join blindness):} A reconstruction objective constrains the model on nominal samples but does not
bound its learned range outside them. The range may therefore expand toward the
join \(\cV\), violating the compaction required by P1. As
Figure~\ref{fig:toy-failure-geometry}(a) illustrates, anomalies outside \(\cV\)
remain detectable through reconstruction error. Under P2, however, an anomaly
\(s_a\in\cV\setminus\cU\) receives zero reconstruction error
(Figure~\ref{fig:toy-failure-geometry}(b)), while even an approximate join can
substantially reduce its score. Join blindness is therefore a loss of
\emph{selectivity}: the model retains directions that should have been
excluded.

\noindent\textbf{PoS Prediction II (meet preference):}
Let \(\cW:=\bigcap_i S_i\) denote the directions shared by all nominal
component tangent spaces. Limited capacity may preferentially retain \(\cW\)
while discarding component specific directions; we call the limiting case
\(T=\cW\) \emph{meet collapse}. Proposition~\ref{prop:balanced-meet}
establishes sufficient conditions under which a rank constrained MSE optimal
linear autoencoder exhibits this preference. Through local tangent
approximations, the same mechanism applies to nonlinear geometry. Under P2,
meet collapse assigns positive errors to nominal points in
\(\cU\setminus\cW\), causing a loss of \emph{fidelity}.
Figure~\ref{fig:toy-failure-geometry}(d) visualizes this consequence through
rank deficient underlearning; its meet is trivial, so it illustrates the
fidelity failure rather than exact meet collapse.

\noindent\textbf{Why the compact union requires nonlinearity:}
P1 identifies the desired learned range as the compact nominal union
\(\cU\). Lemma~\ref{lem:union-optimality} shows that \(\cU\) is optimal among nominal-faithful ranges,
while Proposition~\ref{prop:linear-noncompact} shows that a linear reconstruction map preserving
\(\cU\) must also contain its join \(\cV\). Reconstruction alone does not
constrain the learned range toward \(\cU\): excess capacity can produce join
blindness, whereas restricted capacity can discard component-specific
directions and produce meet preference. Learning the compact union therefore
requires both nonlinearity and an explicit separation signal. Dynamic Push
and Pull supplies this missing signal.
Figure~\ref{fig:toy-failure-geometry}(c)
illustrates the desired case: a nonlinear network learns the compact union,
preserving nominal reconstruction while detecting anomalies both inside and
outside the join. Moreover, any geometrically nominal faithful range
\(T\supseteq\cU\) satisfies
\(
\dist(s,T)\leq\dist(s,\cU),
\)
where \(\cU\) maximizes ideal pointwise separation and, under nominal support
on \(\cU\), expected geometric score gap among such ranges. Formal statements
and proofs are given in Appendices~\ref{app:meet},
\ref{app:linear-noncompact}, and~\ref{app:union-optimality}.

\noindent\textbf{Remedy I (dynamic Push and Pull learning):} P1 to P3 motivate a training objective based on controlled perturbations of
nominal samples, without requiring anomaly examples or class labels.
Reconstructions are pulled toward the corresponding clean samples and pushed
away from their perturbed versions, with a margin adapted to perturbation
severity. This discourages the excess range responsible for join blindness,
as detailed in
(Section~\ref{sec:dynamic-push-pull}).

\noindent\textbf{Remedy II (nested manifold carving):}
P4 motivates applying this compaction recursively in latent space. Each nested
stage is trained to preserve nominal information while removing perturbation
directions, while sufficient latent capacity reduces premature contraction
toward shared directions (Section~\ref{sec:nested-carving}).

Although anomaly detection exposes these failure geometries particularly
clearly, the remedies target unsupervised representation learning more
generally. They learn data structure without semantic labels. Reconstruction
error uses this structure to detect departures from the learned data, while
downstream classifiers can reuse the same representation for label efficient
prediction. Anomaly detection is therefore one application of the learned
geometry, not the scope of the method. 

We evaluate this progression in three stages. First, we compare our strategies
with state of the art unsupervised anomaly detection methods on standard
benchmarks. Second, we introduce a real world nominal only task: learning from
high quality images and detecting unseen degradations without low quality
training examples. Third, we move beyond anomaly detection by pretraining on
unlabelled ECG signals, freezing the encoder, and training a small classifier
with limited labelled data. We additionally test two PoS predictions for
representation dynamics: relative disentanglement of nominal components and
projection of samples outside the union toward the nominal union.

Our contributions are threefold. First, we derive two opposing failure predictions, join blindness and meet preference, and establish their consequences for anomaly scores, the need for nonlinearity, and the optimality of the compact nominal union. Second, we introduce Dynamic Push and Pull and nested manifold carving as complementary PoS-derived strategies for learning compact representations without anomaly examples or class labels. Third, we validate the framework through benchmark anomaly detection, real-world detection of unseen image degradations from high-quality nominal data, and unsupervised ECG pretraining for downstream classification, while testing two predicted dynamics of the learned geometry.
\section{Related Work}
Autoencoders are widely used for unsupervised anomaly detection, where reconstruction error serves as an anomaly score under the assumption that models trained on nominal data reconstruct anomalous inputs less accurately \citep{sakurada2014anomaly, dsebm, zhou2017anomaly, zhao2017spatio}. However, this assumption is not generally reliable, since autoencoders may
also reconstruct unseen anomalies with small error
\citep{dagmm, memae, zaheer2020old}. Existing approaches
regularize the learned representation through latent compactness
\citep{compactAnomaly}, sparse autoencoders \citep{nanaumi2026spg},
denoising objectives \citep{denoisingAnomaly}, or masked reconstruction
\citep{masked1, masked2, masked3}, thereby encouraging the model to capture
stable structure in nominal data rather than learning an unrestricted
identity mapping. Architectural approaches instead constrain the reconstruction process using memory banks of nominal prototypes \citep{memae, memory2}, latent density models \citep{dagmm}, or adversarial and one class components \citep{ocgan}. 

Despite this broad effort, the mechanisms that cause autoencoder based anomaly detection to fail, and the conditions under which it can be reliable, remain insufficiently understood. \citet{bouman2025} show that autoencoders can accurately reconstruct anomalies through interpolation between nominal groups and extrapolation beyond the observed nominal data. We interpret both behaviors as manifestations of an excessively broad learned range and characterize the limiting case as join blindness; we further identify the opposite failure of meet preference, in which nominal directions specific to individual components are lost. Self supervised methods such as CutPaste, DRAEM, NSA, and CSI learn discriminative anomaly criteria from synthetic anomalies or distribution shifted views \citep{li2021cutpaste, zavrtanik2021draem, schluter2022natural, tack2020csi}.
Reconstruction based perturbation based methods train autoencoders using noisy, synthetic, or adversarially perturbed nominal samples \citep{denoisingAnomaly, astrid2021learning, astrid2024exploiting, salehi2021arae}. Methods that reconstruct the clean sample from its perturbation correspond to the Pull component of our formulation, whereas Dynamic Push and Pull additionally imposes an explicit separation margin from the perturbed input to target join blindness. Nested Manifold Carving complements this objective by recursively targeting off union directions retained in latent representations.
\section{PoS-Guided Learning Strategies}
\label{sec:strategies}

\subsection{Dynamic push-pull learning}
\label{sec:dynamic-push-pull}

Let \(d\) be a distance function chosen for the signal modality. For a nominal
observation \(s_n\), we generate a perturbed observation by adding a controlled
perturbation \(e\). Using \([z]_+:=\max\{z,0\}\), define
\begin{equation}
\begin{gathered}
    \widetilde{s}_n:=s_n+e,
    \qquad
    \widehat{s}_n:=P_\theta(\widetilde{s}_n),
    \qquad
    \delta_e:=d(\widetilde{s}_n,s_n),\\
    \cL_{\mathrm{PP}}(\theta;s_n,e)
    :=
    d(\widehat{s}_n,s_n)
    +
    \left[
        \alpha\delta_e
        -
        d(\widehat{s}_n,\widetilde{s}_n)
    \right]_+,
    \qquad
    \alpha>0.
\end{gathered}
\label{eq:push-pull-loss}
\end{equation}
The first term pulls the reconstruction toward the nominal observation, while
the hinge term pushes it away from the perturbed input by the adaptive margin
\(\alpha\delta_e\). In the general noisy setting, \(s_n\) need only be
concentrated near the nominal union. Exact membership is imposed only in the
ideal result below. The experiments do not optimize the push--pull loss in isolation. They retain
the ordinary reconstruction objective on unperturbed nominal observations,
defined by
\begin{equation}
    \cL_{\mathrm{AE}}(\theta)
    :=
    \mathbb E_{s_n}
    \left[
        d\!\left(P_\theta(s_n),s_n\right)
    \right].
    \label{eq:nominal-reconstruction-loss}
\end{equation}
and optimize the complete objective
\begin{equation}
\begin{aligned}
    \cL_{\mathrm{train}}(\theta)
    &:=
    \cL_{\mathrm{AE}}(\theta)
    +
    \lambda_{\mathrm{PP}}\,
    \mathbb{E}_{s_n,e}
    \left[
        \cL_{\mathrm{PP}}(\theta;s_n,e)
    \right] \\
    &\quad
    +
    \lambda_{\mathrm{sp}}\,
    \Omega_{\mathrm{sp}}(\theta),
    \qquad
    \lambda_{\mathrm{PP}}>0,
    \qquad
    \lambda_{\mathrm{sp}}\geq 0,
\end{aligned}
\label{eq:complete-training-loss}
\end{equation}
where, \(\Omega_{\mathrm{sp}}\) is an optional sparsity penalty, with
\(\lambda_{\mathrm{sp}}=0\) when sparsity is not used. Thus,
\(\cL_{\mathrm{AE}}\) anchors the reconstruction of unperturbed nominal
observations, while the push--pull term governs the response to their perturbed
counterparts. The pairwise results below characterize the push-pull component;
the operating point of the shared network is also shaped by the nominal
reconstruction and optional sparsity terms in
Eq.~(\ref{eq:complete-training-loss}).

\begin{figure}[!h]
  \centering
\includegraphics[width=0.38\columnwidth]{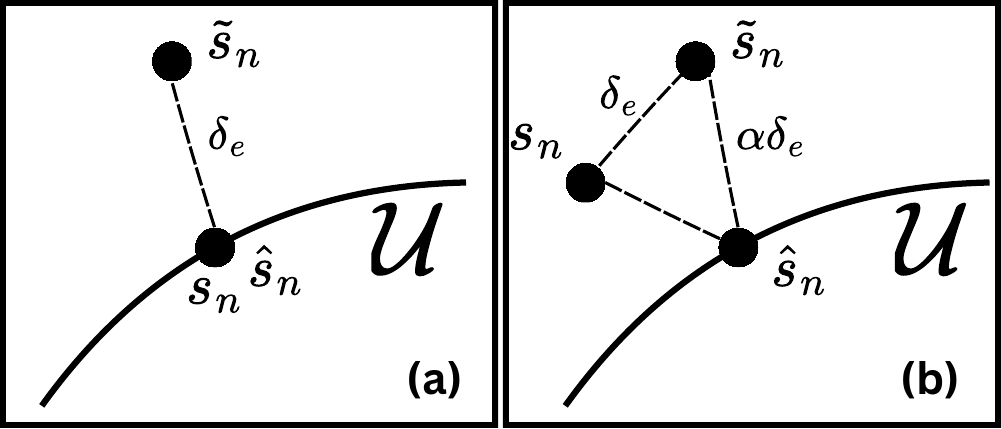}
  \caption{Ideal and realizable regimes of
  Lemma~\ref{lem:pushpull}.
  (a) If \(s_n\in\cU\) and \(e\) is normal to \(\cU\), exact Pull gives
  \(\widehat{s}_n=s_n\). For \(\alpha\leq1\), Push is inactive at this
  optimum but supplies a gradient whenever its margin is violated.
  (b) If \(\dist_d(s_n,\cU)=\eta(s_n)>0\), margins \(\alpha>1\) become
  admissible and impose a constraint unavailable to Pull alone. The
  triangle inequality gives
  \(\alpha\delta_e\leq\delta_e+\eta(s_n)\), recovering the zero-push-error
  case of Eq.~(\ref{eq:alpha-compatibility}).}
  \label{fig:push-pull-geometry}
\end{figure}
\begin{lemma}[Push and pull guarantees]
\label{lem:pushpull}
Let \(d\) be a metric. Write Eq.~(\ref{eq:push-pull-loss}) as
\(\cL_{\mathrm{PP}}=\cL_{\mathrm{pull}}+\cL_{\mathrm{push}}\), and define
\(T:=\operatorname{range}(P_\theta)\),
\(\rho_{\theta,d}(x):=d(x,P_\theta(x))\), and
\(\dist_d(x,T):=\inf_{t\in T}d(x,t)\).
Suppose
\(\eta(s_n):=\dist_d(s_n,\cU)\leq\varepsilon\), where
\(\eta(s_n)=0\) when \(s_n\in\cU\).
If
\(\cL_{\mathrm{pull}}\leq\varepsilon_{\mathrm{pull}}\) and
\(\cL_{\mathrm{push}}\leq\varepsilon_{\mathrm{push}}\), then
\(\dist_d(s_n,T)\leq\varepsilon_{\mathrm{pull}}\) and
\(\rho_{\theta,d}(\widetilde{s}_n)
\geq\alpha\delta_e-\varepsilon_{\mathrm{push}}\).
Under P2, these bounds jointly give
\begin{equation}
\dist_d(\widetilde{s}_n,T)\geq
\max\!\left\{
\delta_e-\varepsilon_{\mathrm{pull}},
\alpha\delta_e-\varepsilon_{\mathrm{push}},
0
\right\}.
\label{eq:push-pull-carving}
\end{equation}
If, in addition, \(T\) provides exact nominal coverage
\(\cU\subseteq T\), the triangle inequality gives
\(\dist_d(\widetilde{s}_n,T)\leq\delta_e+\eta(s_n)\).
Consequently, for \(\delta_e>0\),
\begin{equation}
\alpha\leq
1+\frac{\eta(s_n)+\varepsilon_{\mathrm{push}}}{\delta_e}.
\label{eq:alpha-compatibility}
\end{equation}
At zero push error, this bound permits \(\alpha>1\) precisely when
\(\eta(s_n)>0\).
\end{lemma}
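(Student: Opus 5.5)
The argument is a chain of triangle inequalities for the metric \(d\), taken one claim at a time. We use only the definitions \(\cL_{\mathrm{pull}}=d(\widehat{s}_n,s_n)\) and \(\cL_{\mathrm{push}}=[\alpha\delta_e-d(\widehat{s}_n,\widetilde{s}_n)]_+\), with \(\widehat{s}_n=P_\theta(\widetilde{s}_n)\in T\).

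\textbf{Step 1: the pull bound.} Since \(\widehat{s}_n\in T\),
\[
\dist_d(s_n,T)\le d(s_n,\widehat{s}_n)=\cL_{\mathrm{pull}}\le\varepsilon_{\mathrm{pull}}.
\]

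\textbf{Step 2: the push bound.} Because \([z]_+\ge z\), the hypothesis \(\cL_{\mathrm{push}}\le\varepsilon_{\mathrm{push}}\) gives \(\alpha\delta_e-d(\widehat{s}_n,\widetilde{s}_n)\le\varepsilon_{\mathrm{push}}\). Since \(d(\widehat{s}_n,\widetilde{s}_n)=\rho_{\theta,d}(\widetilde{s}_n)\), this is exactly \(\rho_{\theta,d}(\widetilde{s}_n)\ge\alpha\delta_e-\varepsilon_{\mathrm{push}}\).

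\textbf{Step 3: Eq.~(\ref{eq:push-pull-carving}).} This is the only place P2 enters. I read P2 as saying that \(P_\theta\) is a nearest-point projection onto its range, so \(\dist_d(\widetilde{s}_n,T)=d(\widetilde{s}_n,\widehat{s}_n)\). This identification is the main point to state carefully. Without it we only have \(\dist_d\le\rho\), which points the wrong way for a lower bound. With it, the push term of the maximum follows directly from Step 2. For the pull term, the triangle inequality gives
\[
\delta_e=d(\widetilde{s}_n,s_n)\le d(\widetilde{s}_n,\widehat{s}_n)+d(\widehat{s}_n,s_n)\le\dist_d(\widetilde{s}_n,T)+\varepsilon_{\mathrm{pull}}.
\]
Finally, distances are nonnegative, which supplies the \(0\) in the maximum.

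\textbf{Step 4: upper bound under exact nominal coverage.} Assume \(\cU\subseteq T\). For any \(u\in\cU\subseteq T\),
\[
\dist_d(\widetilde{s}_n,T)\le d(\widetilde{s}_n,u)\le d(\widetilde{s}_n,s_n)+d(s_n,u).
\]
Taking the infimum over \(u\in\cU\) yields \(\dist_d(\widetilde{s}_n,T)\le\delta_e+\eta(s_n)\).

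\textbf{Step 5: Eq.~(\ref{eq:alpha-compatibility}).} Chain the push lower bound from Step 3 with the upper bound from Step 4:
\[
\alpha\delta_e-\varepsilon_{\mathrm{push}}\le\delta_e+\eta(s_n).
\]
Dividing by \(\delta_e>0\) gives Eq.~(\ref{eq:alpha-compatibility}).

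\textbf{Step 6: the closing remark.} Set \(\varepsilon_{\mathrm{push}}=0\). The bound becomes \(\alpha\le1+\eta(s_n)/\delta_e\), whose right-hand side exceeds \(1\) if and only if \(\eta(s_n)>0\). Two readings are possible:
\begin{itemize}
\item If ``precisely when'' means the bound is compatible with some \(\alpha>1\), the equivalence is immediate.
\item If it means such an \(\alpha\) is actually attainable, the converse needs more. One must exhibit a configuration, for example \(\widetilde{s}_n\)'s nearest point of \(T\) lying on the far side of \(\cU\) from \(s_n\), in which the bound is tight. I would present this as the admissibility (necessary-condition) reading, matching Figure~\ref{fig:push-pull-geometry}(b).
\end{itemize}
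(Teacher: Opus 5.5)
Your proposal is correct and follows the paper's proof step for step. The pull bound comes from \(\widehat{s}_n\in T\) and the push bound from \([z]_+\ge z\); the carving bound uses the P2 identification \(\rho_{\theta,d}(\widetilde{s}_n)=\dist_d(\widetilde{s}_n,T)\) plus one triangle inequality; and Eq.~(\ref{eq:alpha-compatibility}) follows by chaining the coverage upper bound \(\delta_e+\eta(s_n)\) with the push lower bound. The paper treats the closing remark only in your first, admissibility sense and gives it no separate argument, so your Step~6 discussion of an attainability reading is extra but harmless.
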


The proof is given in Appendix~\ref{app:push-pull-proof}. The guarantee is local to the sampled perturbations. Suppressing join
blindness throughout \(\cV\setminus\cU\) therefore requires the perturbation
sampler to probe this region sufficiently; otherwise, only the sampled
neighborhoods are certifiably carved. Perturbations with components in
\(S_i^\perp\cap\cV\) directly probe directions inside the join but outside the
active nominal component.

\subsection{Nested manifold carving}
\label{sec:nested-carving}

Following P4, we apply push--pull recursively in latent space. At level
\(\ell\), the previously learned encoder--decoder pairs are frozen, their
encoder stack maps \(s_n\) to \(x_n^{(\ell-1)}\), and a new pair
\((E_\ell,D_\ell)\) is trained on this representation using the push--pull
objective. The resulting reconstruction is then decoded through all preceding
decoders. Each level therefore carves the representation inherited from the
level above, while the pull term preserves nominal information and the push
term prevents the newly added pair from simply reproducing perturbed latent
inputs. The latent width is chosen sufficiently large to retain
component-specific nominal directions and avoid premature meet preference. Let \(\overline T^{(\ell)}\) denote the range of the complete reconstruction
map through level \(\ell\), decoded back to the input space. Under the
compatibility, capacity, and exact-optimization conditions stated in
Appendix~\ref{app:nested-carving}, the construction yields
\begin{equation}
\overline T^{(1)}
\supseteq\overline T^{(2)}
\supseteq\cdots
\supseteq\overline T^{(L)}
\supseteq\cU.
\label{eq:nested-ranges-main}
\end{equation}
Thus, successive levels remove probed off-union regions without sacrificing
nominal coverage. Eq.~(\ref{eq:nested-ranges-main}) describes the ideal
construction target; its capacity, nonlinearity, and perturbation-coverage
conditions are detailed in Appendix~\ref{app:nested-carving}, with the linear
obstruction given in Appendix~\ref{app:linear-noncompact}.

\subsection{PoS-predicted representation dynamics}
\label{sec:geometric-predictions}

PoS predicts how latent geometry changes as a network approaches the ideal
behavior defined by P1 to P4. Let \(x\) and \(\widehat{x}\) denote,
respectively, less compact and more compact representations of the same
samples, where compactness refers to agreement with the PoS postulates rather
than merely smaller variance. From \(x\) to \(\widehat{x}\), P1 and P4 predict increasing relative
separation of nominal components, meaning that their radii contract more than
the distances between their centers. P2 to P4 further predict that off-union
anomalies move toward nominal components. Consequently, although nominal
structure becomes more compact and separated, nominal and anomalous
representations can become less distinguishable. A more compact pretrained
network can therefore provide worse features for a downstream classifier that
must distinguish anomalies from nominal samples. We test these predictions in Section~\ref{sec:representation-dynamics}.
The exact metrics and their computation are provided in
Appendix~\ref{app:Metrics}.

\section{Experiments}
\label{sec:experiments}
\begin{figure*}[t]
    \centering    \includegraphics[width=\textwidth]{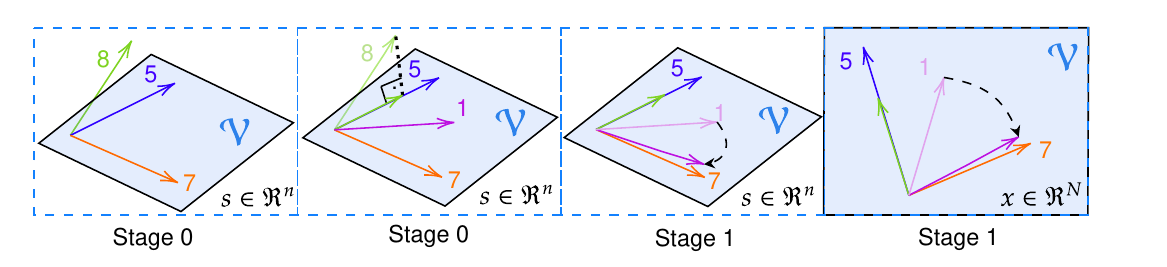}
    \caption{\textbf{Geometric interpretation of the two-stage MNIST
    experiment.}
    At Stage 0, the nominal digits \(4\), \(5\), and \(7\) define a compact
    union \(\cU\) within a coarser learned range \(\cV\). An anomaly such as
    \(8\), lying outside \(\cV\), produces a large Stage-0 reconstruction
    error. By contrast, digit \(1\) can lie inside \(\cV\setminus\cU\) and
    therefore remain undetected at Stage 0. Stage 1 operates in the frozen
    Stage-0 latent space and further carves the retained range, projecting
    such off-union representations toward a nominal component. Anomalous
    information already removed by Stage 0 is unavailable to Stage 1.}
    \label{fig:mnist-stage-geometry}
\end{figure*}

\subsection{A controlled study of the predicted failure modes}
\label{sec:mnist-controlled}

We first use MNIST to examine the complete geometric picture in a controlled
setting. Following~\cite{bouman2025}, digits \(4\), \(5\),
and \(7\) are treated as nominal, while the remaining digits are anomalies.
This setting is particularly informative because the nominal distribution is
multimodal and different anomalous digits interact differently with its learned
geometry.

\noindent\textbf{Baseline and controlled modifications:}
We reproduce the convolutional autoencoder of~\cite{bouman2025} for fair comparison, including its two-dimensional bottleneck and
evaluation protocol. We then retain the same autoencoder backbone but increase
the Stage-0 latent dimension from \(2\) to \(9\), reducing the capacity
restriction that can merge component-specific nominal directions. Starting
from this higher-capacity model, we evaluate the individual and combined
effects of nested carving, dynamic push-pull, masking, and sparsity. For the
nested models, a second encoder--decoder pair is trained in the frozen Stage-0
latent space, whose dimension is expanded to \(60\) before carving. This
controlled construction separates the effect of latent capacity from the
effects of the proposed learning strategies.

\noindent\textbf{Why different levels detect different anomalies:}
Figure~\ref{fig:mnist-stage-geometry} summarizes the mechanism investigated in
this experiment. At Stage 0, anomalies lying outside the learned coarse range
produce large reconstruction errors and are directly detectable. However, an
anomalous digit such as \(1\) may lie close to the range spanning the nominal
digits \(4\), \(5\), and \(7\). It can therefore be reconstructed with a small
Stage-0 error despite not belonging to any nominal component. This is the predicted
join-blindness mechanism. Nested carving addresses this ambiguity in the latent space. After the Stage-0
encoder and decoder are frozen, the second stage is trained to preserve the
latent representations of nominal digits while rejecting their controlled
perturbations. It can therefore remove off-union latent regions that Stage 0
retained and assign a larger error to anomalies such as \(1\). The converse situation is equally important. Some anomalous inputs are already
mapped by Stage 0 onto, or very close to, one of the nominal components. Their
anomalous information is then removed before entering the next latent stage,
so Stage 1 cannot be expected to recover it. These anomalies remain detectable
through the Stage-0 reconstruction error. Stage 0 and Stage 1 are therefore
complementary: the former detects departures removed by the first projection,
whereas the latter detects off-union structure that survives within the
Stage-0 representation.

\noindent\textbf{DAE versus AE-PP:}
Setting \(\alpha=0\) eliminates the Push hinge and reduces Push and Pull to a denoising autoencoder, while \(\alpha>0\) imposes the additional separation constraint characterized by Lemma~1 and Figure~\ref{fig:push-pull-geometry}. DAE provides only a marginal improvement over AE and gains nothing from nesting, whereas AE-PP improves further at Stage 1, indicating that its gain cannot be explained by denoising alone; complete results are reported in Section~\ref{app:mnist-ablation} and Table~\ref{tab:dae-push-pull-ablation}.

\noindent\textbf{Overall ablation results:}
Table~\ref{tab:mnist-main} summarizes the central configurations, with complete results in Tables~\ref{tab:stage-metrics} and~\ref{tab:full-mnist-ablation}. Increasing Stage 0 capacity improves the reproduced baseline, but adding a reconstruction only nested module provides no further gain because it admits an identity solution. Push and Pull supplies the separation signal required for latent carving, improving both the single stage and nested models, while masking and sparsity provide complementary gains in the strongest configuration. Thus, the improvement arises from explicitly carving the learned range rather than from additional capacity, depth, or perturbation alone.

\begin{table}[t]
\centering
\caption{\textbf{Controlled MNIST study with nominal digits \(4,5,7\).}
The baseline reproduces~\cite{bouman2025} with a
two-dimensional bottleneck. The remaining models use the same backbone with
Stage-0 latent dimension \(9\). Nested configurations report the complete
Stage-1 reconstruction score. PP denotes
push--pull. Complete results are provided in
Tables~\ref{tab:stage-metrics} and~\ref{tab:full-mnist-ablation}.}
\label{tab:mnist-main}
\small
\setlength{\tabcolsep}{4pt}
\begin{tabular}{@{}lccc@{}}
\toprule
Configuration & F1 & ROC-AUC & PR-AUC \\
\midrule
Bouman-Heskes baseline
    & 0.693 & 0.866 & 0.936 \\
Higher-capacity AE
    & 0.780 & 0.893 & 0.962 \\
AE + mask
    & 0.766 & 0.881 & 0.957 \\
AE + PP
    & 0.792 & 0.897 & 0.963 \\
AE + nested + PP
    & 0.798 & 0.909 & 0.967 \\
AE + nested + PP + sparsity
    & 0.808 & 0.928 & 0.972 \\
AE + nested + PP + mask + sparsity
    & \textbf{0.815} & \textbf{0.929} & \textbf{0.973} \\
\bottomrule
\end{tabular}
\end{table}

\noindent\textbf{Per digit geometric analysis:}
The image and latent reconstruction errors provide direct evidence for the geometry predicted in Figure~\ref{fig:mnist-stage-geometry}. Digit \(8\) produces a large image error but a small Stage 1 latent error, supporting the prediction that its anomalous direction lies largely outside the coarse range \(\mathcal{V}\), is removed at Stage 0, and leaves a latent representation close to nominal digit \(5\). Conversely, digit \(1\) produces a small image error but a large latent error, supporting the prediction that it lies within \(\mathcal{V}\setminus\mathcal{U}\) and is therefore retained by Stage 0 but differentiated through nested carving. Its Stage 1 reconstructions also become predominantly \(7\)-like, supporting the predicted representation dynamics that increasing compactness moves off-union anomalies toward a nearby nominal component. The complete analysis, flag decomposition, and reconstructed images are provided in Section~\ref{app:latent-score-analysis}, Table~\ref{tab:flag-decomposition}, and Figure~\ref{fig:stages}.

\subsection{Testing the predicted representation dynamics}
\label{sec:representation-dynamics}

We compare the less compact representation \(x\) with the more compact
representation \(\widehat{x}\) across four normal digit sets, two masking
choices, and two sparsity choices. This gives 16 configurations for each
training objective and 32 configurations in total. As summarized in
Table~\ref{tab:geometric-predictions-main}, all 16 configurations trained with
Push and Pull satisfy both PoS predictions. Across all 16 Push and Pull configurations, nominal relative separation increases and all four anomaly proximity indicators decrease. Ordinary AE training increases relative separation in only \(7/16\) configurations and satisfies all four proximity predictions in \(14/16\). Thus, Push and Pull produces the predicted PoS dynamics consistently across every tested normal set, masking choice, and sparsity choice. Complete results and metric
definitions are provided in Appendix~\ref{app:Metrics}.

\begin{table}[ht]
\centering
\small
\caption{Observed PoS representation dynamics. Each objective includes 16
configurations obtained from four normal digit sets, two masking choices, and
two sparsity choices.}
\label{tab:geometric-predictions-main}
\begin{tabular}{p{0.43\linewidth}cc}
\toprule
\textbf{Predicted change} & \textbf{AE} & \textbf{AE + PP} \\
\midrule
Increased nominal relative separation,
\(\Delta S>0\)
& \(7/16\) & \(\mathbf{16/16}\) \\

All four anomaly proximity indicators decrease
& \(14/16\) & \(\mathbf{16/16}\) \\
\bottomrule
\end{tabular}
\end{table}

\input{tables/arrhythmia.tex}
\input{tables/cifar_summary}
\subsection{Standard One-Class Anomaly Detection}
\label{sec:standard-oneclass}

We evaluate one-class anomaly detection on MNIST, Fashion-MNIST, CIFAR-10, CIFAR-100, and Arrhythmia. Training uses only nominal samples. Following the validation protocol of
TOLL~\citep{toll}, checkpoints are selected on a validation set
containing both nominal and anomalous samples. For a controlled comparison, all methods share the Stage~0 autoencoder architecture and optimization settings of TOLL. Our AE baseline is the reconstruction-only model obtained by setting the TOLL compactness coefficient to \(\beta=0\); it therefore uses the same backbone but not the TOLL compactness objective. AE-PP augments this baseline with Dynamic Push and Pull, whereas AE-PoS additionally applies Nested Manifold Carving to the frozen Stage~0 representation.

\begin{wrapfigure}{r}{0.48\columnwidth}
\vspace{-0.6\baselineskip}
\centering
\includegraphics[
  width=\linewidth,
  trim=0 3.5mm 0 0,
  clip
]{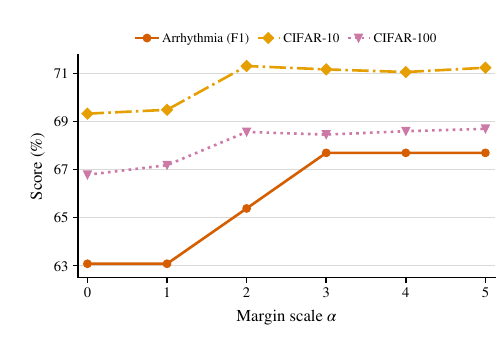}
\caption{Effect of the margin scale \(\alpha\) on Stage-0 AE-PP.
Arrhythmia uses F1; CIFAR-10 and CIFAR-100 use AUC-ROC.}
\label{fig:alpha-sensitivity}
\vspace{-0.8\baselineskip}
\end{wrapfigure}

As shown in Tables~\ref{tab:oneclass-arrhythmia}
and~\ref{tab:oneclass-cifar10-summary}, AE-PoS reaches \(70.77\%\) F1 on Arrhythmia,
compared with \(60.00\%\) for the strongest baseline, and \(71.73\%\)
AUC-ROC on CIFAR-10, compared with \(69.6\%\). Nested Manifold Carving
improves AE-PP from \(67.69\%\) to \(70.77\%\) on Arrhythmia and from
\(71.30\%\) to \(71.73\%\) on CIFAR-10, with gains across all ten CIFAR-10
nominal classes. Together with the controlled multi-component MNIST experiment
in Section~\ref{sec:mnist-controlled}, these results suggest that Stage~1
provides further gains when the Stage-0 representation retains residual multiple components, while largely preserving performance
when Stage~0 is already compact, as in the controlled single-component MNIST
setting and Fashion-MNIST. Complete results for MNIST, Fashion-MNIST, and
CIFAR-100, together with experimental details, baseline definitions,
dataset-specific training strategies, and the AE/AE-PP/AE-PoS ablation, are
provided in Appendix~\ref{apd:oneclass}. The CIFAR-100 results further expose
a limitation of the current reconstruction-error anomaly score, which remains
behind transformation-based methods on several nominal superclasses. Figure~\ref{fig:alpha-sensitivity} shows the predicted saturation: once the Push hinge
remains active for most training pairs, its gradient no longer depends on
\(\alpha\), and performance plateaus at \(\alpha=2\) or \(3\).

\subsection{Detecting Unseen Image Degradations}

\input{tables/hq-degradation-summary}

Our objective is to learn a high-fidelity autoencoder representation of clean, high-quality images that preserves their fine-scale residual components while remaining sensitive to diverse unseen degradations. This contrasts with the VAE, whose probabilistic latent regularization reduces nominal-image fidelity and can preferentially represent low-frequency content, allowing strongly blurred images to be reconstructed more accurately than clean images and thereby reversing the desired anomaly-score ordering. AE-PP and AE-PoS are designed to preserve nominal detail while pushing degraded views away from the learned representation. Table~\ref{tab:hq-degradation-summary} summarizes detection performance across additive noise, blur, rain, and underwater degradations, while Appendix~\ref{apd:hq-full} provides the complete per-setting results, dataset construction, training strategy, clean-reference protocol, and reconstruction-fidelity analysis. As shown in Table~\ref{tab:blur_psnr_main}, the reconstruction PSNR, computed between each input \(s\) and its reconstruction \(\hat{s}\), increases for the VAE from \(27.59\) dB on clean images to \(37.01\) dB under the strongest blur. Thus, blurred anomalies receive lower reconstruction errors than clean inputs, reversing the expected score ordering and explaining the VAE's ROC-AUC below \(0.5\). Complete PSNR results are reported in Table~\ref{tab:reconstruction_psnr_full}. Training perturbations are limited to parametric blur kernels, sensor noise, resampling, and JPEG compression. REDS~\citep{reds} and GoPro~\citep{gopro} therefore test transfer to realistic motion blur constructed through temporal averaging of high-frame-rate video; rain and underwater degradations are absent from training; and Kodak24 Gaussian noise provides an in-family control.

\begin{table}[t]
\centering
\caption{Reconstruction PSNR (dB) under the strongest REDS blur. Higher PSNR
indicates lower reconstruction error. VAE is the only model whose reconstruction
quality improves substantially under blur, reversing the expected anomaly-score
ordering.}
\label{tab:blur_psnr_main}

\small
\setlength{\tabcolsep}{5pt}
\renewcommand{\arraystretch}{1.0}

\begin{tabular}{@{}lrrrr@{}}
\toprule
& AE & VAE & AE-PP & AE-PoS \\
\midrule
Clean
& 63.04 & 27.59 & 43.42 & 43.07 \\
Blur 1.0
& 62.72 & 37.01 & 30.51 & 31.79 \\
\(\Delta\)
& \(-0.32\) & \(\mathbf{+9.42}\) & \(-12.91\) & \(-11.28\) \\
\bottomrule
\end{tabular}
\end{table}

\subsection{Transferable ECG Representations}
\input{tables/ecg_table}
Beyond representation-based anomaly detection, we evaluate nested carving as a general unsupervised representation-learning strategy. Clinical ECG labels require costly expert interpretation, whereas unlabeled recordings are abundant. We therefore learn structure from unlabeled ECGs and train only a linear diagnostic classifier on the frozen representation. Table~\ref{tab:ecg-nested} shows that nested carving improves performance across all five classification tasks; experimental and implementation details are provided in Appendix~\ref{apd:ecg}.

\section{Conclusion and Limitations}

We characterized join blindness and meet preference as opposing failures of reconstruction-based unsupervised learning, and introduced Dynamic Push and Pull and Nested Manifold Carving to better approximate the compact nominal union prescribed by PoS, improving both anomaly detection and transferable representation learning. More importantly, the results demonstrate a predict-then-measure approach: Figure~\ref{fig:mnist-stage-geometry} predicts that anomalies outside the coarse range are removed at Stage~0, those in \(\mathcal{V}\setminus\mathcal{U}\) survive and are exposed by latent carving, and carving moves them toward a nominal component; accordingly, digit~8 shows large reconstruction error but little latent change, whereas digit~1 is detected mainly by Stage~1 and becomes predominantly 7-like. At the population level, PoS predicts how geometry evolves as compactness increases: nominal components disentangle, with their relative distance (inter-centre distance normalized by component radius) increasing, while off-union samples move toward the nominal union. Both changes occur in all 16 Push and Pull configurations. PoS further predicts that increased compactness may weaken downstream discrimination between such samples and nominal data. These representation dynamics offer new design targets for unsupervised learning. Current limitations include metric dependence, manual depth selection, non-unique projections of structured anomalies, and validation of the dynamics only in a controlled multi-component setting, as discussed in Appendix~\ref{apd:limitations}.





\section*{Statements}

\noindent\textbf{AI use:}
We used large language model based generative AI tools to review and polish the
manuscript, including its grammar, clarity, organization, and consistency. All
AI-assisted revisions were critically reviewed and edited by the authors, who
take full responsibility for the final content.

\medskip
\noindent\textbf{Ethics:}
This work uses existing benchmark datasets and involves no new data collection
or interaction with human participants. The clinical ECG datasets are used in
de-identified form under their original licenses and access conditions. The ECG
experiments evaluate representation learning and do not establish clinical
diagnostic validity; clinical deployment would require independent validation
and appropriate regulatory and ethical review.

\medskip
\noindent\textbf{Reproducibility:}
The appendices provide the mathematical assumptions and proofs, architectures,
training objectives, optimization settings, preprocessing steps, perturbation
schemes, and evaluation protocols. Upon acceptance of the conference version,
we will release the source code, pretrained checkpoints, configuration files,
test-set selections, and scripts needed to reproduce the reported tables and
figures. Where dataset licenses prevent redistribution, we will provide file
lists and scripts for reconstructing the evaluation subsets.

\clearpage
\bibliographystyle{iclr2027_conference}
\bibliography{iclr2027_conference}

\clearpage
\appendix
\section*{Appendix}

\input{appendices/Definitions_and_proofs}

\input{appendices/metrics}

\input{appendices/one_class_ablation}

\input{appendices/ECG_Appendix}

\input{appendices/ExtendedLimitations}

\end{document}

%% file: math_commands.tex
\usepackage{amsmath,amsfonts,bm,tikz}

\def\eqref#1{equation~\ref{#1}}

\def\1{\bm{1}}

\DeclareMathAlphabet{\mathsfit}{\encodingdefault}{\sfdefault}{m}{sl}
\SetMathAlphabet{\mathsfit}{bold}{\encodingdefault}{\sfdefault}{bx}{n}

\newcommand{\R}{\mathbb{R}}



%% file: tables/arrhythmia.tex
\begin{table}[h]
\centering
\caption{One-class anomaly detection on
Arrhythmia~\citep{arrhythmia_odds}, measured by F1 score (\%).
AE-PP and AE-PoS results are reported as mean \(\pm\) standard deviation
over 10 seeds. Bold indicates the highest mean.}
\label{tab:oneclass-arrhythmia}

\begingroup
\footnotesize
\setlength{\tabcolsep}{2.5pt}
\renewcommand{\arraystretch}{1.05}

\resizebox{\columnwidth}{!}{%
\begin{tabular}{@{}*{15}{c}@{}}
\toprule
&
\multicolumn{12}{c}{Baselines} &
\multicolumn{2}{c}{Ours} \\
\cmidrule(lr){2-13}
\cmidrule(lr){14-15}
Dataset
& IF
& OC-SVM
& DSEBM-e
& AnoGAN
& DAGMM
& ALAD
& RCGAN
& DSVDD
& GOAD
& MEMAE
& MEMGAN
& TOLL
& AE-PP
& AE-PoS \\
\midrule
Arrhythmia
& 53.03
& 45.18
& 46.01
& 42.42
& 49.83
& 51.52
& 54.14
& 34.79
& 52.00
& 51.13
& 55.72
& 60.00
& \(67.69{\pm}5.75\)
& \(\mathbf{70.77{\pm}5.75}\) \\
\bottomrule
\end{tabular}%
}

\endgroup
\end{table}

%% file: tables/cifar_summary.tex
\begin{table}[t]
\centering
\caption{One-class anomaly detection on CIFAR-10~\citep{cifar}, measured by
AUC-ROC (\%). Values are averaged over the ten nominal-class tasks. Full
class-wise results, including uncertainty over 10 seeds for AE-PP and AE-PoS,
are reported in Table~\ref{tab:oneclass-cifar10}.}
\label{tab:oneclass-cifar10-summary}

\begingroup
\footnotesize
\setlength{\tabcolsep}{2.5pt}
\renewcommand{\arraystretch}{1.05}

\resizebox{\columnwidth}{!}{%
\begin{tabular}{@{}l*{13}{c}@{}}
\toprule
&
\multicolumn{11}{c}{Baselines} &
\multicolumn{2}{c}{Ours} \\
\cmidrule(lr){2-12}
\cmidrule(lr){13-14}
&
OCGAN
& VAE
& DSVDD
& DSEBM
& DAGMM
& IF
& AnoGAN
& ALAD
& MEMAE
& MEMGAN
& TOLL
& AE-PP
& AE-PoS \\
\midrule
Average
& 65.6
& 58.3
& 64.8
& 60.4
& 54.4
& 55.5
& 61.8
& 59.3
& 58.5
& 65.3
& 69.6
& 71.30
& \textbf{71.73} \\
\bottomrule
\end{tabular}%
}

\endgroup
\end{table}

%% file: tables/hq-degradation-summary.tex
\begin{table}[t]
\centering
\caption{Detection of unseen degradations in high-quality images.
Entries are unweighted averages within each degradation family, reported as
AUC-ROC and PR-AUC. Parentheses give the number of evaluation settings; the
overall row averages all 16 settings. Complete results are provided in
Table~\ref{tab:four-way}. Bold indicates the best result in each metric.}
\label{tab:hq-degradation-summary}

\begingroup
\scriptsize
\setlength{\tabcolsep}{2pt}
\renewcommand{\arraystretch}{1.02}

\begin{tabular*}{\columnwidth}{
  @{\extracolsep{\fill}}l*{8}{c}@{}
}
\toprule
&
\multicolumn{2}{c}{AE (0.83M)} &
\multicolumn{2}{c}{VAE (83.7M)} &
\multicolumn{2}{c}{AE-PP (0.83M)} &
\multicolumn{2}{c}{AE-PoS (1.67M)} \\
\cmidrule(lr){2-3}
\cmidrule(lr){4-5}
\cmidrule(lr){6-7}
\cmidrule(lr){8-9}
Family
& ROC & PR
& ROC & PR
& ROC & PR
& ROC & PR \\
\midrule

Additive noise (1)
& \textbf{1.000} & \textbf{1.000}
& 0.998 & 0.998
& \textbf{1.000} & \textbf{1.000}
& \textbf{1.000} & \textbf{1.000} \\

Blur (7)
& 0.568 & 0.536
& 0.133 & 0.334
& 0.889 & 0.881
& \textbf{0.903} & \textbf{0.896} \\

Rain (5)
& 0.440 & 0.462
& 0.791 & 0.687
& 0.946 & 0.931
& \textbf{0.958} & \textbf{0.938} \\

Underwater (3)
& 0.295 & 0.378
& 0.449 & 0.439
& 0.967 & 0.962
& \textbf{0.985} & \textbf{0.981} \\

\midrule
Overall (16)
& 0.504 & 0.512
& 0.452 & 0.506
& 0.929 & 0.919
& \textbf{0.941} & \textbf{0.932} \\

\bottomrule
\end{tabular*}

\endgroup
\end{table}

%% file: tables/ecg_table.tex
\begin{table}[!h]
\centering
\caption{Frozen-encoder AUROC. Stage~0 and Stage~1 denote our
depth-controlled models. Bold indicates the best result in each column.}
\label{tab:ecg-nested}

\begingroup
\small
\setlength{\tabcolsep}{5pt}
\renewcommand{\arraystretch}{1.05}

\begin{tabular}{@{}lccccc@{}}
\toprule
& \multicolumn{4}{c}{PTB-XL} & ICBEB \\
\cmidrule(lr){2-5}
Encoder & Super & Sub & Form & Rhythm & CPSC2018 \\
\midrule
ECG-CPC
& 0.8630 & 0.8860 & 0.8730 & 0.9380 & 0.9020 \\
ECG-FM
& 0.8730 & 0.8660 & 0.7820 & 0.8650 & 0.9060 \\
MERL
& 0.8870 & 0.8950 & 0.8640 & 0.8770 & 0.9140 \\
ST-MEM
& 0.8954 & 0.8935 & 0.8548 & 0.9463 & 0.9559 \\
\midrule
Stage 0
& 0.8969 & 0.8917 & 0.8523 & 0.9449 & 0.9530 \\
Stage 1
& \textbf{0.9136}
& \textbf{0.9277}
& \textbf{0.8825}
& \textbf{0.9482}
& \textbf{0.9623} \\
\bottomrule
\end{tabular}

\endgroup
\end{table}

%% file: appendices/Definitions_and_proofs.tex
\section{Meet--Union--Join Geometry: Definitions and Proofs}
\label{app:union-interval}

This appendix formalizes the tangent-space model used in the main text. In
each local chart, the component submanifolds are represented by tangent
subspaces in common ambient coordinates, where the meet, union, and join
relations hold pointwise. Applied throughout the tangent bundle, these
fiberwise relations provide a global characterization of the nonlinear
geometry without assuming that the manifolds or networks are globally linear.
Whenever \(\dist(s,T)\) is interpreted as a reconstruction score, we assume the
ideal nearest-point projection prescribed by P2. For a generic autoencoder, its
output need not be the nearest point in its learned range, so geometric distance
and reconstruction error are kept distinct.

\subsection{Axiomatic dependence of the results}

\input{tables/axiom-map}

Table~\ref{tab:axiom-result-map} makes the axiomatic dependence explicit.
A checkmark denotes a postulate used directly in the construction or result;
an open circle denotes a postulate needed to transfer a geometric statement to
the corresponding neural-network interpretation. Throughout the paper, every
use of a PoS postulate is stated explicitly; an unmarked step uses only the
displayed hypotheses and standard metric, set, or linear geometry. All definitions and results below apply to any input or latent representation
space. We suppress the level index throughout; at level \(\ell\), the symbols
\(S_i\), \(\cW\), \(\cU\), \(\cV\), and \(T\) are understood as their
level specific counterparts.

\subsection{Meet, union, and join}

Let $S_1,\ldots,S_M$ be linear subspaces of a common finite-dimensional
Euclidean space $\R^n$. In the manifold interpretation, they are tangent spaces
expressed in a common local coordinate system.

\begin{definition}[Meet, union, and join]
Define
\begin{equation}
    \cW:=\bigcap_{i=1}^{M}S_i,
    \qquad
    \cU:=\bigcup_{i=1}^{M}S_i,
    \qquad
    \cV:=\sum_{i=1}^{M}S_i.
    \label{eq:appendix-interval}
\end{equation}
Then $\cW\subseteq\cU\subseteq\cV$. The union $\cU$ is generally not a
subspace, whereas $\cW$ and $\cV$ are subspaces. This chain concerns the three
reference sets only and imposes no ordering on an arbitrary learned range $T$.
\end{definition}

\begin{definition}[Geometric and reconstruction scores]
For a nonempty set \(T\subseteq\mathbb{R}^n\), define the geometric score
\[
    a_{\mathrm{geo}}(s;T):=\dist(s,T)
    =\inf_{t\in T}\lVert s-t\rVert_2.
\]
For an autoencoder \(P_\theta\) with learned range
\(T_\theta:=\operatorname{range}(P_\theta)\), define its reconstruction score
\[
    a_{\mathrm{rec}}(s;\theta)
    :=\lVert s-P_\theta(s)\rVert_2.
\]
Since \(P_\theta(s)\in T_\theta\),
\[
    a_{\mathrm{geo}}(s;T_\theta)
    \leq
    a_{\mathrm{rec}}(s;\theta).
\]
Equality holds if and only if \(P_\theta(s)\) is a nearest point to \(s\) in
\(T_\theta\). Under P2, equality therefore holds wherever the learned nonlinear
orthogonal projector selects a nearest point in its range. Such a nearest point
need not be unique; conditions ensuring uniqueness are outside the scope of
this paper.
\end{definition}

\subsubsection{Linear autoencoders are non-compact learners of subspace unions}
\label{app:linear-noncompact}

The compact and non-compact terminology follows the original PoS
formulation~\citep{pos}. When \(\cU\neq\cV\), a linear autoencoder whose range
contains all nominal components cannot compact its learned range to their union
\(\cU\), because linearity forces the range to contain their entire join
\(\cV\). The following proposition formalizes the contrast between linear and
ReLU networks observed in the PoS toy examples and complements the off-support
reconstruction analysis of \citet{bouman2025}.

\begin{proposition}[Linear non-compaction]
\label{prop:linear-noncompact}
Let \(P=f_D\circ f_E:\R^n\to\R^n\) be a linear autoencoder map with learned
range \(T:=\operatorname{range}(P)\). If \(\cU\subseteq T\), then
\begin{equation}
    \cV=\operatorname{span}(\cU)\subseteq T.
    \label{eq:linear-noncompact}
\end{equation}
Consequently, if \(T\subseteq\cV\), then \(T=\cV\). Whenever
\(\cU\neq\cV\), no linear autoencoder whose range contains \(\cU\) can have
learned range \(T=\cU\).
\end{proposition}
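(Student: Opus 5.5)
The plan is to use only the fact that the range of a linear map is a linear subspace, together with the definition of the join as the smallest subspace containing every \(S_i\). There are three claims, and I will prove them in order.

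\textbf{Step 1: the inclusion \(\cV\subseteq T\).} Since \(P\) is linear, \(T=\operatorname{range}(P)=\{Ps:s\in\R^n\}\) is closed under addition and scalar multiplication, so it is a linear subspace of \(\R^n\). The identity \(\cV=\sum_i S_i=\operatorname{span}(\bigcup_i S_i)=\operatorname{span}(\cU)\) holds because a finite sum of subspaces is exactly the set of finite linear combinations of their elements. By definition, \(\operatorname{span}(\cU)\) is the intersection of all subspaces containing \(\cU\), and \(T\) is one such subspace by hypothesis. Hence \(\cV\subseteq T\), which is Eq.~(\ref{eq:linear-noncompact}).

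If one prefers to avoid the minimality characterization, argue directly: any \(v\in\cV\) can be written as \(v=\sum_i u_i\) with \(u_i\in S_i\subseteq\cU\subseteq T\). Closure of \(T\) under addition then gives \(v\in T\).

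\textbf{Step 2: \(T=\cV\) under the extra hypothesis.} If additionally \(T\subseteq\cV\), combining this with Step 1 yields \(T=\cV\).

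\textbf{Step 3: no linear autoencoder has range exactly \(\cU\).} Suppose a linear autoencoder has \(\cU\subseteq T\) and \(T=\cU\). By Step 1, \(\cV\subseteq T=\cU\). Together with \(\cU\subseteq\cV\) from the definition of meet, union and join, this forces \(\cU=\cV\), contradicting \(\cU\neq\cV\). An equivalent route is to note that \(T\) is a subspace containing \(\cU\), so \(T=\cU\) would make \(\cU\) a subspace, hence equal to its own span \(\cV\).

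None of these steps is a real obstacle. The only points needing care are stating explicitly that the range of a linear map is a subspace, and justifying \(\sum_i S_i=\operatorname{span}(\cU)\) as the set of finite sums of elements of the \(S_i\). Nothing about the encoder/decoder factorization \(f_D\circ f_E\) is used beyond the linearity of the composite \(P\).
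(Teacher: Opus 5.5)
Your proposal is correct and follows the paper's proof essentially verbatim: the range of a linear map is a subspace, so if it contains \(\cU\) it must contain the smallest subspace \(\operatorname{span}(\cU)=\sum_i S_i=\cV\), and the remaining two claims follow immediately. Your explicit justification of \(\sum_i S_i=\operatorname{span}(\cU)\) and of the final contradiction \(\cV\subseteq T=\cU\subseteq\cV\) only spells out what the paper leaves as immediate.
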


\begin{proof}
The range \(T\) of a linear map is a subspace. Since \(T\) contains \(\cU\),
it must also contain the smallest linear subspace containing \(\cU\), namely
\(\operatorname{span}(\cU)=\sum_i S_i=\cV\). The remaining statements follow
immediately.
\end{proof}

Thus, a linear autoencoder whose range covers \(\cU\) necessarily also contains
the off-union directions \(\cV\setminus\cU\). Under P2, these directions become
zero-score blind regions. Exact carving of a genuine union
\(\cU\neq\cV\) therefore requires nonlinearity somewhere in the complete
reconstruction map.

\begin{remark}[Architectural consequence for nested carving]
The obstruction concerns the complete reconstruction map
\(P=f_D\circ f_E\), not the encoder alone. At any level \(\ell\) where
\(\cU^{(\ell-1)}\neq\cV^{(\ell-1)}\), a linear stage whose pull anchors enforce
\(\cU^{(\ell-1)}\subseteq T^{(\ell)}\) must also satisfy
\(\cV^{(\ell-1)}\subseteq T^{(\ell)}\). It may remove directions outside the
join, but it cannot exclude the interpolating directions
\(\cV^{(\ell-1)}\setminus\cU^{(\ell-1)}\). Exact union carving therefore
requires the complete stage map to be nonlinear. This requirement does not
prescribe where activations or gating must be placed, and it does not apply
when the nominal union is already a subspace.
\end{remark}

\begin{proposition}[Geometric score sandwich]
\label{prop:score-sandwich}
For every \(s\in\R^n\),
\begin{equation}
    \dist(s,\cV)
    \leq
    \dist(s,\cU)
    =
    \min_{1\leq i\leq M}\dist(s,S_i)
    \leq
    \dist(s,\cW).
    \label{eq:score-sandwich}
\end{equation}
\end{proposition}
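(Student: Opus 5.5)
The plan is to deduce the whole chain from a single monotonicity fact: distance to a set is antitone under inclusion. If $A\subseteq B\subseteq\R^n$ are nonempty, then every candidate $t\in A$ in the infimum defining $\dist(s,A)$ is also a candidate for $\dist(s,B)$. Hence $\dist(s,B)=\inf_{t\in B}\lVert s-t\rVert_2\le\inf_{t\in A}\lVert s-t\rVert_2=\dist(s,A)$. All sets involved are nonempty, since each $S_i$ is a linear subspace and so contains $0$, and hence so does $\cW$. The distances are therefore finite.

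The argument then has three steps.

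\textbf{Outer inequalities.} The definition of meet, union, and join gives $\cW\subseteq\cU\subseteq\cV$. For the first inclusion, $\cW\subseteq S_1\subseteq\cU$. For the second, each $S_i\subseteq\sum_j S_j$, so $\cU\subseteq\cV$. Applying the monotonicity fact to each inclusion yields $\dist(s,\cV)\le\dist(s,\cU)$ and $\dist(s,\cU)\le\dist(s,\cW)$.

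\textbf{Middle equality.} Since the union is finite, I will split the infimum over $\cU=\bigcup_i S_i$ into iterated infima:
\[
\inf_{t\in\cU}\lVert s-t\rVert_2=\min_{1\le i\le M}\inf_{t\in S_i}\lVert s-t\rVert_2 .
\]
This holds because an infimum over a finite union equals the minimum of the infima over the pieces. To show it, I will prove both directions. For ``$\le$'', each $S_i\subseteq\cU$. For ``$\ge$'', any $t\in\cU$ lies in some $S_j$, so $\lVert s-t\rVert_2\ge\dist(s,S_j)\ge\min_i\dist(s,S_i)$; taking the infimum over $t\in\cU$ gives the claim.

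I will also note, though it is not needed for the statement, that each $\dist(s,S_i)$ is attained by the orthogonal projection onto $S_i$. The minimum is therefore attained, and so is the infimum over $\cU$, which is consistent with the nearest-point reading under P2.

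None of these steps poses a real obstacle. The only point requiring care is the middle equality, where the key facts are the finiteness of $M$ and the nonemptiness of each $S_i$, so that ``$\min$'' is legitimate. The chain holds for every $s\in\R^n$ without any P2 assumption, since it concerns geometric distances only.
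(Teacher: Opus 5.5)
Your proposal is correct and takes the same route as the paper. The paper's proof also applies the fact that distance reverses set inclusion to \(\cW\subseteq\cU\subseteq\cV\), and takes the middle equality as the minimum of distances over a finite union. You spell out the infimum-splitting argument and the nonemptiness check, which the paper leaves implicit.
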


\begin{proof}
For sets \(A\subseteq B\), distance reverses inclusion:
\(\dist(s,B)\leq\dist(s,A)\). Applying this to
\(\cW\subseteq\cU\subseteq\cV\) gives the two inequalities. Since
\(\cU=\bigcup_{i=1}^{M}S_i\) is a finite union, its distance is the minimum of
the component distances.
\end{proof}

Under P2, this ordering transfers to reconstruction scores when the
corresponding networks realize nearest-point projections. It does not hold
automatically for generic autoencoders.

\subsection{Optimality of the nominal union among geometrically faithful ranges}
\label{app:union-optimality}

Assume that \(p_{\mathrm{nom}}\) is supported on the nominal union \(\cU\).
Since \(\cU\) is a finite union of closed subspaces, it is closed. Define the
closed candidate ranges that geometrically preserve the complete nominal
support by
\begin{equation}
    \mathfrak T_{+}
    :=
    \left\{
        T\subseteq\R^n:
        \cU\subseteq T,\;
        T\ \text{is closed}
    \right\}.
    \label{eq:admissible-ranges}
\end{equation}
For every \(T\in\mathfrak T_{+}\),
\(\dist(s_n,T)=0\) for all \(s_n\in\cU\). Under P2, this corresponds to perfect
nominal reconstruction.

For an anomaly distribution \(p_a\) for which the following expectations
exist, define the expected geometric score gap
\begin{equation}
    \Delta(T)
    :=
    \mathbb E_{s_a\sim p_a}\!\left[\dist(s_a,T)\right]
    -
    \mathbb E_{s_n\sim p_{\mathrm{nom}}}\!\left[\dist(s_n,T)\right].
    \label{eq:expected-gap}
\end{equation}
The second term vanishes for every \(T\in\mathfrak T_{+}\), so
\(\Delta(T)\) equals the expected anomaly distance from \(T\). The question is
therefore which geometrically nominal-faithful range keeps anomalies farthest
away. This analysis transfers to reconstruction error only when the network
realizes the nearest-point projection prescribed by P2. Moreover,
\(\Delta(T)\) is an expected score gap and does not by itself determine AUC.

\begin{lemma}[Nominal union optimality]
\label{lem:union-optimality}
Assume that the nominal distribution is supported on \(\cU\), that is,
\(p_{\mathrm{nom}}(\cU)=1\). For every \(T\in\mathfrak T_{+}\) and every
\(s\in\R^n\),
\begin{equation}
    \dist(s,T)\leq\dist(s,\cU).
    \label{eq:pointwise-optimality}
\end{equation}
Consequently,
\begin{equation}
    \max_{T\in\mathfrak T_{+}}\Delta(T)=\Delta(\cU),
    \label{eq:expected-optimality}
\end{equation}
so the nominal union \(\cU\) is an optimal geometrically faithful range.
\end{lemma}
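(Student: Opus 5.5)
The plan is to reduce everything to the elementary fact, already used in the proof of Proposition~\ref{prop:score-sandwich}, that distance reverses set inclusion. The pointwise bound comes first and the expected statement is then a short consequence.

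\textbf{Pointwise bound.} Fix \(T\in\mathfrak T_{+}\) and \(s\in\R^n\). Since \(\cU\subseteq T\), the infimum defining \(\dist(s,T)\) runs over a superset of the set used for \(\dist(s,\cU)\). Hence
\[
\dist(s,T)=\inf_{t\in T}\lVert s-t\rVert_2\leq\inf_{u\in\cU}\lVert s-u\rVert_2=\dist(s,\cU),
\]
which is Eq.~(\ref{eq:pointwise-optimality}). Closedness of \(T\) plays no role here. It only ensures that infima are attained, which we need when invoking P2.

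\textbf{Expected optimality.} The nominal term of \(\Delta(T)\) vanishes for every \(T\in\mathfrak T_{+}\): if \(s_n\in\cU\subseteq T\), then \(\dist(s_n,T)=0\). Because \(p_{\mathrm{nom}}(\cU)=1\), the expectation is zero. This gives \(\Delta(T)=\mathbb E_{p_a}[\dist(s_a,T)]\) for all admissible \(T\). Integrating the pointwise bound against \(p_a\) then gives \(\Delta(T)\leq\Delta(\cU)\), using monotonicity of the expectation. Both expectations exist by assumption, and the integrands are nonnegative, so there is no issue with undefined differences.

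\textbf{Attainment.} It remains to check that \(\cU\) itself lies in \(\mathfrak T_{+}\). Trivially \(\cU\subseteq\cU\), and \(\cU\) is closed as a finite union of linear subspaces, each of which is closed in finite dimension. So the supremum of \(\Delta\) over \(\mathfrak T_{+}\) is attained at \(T=\cU\), and writing it as a maximum in Eq.~(\ref{eq:expected-optimality}) is justified.

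There is no real obstacle. The only points needing care are measure-theoretic bookkeeping and attainment of the maximum. The first is that the nominal term vanishes almost surely rather than pointwise, which is why the support assumption \(p_{\mathrm{nom}}(\cU)=1\) is needed. The second is confirming that \(\cU\) is admissible, which is why closedness was built into \(\mathfrak T_{+}\).
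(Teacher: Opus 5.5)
Your proof is correct and follows the paper's argument essentially step for step. Inclusion \(\cU\subseteq T\) reverses distance to give the pointwise bound, and \(p_{\mathrm{nom}}(\cU)=1\) makes both nominal terms vanish. Integrating over \(p_a\) then yields \(\Delta(T)\leq\Delta(\cU)\), and closedness of \(\cU\) places it in \(\mathfrak T_{+}\), so the maximum is attained.
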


\begin{proof}
Since \(\cU\subseteq T\), set inclusion gives
\(\dist(s,T)\leq\dist(s,\cU)\). Moreover,
\(p_{\mathrm{nom}}(\cU)=1\) implies that both nominal distance terms in
\(\Delta(T)\) and \(\Delta(\cU)\) vanish. Taking expectations over \(p_a\)
gives \(\Delta(T)\leq\Delta(\cU)\). Since \(\cU\) is closed,
\(\cU\in\mathfrak T_{+}\), so the upper bound is attained at \(T=\cU\).
\end{proof}

\begin{remark}
The lemma identifies \(\cU\) as an optimal geometric target, independently of
the particular model used to represent the learned range. Nominal
reconstruction fidelity preserves \(\cU\), while the push and pull terms use
controlled perturbations of nominal samples to discourage excess range.
Together, these signals encourage the learned range \(T\) to approach the
compact union \(\cU\) without requiring observed anomalies.

The optimum need not be unique for a fixed anomaly distribution. A larger
range \(T\supsetneq\cU\) may attain the same expected score if
\(\dist(s_a,T)=\dist(s_a,\cU)\) for \(p_a\)-almost every anomaly.

If the exact support assumption is relaxed and, for some
\(\varepsilon\geq0\),
\begin{equation}
    \mathbb E_{s_n\sim p_{\mathrm{nom}}}
    \left[\dist(s_n,\cU)\right]
    \leq\varepsilon,
\end{equation}
then every \(T\in\mathfrak T_{+}\) satisfies
\begin{equation}
    \Delta(T)\leq\Delta(\cU)+\varepsilon.
    \label{eq:epsilon-optimality}
\end{equation}
Thus, when nominal observations concentrate near \(\cU\), the nominal union
remains optimal up to \(\varepsilon\) in expected geometric score gap.
\end{remark}

\subsection{Join blindness}

\begin{corollary}[Join blindness]
\label{cor:join-blindness}
Suppose join collapse occurs, so the learned operator has range
$T_\theta:=\operatorname{range}(P_\theta)=\cV$. Under the ideal geometric score,
every sample $s\in\cV$ satisfies $\sigma_{\cV}(s)=0$. Hence nominal data
supported on $\cU$ and any anomaly distribution supported on $\cV$ receive
identical constant scores. Under the standard half-credit convention for ties,
the corresponding AUC is $1/2$.
\end{corollary}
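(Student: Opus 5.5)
The plan is to reduce the claim to the fact that the ideal geometric score vanishes identically on the learned range, and then evaluate the AUC under ties. Throughout, I read \(\sigma_{\cV}(s)\) as the ideal geometric score \(a_{\mathrm{geo}}(s;T_\theta)=\dist(s,T_\theta)\) with \(T_\theta=\cV\). Under P2 this equals the reconstruction score, since the nearest-point projection onto the subspace \(\cV\) is the orthogonal projector.

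\textbf{Step 1 (zero score on the join).} For any \(s\in\cV\), choosing \(t=s\in\cV\) in the infimum gives \(0\le\dist(s,\cV)\le\lVert s-s\rVert_2=0\), so \(\sigma_{\cV}(s)=0\). Under P2, the projector onto \(\cV\) fixes \(s\), so the reconstruction error also vanishes.

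\textbf{Step 2 (nominal and anomalous scores are both zero).} Nominal data satisfy \(p_{\mathrm{nom}}(\cU)=1\). Since \(\cU\subseteq\cV\) by the meet--union--join chain, Step 1 gives nominal scores equal to \(0\) almost surely. The anomaly distribution is assumed supported on \(\cV\), so anomaly scores are also \(0\) almost surely. Both score distributions are therefore the point mass at \(0\), and they are identical constants. This is the sense of the claim that anomalies in \(\cV\setminus\cU\) are indistinguishable from nominal data; it is the extreme case of Proposition~\ref{prop:score-sandwich}, in which the leftmost term vanishes.

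\textbf{Step 3 (AUC).} I will write the AUC as
\[
\Pr(\sigma(s_a)>\sigma(s_n))+\tfrac12\Pr(\sigma(s_a)=\sigma(s_n)),
\]
with \(s_a\sim p_a\) and \(s_n\sim p_{\mathrm{nom}}\) independent. By Step 2 the first probability is \(0\) and the tie probability is \(1\), so the AUC equals \(1/2\).

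The only potential obstacle is interpretive rather than technical. The symbol \(\sigma_{\cV}\) is not defined earlier in the paper, so I must fix it explicitly as the geometric score. The ``almost every'' qualifiers need to be handled carefully, so that the ties occur with probability one rather than pointwise for every sample. No earlier lemma beyond the inclusion \(\cU\subseteq\cV\) is needed.
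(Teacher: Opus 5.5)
Your proposal is correct and takes the same route as the paper, whose proof is simply that every point of $\cV$ has zero distance to $\cV$, so all nominal scores (supported on $\cU\subseteq\cV$) and all anomalous scores vanish. Your Step~3 only spells out the half-credit AUC computation, which the paper leaves implicit.
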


\begin{proof}
The distance from any member of $\cV$ to $\cV$ is zero. Therefore every
nominal and anomalous score is zero.
\end{proof}

\subsection{Meet preference and fidelity loss}
\label{app:meet}

Recall that the meet
\[
\cW:=\bigcap_{i=1}^{M}S_i
\]
is the subspace shared by all component tangent spaces. Limited capacity need
not recover exactly \(\cW\), but may preferentially retain its shared
directions while discarding component specific directions. We call the
limiting case \(T=\cW\) meet collapse. The following proposition gives a
sufficient condition for this preference under rank constrained MSE learning.

\begin{proposition}[Shared direction preference under balanced compression]
\label{prop:balanced-meet}

Let nominal data be sampled uniformly from \(M\) components, with
\(\mathbb E[s\mid i]=0\). Suppose \(\dim\cW>0\), and write
\[
S_i=\cW\oplus C_i,
\qquad
C_i:=S_i\cap\cW^\perp.
\]
Assume that the \(C_i\) are mutually orthogonal and that
\begin{equation}
    \operatorname{Cov}(s\mid i)
    =
    \sigma_{\cW}^{2}P_{\cW}
    +
    \sigma_C^{2}P_{C_i}.
    \label{eq:balanced-component-covariance}
\end{equation}
Then the pooled covariance is
\begin{equation}
    \Sigma
    =
    \sigma_{\cW}^{2}P_{\cW}
    +
    \frac{\sigma_C^{2}}{M}
    \sum_{i=1}^{M}P_{C_i}.
    \label{eq:balanced-pooled-covariance}
\end{equation}
If
\begin{equation}
    \sigma_{\cW}^{2}>\frac{\sigma_C^{2}}{M},
    \label{eq:shared-direction-condition}
\end{equation}
every direction in \(\cW\) has strictly larger covariance eigenvalue than
every component specific direction. Consequently, a globally MSE optimal
linear autoencoder constrained to rank \(r\) retains shared directions first.
In particular, if \(r=\dim\cW\), its learned range is exactly \(\cW\).
\end{proposition}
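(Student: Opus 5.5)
The plan has three steps: compute the pooled covariance, read off its spectral decomposition, and then invoke the Eckart--Young characterization of rank-constrained MSE-optimal linear autoencoders.

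\textbf{Step 1: pooled covariance.} Since every component is mean zero and the mixture weights are uniform, the overall mean is zero. The law of total covariance then gives
\[
\Sigma=\frac{1}{M}\sum_{i=1}^{M}\operatorname{Cov}(s\mid i),
\]
with no between-component term. Substituting Eq.~(\ref{eq:balanced-component-covariance}) gives Eq.~(\ref{eq:balanced-pooled-covariance}) immediately, because the \(P_{\cW}\) term is common to all components.

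\textbf{Step 2: spectral structure of \(\Sigma\).} We check that the projectors \(P_{\cW},P_{C_1},\ldots,P_{C_M}\) are mutually orthogonal.
\begin{itemize}
\item Each \(C_i\subseteq\cW^\perp\), so \(P_{\cW}P_{C_i}=0\).
\item The \(C_i\) are mutually orthogonal by hypothesis, so \(P_{C_i}P_{C_j}=0\) for \(i\neq j\).
\end{itemize}
Hence \(\Sigma\) is diagonal in an orthonormal basis adapted to \(\cW\oplus C_1\oplus\cdots\oplus C_M\oplus R\), where \(R\) is the remaining orthogonal complement. Its eigenvalues are:
\begin{itemize}
\item \(\sigma_{\cW}^2\) on \(\cW\);
\item \(\sigma_C^2/M\) on each \(C_i\);
\item \(0\) on \(R\).
\end{itemize}
Condition (\ref{eq:shared-direction-condition}) then says directly that every unit vector of \(\cW\) has strictly larger eigenvalue than every component-specific direction. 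I take ``every component-specific direction'' to mean eigenvectors in \(\bigoplus_i C_i\), and note that \(R\) has eigenvalue \(0\), which is smaller still.

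\textbf{Step 3: linear autoencoder optimum.} For a linear map \(P\) of rank at most \(r\), the expected squared error \(\mathbb E\lVert s-Ps\rVert^2\) is minimized, by the Eckart--Young/PCA characterization, by the orthogonal projector onto a top-\(r\) eigenspace of \(\Sigma\). The argument runs in two parts.
\begin{itemize}
\item For fixed range \(T\), the optimal map is the orthogonal projector \(P_T\), with error \(\operatorname{tr}\Sigma-\operatorname{tr}(P_T\Sigma)\).
\item Ky Fan's maximum principle shows that \(\operatorname{tr}(P_T\Sigma)\) over rank-\(r\) projectors is maximized exactly by sums of the top-\(r\) eigenspaces, uniquely when there is a gap between the \(r\)-th and \((r+1)\)-th eigenvalues.
\end{itemize}
Since the eigenvalues of \(\cW\) strictly dominate, any optimal range includes \(\min(r,\dim\cW)\) dimensions of \(\cW\) before any component-specific direction: \(\cW\subseteq T\) when \(r\ge\dim\cW\), and \(T\subseteq\cW\) otherwise. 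For \(r=\dim\cW\) the gap \(\sigma_{\cW}^2>\sigma_C^2/M\) makes the top-\(r\) eigenspace unique, so \(T=\cW\) exactly.

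\textbf{Main obstacle.} The computations are routine, so the delicate point is making ``retains shared directions first'' precise when eigenvalues tie. Ties within \(\cW\), or among the \(C_i\) for \(r>\dim\cW\), make the optimum non-unique. The resolution is to state the conclusion as containment, not uniqueness, and to invoke the strict gap only at \(r=\dim\cW\).
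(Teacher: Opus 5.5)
Your proposal is correct and follows the same route as the paper's proof. Both average the conditional covariances to get \(\Sigma\), read off the eigenvalues \(\sigma_{\cW}^2\) on \(\cW\) and \(\sigma_C^2/M\) on each \(C_i\) from mutual orthogonality, and invoke the PCA characterization of rank-constrained MSE-optimal linear autoencoders. The paper cites Baldi--Hornik for that last step, whereas you justify it with Eckart--Young and Ky Fan; your route handles the repeated eigenvalues and possibly rank-deficient \(\Sigma\) that arise here, and your tie analysis (containment for general \(r\), uniqueness only at \(r=\dim\cW\)) states the paper's claim slightly more carefully.
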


\begin{proof}
Averaging the conditional covariances over the \(M\) components gives
Eq.~(\ref{eq:balanced-pooled-covariance}). Mutual orthogonality gives
eigenvalue \(\sigma_{\cW}^{2}\) on \(\cW\) and
\(\sigma_C^{2}/M\) on each \(C_i\). Condition
(\ref{eq:shared-direction-condition}) therefore places all shared directions
before all component specific directions in the covariance spectrum. By the linear autoencoder characterization of
\citet{baldi1989neural}, the global MSE optimum projects onto the leading principal
subspace. At rank \(r=\dim\cW\), condition
(\ref{eq:shared-direction-condition}) makes this subspace exactly \(\cW\),
proving the result.
\end{proof}

\begin{remark}
Exact meet collapse occurs at \(r=\dim\cW\). If \(r<\dim\cW\), the learned
range is a proper subspace of \(\cW\). If \(r>\dim\cW\), it retains \(\cW\)
together with selected component specific directions. Thus, the proposition
establishes shared direction preference, not universal convergence to
\(\cW\). Mutual orthogonality of the \(C_i\) is a simplifying sufficient
assumption. For positive variances, condition
(\ref{eq:shared-direction-condition}) is equivalent to
\[
M>\frac{\sigma_C^{2}}{\sigma_{\cW}^{2}},
\]
so shared direction preference strengthens with the number of balanced
components when \(\sigma_{\cW}^{2}\) and \(\sigma_C^{2}\) are fixed.
\end{remark}

\begin{proposition}[Nominal distortion under meet collapse]
\label{prop:meet-distortion}

Suppose meet collapse occurs, so that the learned range is \(T=\cW\). For
every nominal sample \(s\in S_i\),
\begin{equation}
    a_{\mathrm{geo}}(s;\cW)
    =
    \dist(s,\cW)
    =
    \left\lVert
    P_{S_i\cap\cW^\perp}s
    \right\rVert_2.
    \label{eq:meet-distortion}
\end{equation}
Thus, every nominal sample with nonzero component specific tangent content
outside \(\cW\) receives a positive geometric score. Under P2, this score
equals its reconstruction error.
\end{proposition}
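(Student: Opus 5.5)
Since \(\cW\) is a linear subspace of \(\R^n\), the nearest point in \(\cW\) to any \(s\) is its orthogonal projection \(P_{\cW}s\), so
\[
\dist(s,\cW)=\lVert s-P_{\cW}s\rVert_2=\lVert P_{\cW^\perp}s\rVert_2 .
\]
The proof therefore reduces to identifying \(P_{\cW^\perp}s\) when \(s\in S_i\). This uses the same decomposition as in Proposition~\ref{prop:balanced-meet}: since \(\cW\subseteq S_i\), we can write \(S_i=\cW\oplus C_i\) with \(C_i:=S_i\cap\cW^\perp\), and the sum is orthogonal. Here no orthogonality among distinct \(C_j\) and no covariance assumption is needed; only \(\cW\subseteq S_i\) is used.

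\textbf{Decomposition.} Take \(s\in S_i\) and set \(w:=P_{\cW}s\) and \(c:=s-w\). Then \(c\in\cW^\perp\). Because \(w\in\cW\subseteq S_i\) and \(s\in S_i\), also \(c\in S_i\), so \(c\in S_i\cap\cW^\perp=C_i\). Thus \(s=w+c\) with \(w\in\cW\) and \(c\in C_i\perp\cW\).

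\textbf{Identifying the projection.} Within the subspace \(S_i\), \(\cW\) and \(C_i\) are orthogonal complements of each other. Their dimensions add to \(\dim S_i\) because \(\cW\) is a subspace of the inner-product space \(S_i\). Hence \(P_{C_i}s=c\): indeed \(c\in C_i\), and \(s-c=w\in\cW\subseteq C_i^\perp\). Therefore
\[
\dist(s,\cW)=\lVert c\rVert_2=\lVert P_{S_i\cap\cW^\perp}s\rVert_2 ,
\]
which proves Eq.~(\ref{eq:meet-distortion}).

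\textbf{Remaining claims.} The score is positive exactly when \(c\neq0\), that is, when \(s\notin\cW\). Under P2, the network selects the nearest point of \(T=\cW\). By the equality case in the definition of geometric and reconstruction scores, this nearest point is unique here because \(\cW\) is a subspace, so \(a_{\mathrm{rec}}=a_{\mathrm{geo}}\).

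There is no real obstacle. The only point needing care is the step showing that the residual \(s-P_{\cW}s\) lies in \(S_i\), which uses \(\cW\subseteq S_i\) and so requires the meet to be contained in each component. Since \(\cW\) is defined as \(\bigcap_i S_i\), this containment holds automatically.
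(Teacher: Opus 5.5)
Your proof is correct and follows essentially the same route as the paper. You use \(\cW\subseteq S_i\) to decompose \(s=P_{\cW}s+P_{S_i\cap\cW^\perp}s\) into orthogonal parts and then read off \(\dist(s,\cW)=\lVert s-P_{\cW}s\rVert_2\); your explicit check that the residual \(s-P_{\cW}s\) lies in \(C_i\) and equals \(P_{C_i}s\) fills in a step the paper asserts without verification.
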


\begin{proof}
Since \(\cW\subseteq S_i\),
\[
s
=
P_{\cW}s
+
P_{S_i\cap\cW^\perp}s.
\]
The two terms are orthogonal, and \(P_{\cW}s\) is the nearest point to \(s\)
in \(\cW\). Therefore,
\[
\dist(s,\cW)
=
\left\lVert s-P_{\cW}s\right\rVert_2
=
\left\lVert P_{S_i\cap\cW^\perp}s\right\rVert_2,
\]
proving Eq.~(\ref{eq:meet-distortion}).
\end{proof}

\begin{remark}
Meet collapse guarantees loss of nominal fidelity, but it does not determine
AUC without specifying the anomaly distribution. In particular, chance AUC
requires additional distributional conditions, such as identical nominal and
anomaly score distributions. The geometric prediction is therefore a loss of
fidelity, not a universal claim of chance detection.
\end{remark}

\input{tables/tab_toy_geometries}

\paragraph{Toy data verification:} Table~\ref{tab:toy-geometries} reports reconstruction scores for the toy data
illustrated in Figure~\ref{fig:toy-failure-geometry}. The join model
reconstructs nominal and anomalous samples similarly, demonstrating the loss
of selectivity shown in Figure~\ref{fig:toy-failure-geometry}(b). The rank
deficient model assigns large errors to both groups, demonstrating the loss of
nominal fidelity shown in Figure~\ref{fig:toy-failure-geometry}(d). Because the
two nominal subspaces have a trivial meet, this case is not an instantiation of
Proposition~\ref{prop:balanced-meet}. Only the compact union model preserves
low nominal error while assigning substantially larger scores to anomalies, as
shown in Figure~\ref{fig:toy-failure-geometry}(c).

\subsection{Proof of the separate pull and push guarantees}
\label{app:push-pull-proof}

\begin{proof}
Since \(P_\theta(\widetilde{s}_n)\in T\), the pull bound implies
\[
    \dist_d(s_n,T)
    \le d(P_\theta(\widetilde{s}_n),s_n)
    \le \varepsilon_{\mathrm{pull}}.
\]
Moreover, \([z]_+\ge z\) gives
\[
    \rho_{\theta,d}(\widetilde{s}_n)
    \ge \alpha\delta_e-\varepsilon_{\mathrm{push}}.
\]
Under P2,
\(\rho_{\theta,d}(\widetilde{s}_n)
=\dist_d(\widetilde{s}_n,T)\).
The triangle inequality also gives
\[
    \delta_e
    \le \dist_d(\widetilde{s}_n,T)
       +d(P_\theta(\widetilde{s}_n),s_n)
    \le \dist_d(\widetilde{s}_n,T)
       +\varepsilon_{\mathrm{pull}}.
\]
Together with nonnegativity, these inequalities prove
Eq.~(\ref{eq:push-pull-carving}).

Finally, if \(\cU\subseteq T\), then
\[
    \dist_d(\widetilde{s}_n,T)
    \le \dist_d(\widetilde{s}_n,\cU)
    \le \delta_e+\eta(s_n).
\]
Combining this upper bound with
\(\dist_d(\widetilde{s}_n,T)
\ge\alpha\delta_e-\varepsilon_{\mathrm{push}}\)
and dividing by \(\delta_e>0\) proves
Eq.~(\ref{eq:alpha-compatibility}).
\end{proof}

\section{Detailed construction of nested manifold carving}
\label{app:nested-carving}

This appendix gives the stagewise construction and capacity conditions behind
Section~\ref{sec:nested-carving}. Following P4, nested carving applies dynamic
push--pull recursively in learned latent spaces. Let \(x^{(0)}=s\). At stage
\(\ell\), the previously learned encoder--decoder pairs are frozen, and a new
pair is trained on the current nominal representation \(x^{(\ell-1)}\):
\begin{equation}
x^{(\ell)}=f_E^{(\ell)}(x^{(\ell-1)}),
\qquad
\widehat{x}^{(\ell-1)}
=f_D^{(\ell)}(x^{(\ell)}).
\label{eq:nested-module}
\end{equation}

Before training, the new latent dimension is chosen to preserve the current
nominal union. In the local linearized model, write
\(\cU^{(\ell-1)}=\bigcup_i S_i^{(\ell-1)}\) and let
\(f_E^{(\ell)}:\mathbb{R}^{n_{\ell-1}}\to\mathbb{R}^{n_\ell}\).
As reviewed in PoS~\cite{pos} and classical union-of-subspaces
theory~\cite{blumensath2009sampling, lu2008theory}, a linear encoder is injective on
\(\cU^{(\ell-1)}\) precisely when
\begin{equation}
\ker f_E^{(\ell)}
\cap
\bigl(S_i^{(\ell-1)}+S_j^{(\ell-1)}\bigr)
={0}
\quad\text{for all }i,j.
\label{eq:latent-injectivity}
\end{equation}
For a finite family, this condition is generically feasible when
\begin{equation}
n_\ell\geq k_{\max}^{(\ell-1)}
:=
\max_{i,j}
\dim\bigl(S_i^{(\ell-1)}+S_j^{(\ell-1)}\bigr).
\label{eq:latent-capacity}
\end{equation}
Below this width, some pairwise difference is necessarily annihilated, so
distinct nominal points become indistinguishable and exact reconstruction of
the union is impossible.

Equations~(\ref{eq:latent-injectivity})--(\ref{eq:latent-capacity}) preserve
nominal distinctions but do not themselves ensure compact-union carving. If
the complete stage map is linear and faithful on a non-subspace union,
Proposition~\ref{app:linear-noncompact} forces its range to contain the
corresponding join. Exact union carving therefore additionally requires
nonlinearity and informative perturbations.

When Eq.~(\ref{eq:latent-injectivity}) holds and

$$
S_i^{(\ell)}
:=f_E^{(\ell)}(S_i^{(\ell-1)}),
$$

the dimensions of all pairwise sums are preserved, giving
\(k_{\max}^{(\ell)}=k_{\max}^{(\ell-1)}\). Hence the capacity floor in
Eq.~(\ref{eq:latent-capacity}) persists across faithful linearized levels. A
simple sufficient architectural choice is \(n_\ell\geq n_{\ell-1}\), for
which a generic full-column-rank encoder has zero kernel. These are local
linearized capacity conditions; nominal fidelity must still be learned by the
pull objective.

Finally, a perturbation \(e^{(\ell)}\) is applied to
\(x^{(\ell-1)}\), and Eq.~(\ref{eq:push-pull-loss}) is used at level \(\ell\)
with
\begin{equation}
s_n=x^{(\ell-1)},\qquad
e=e^{(\ell)},\qquad
P_\theta=P_{\theta_\ell}
:=f_D^{(\ell)}\circ f_E^{(\ell)}.
\label{eq:level-push-pull}
\end{equation}
Thus, the pull term preserves the nominal latent representation, while the
push term discourages the new stage from reproducing its perturbed counterpart.
After optimization, the new pair is frozen and the procedure is repeated at
the next latent level. For compact notation, define the encoder and decoder stacks
$$
F_\ell
:=f_E^{(\ell)}\circ\cdots\circ f_E^{(1)},
\qquad
G_\ell
:=f_D^{(1)}\circ\cdots\circ f_D^{(\ell)},
$$

with \(F_0=G_0=I\). The complete reconstruction map through level \(\ell\)
is therefore
\begin{equation}
\overline P^{(\ell)}
:=G_{\ell-1}\circ P_{\theta_\ell}\circ F_{\ell-1}
=G_\ell\circ F_\ell,
\qquad
\overline T^{(\ell)}
:=\operatorname{range}\bigl(\overline P^{(\ell)}\bigr),
\label{eq:decoded-stage-range}
\end{equation}
where, \(\overline T^{(\ell)}\) is the range of the level-\(\ell\)
reconstruction after decoding it back to the input space.

Range nesting requires the new stage to remain compatible with the
representation inherited from the preceding encoder stack:
\begin{equation}
P_{\theta_\ell}\left(\operatorname{range}(F_{\ell-1})\right)
\subseteq \operatorname{range}(F_{\ell-1}).
\label{eq:nested-compatibility}
\end{equation}
Indeed, Eq.~(\ref{eq:nested-compatibility}) gives $
\overline T^{(\ell)}
\subseteq
G_{\ell-1}\left(\operatorname{range}(F_{\ell-1})\right)
=
\overline T^{(\ell-1)}.
$ Nominal coverage is preserved if the preceding encoder--decoder stack and the
new stage are faithful on \(\cU\), namely,

$$
G_{\ell-1}(F_{\ell-1}(u))=u,
\qquad
P_{\theta_\ell}(F_{\ell-1}(u))=F_{\ell-1}(u),
\qquad u\in\cU.
$$

These conditions imply $
\overline P^{(\ell)}(u)
=
G_{\ell-1}\left(
P_{\theta_\ell}(F_{\ell-1}(u))
\right)
=u,
$ and hence \(\cU\subseteq\overline T^{(\ell)}\). Applying compatibility and
nominal fidelity at every level yields
\begin{equation}
\overline T^{(1)}
\supseteq\overline T^{(2)}
\supseteq\cdots
\supseteq\overline T^{(L)}
\supseteq\cU.
\label{eq:nested-ranges-appendix}
\end{equation}

%% file: tables/axiom-map.tex
\begin{table}[!h]
\centering
\caption{Dependence of the main results and constructions on the PoS
postulates. A checkmark denotes direct use, while an open circle denotes that
the postulate is required only to transfer a geometric statement to its
neural-network interpretation. Blank entries rely only on the stated
hypotheses and standard geometry.}
\label{tab:axiom-result-map}

\begingroup
\small
\setlength{\tabcolsep}{4pt}
\renewcommand{\arraystretch}{1.08}

\begin{tabular}{@{}p{0.68\columnwidth}cccc@{}}
\toprule
Result or construction & P1 & P2 & P3 & P4 \\
\midrule
Score ordering, union optimality, and linear obstruction
& & \(\circ\) & & \\

Join blindness and meet preference as PoS failures
& \(\checkmark\) & \(\circ\) & & \\

Dynamic Push and Pull construction
& \(\checkmark\) & \(\checkmark\) & \(\checkmark\) & \\

Push and Pull geometric guarantees
& & \(\checkmark\) & & \\

Nested Manifold Carving and the nested-range relation
& \(\checkmark\) & \(\checkmark\) & \(\checkmark\) & \(\checkmark\) \\

Increasing relative separation of nominal components
& \(\checkmark\) & & & \(\checkmark\) \\

Movement of off-union samples toward nominal components
& & \(\checkmark\) & \(\checkmark\) & \(\checkmark\) \\
\bottomrule
\end{tabular}

\endgroup
\end{table}

%% file: tables/tab_toy_geometries.tex
\begin{table}[t]
\centering
\scriptsize

\caption{Toy verification of the learned geometries and their score
consequences. The linear two dimensional model learns the join, the ReLU model
approximates the compact union, and the one dimensional linear model exhibits
rank deficient underlearning. The meet is trivial in this toy, so the final
case illustrates nominal fidelity loss rather than exact meet collapse.
Distances report the mean and standard deviation of reconstruction scores.}
\label{tab:toy-geometries}
\begin{tabular*}{\linewidth}{@{\extracolsep{\fill}}lccccc@{}}
\toprule
Learned geometry
& \(d_{\mathrm{nom,train}}\)
& \(d_{\mathrm{nom,test}}\)
& \(d_{\mathrm{anom}}\)
& \(\frac{d_{\mathrm{anom}}}{d_{\mathrm{nom,test}}}\)
& AUROC \\
\midrule
Join (linear AE, \(2\mathrm{D}\))
& \(0.134\pm0.11\)
& \(0.143\pm0.10\)
& \(0.136\pm0.12\)
& \(0.95\)
& \(0.456\) \\
Compact union (ReLU AE, \(2\mathrm{D}\))
& \(0.226\pm0.12\)
& \(0.244\pm0.11\)
& \(1.624\pm0.56\)
& \(6.64\)
& \(1.000\) \\
Rank deficient underlearning (linear AE, \(1\mathrm{D}\))
& \(1.364\pm0.51\)
& \(1.386\pm0.44\)
& \(1.435\pm0.90\)
& \(1.04\)
& \(0.422\) \\
\bottomrule
\end{tabular*}
\end{table}

%% file: appendices/metrics.tex
\section{Detailed MNIST ablation study}
\label{app:mnist-ablation}

This appendix provides the complete results supporting the controlled MNIST
study in Section~\ref{sec:mnist-controlled}. We consider four nominal-digit
sets and treat all remaining digits as anomalies. All models use the same
ConvAE backbone, with Stage-0 latent dimension \(9\) and Stage-1 expanded
dimension \(60\). The ordinary autoencoder objective uses MSE, whereas
push--pull uses the Euclidean metric. For every stage \(\ell\), the anomaly
score is always computed in the input space as

$$
\rho^{(\ell)}(s)
=
\left\|s-\widehat{s}^{(\ell)}\right\|_2,
$$

where \(\widehat{s}^{(\ell)}\) is the complete reconstruction through that
stage. Stage-0 and Stage-1 scores are reported separately; no latent-space or
multistage score fusion is used. Results are averaged over seeds \(42\)--\(46\),
and F1 uses the \(95\)th percentile of the fit-split nominal scores as its
threshold.

\paragraph{Per-stage results:}
Table~\ref{tab:stage-metrics} reports the mean and sample standard deviation
of ROC-AUC and F1 at both stages for every nominal-digit set and training
configuration.


\paragraph{When nested carving contributes:}
The benefit of Stage 1 depends on which anomalous directions remain unresolved
after Stage 0. The clearest improvement occurs for the nominal set
\(\{0,2,4,5,6,8,9\}\), for which digit \(1\) is anomalous. Without masking or
sparsity, AE+PP increases from \(0.722\) to \(0.816\) ROC-AUC and from
\(0.436\) to \(0.489\) F1 between Stage 0 and Stage 1. This behavior is
consistent with the geometry in Figure~\ref{fig:mnist-stage-geometry}: digit
\(1\) can remain close to the Stage-0 join while lying outside the nominal
union, and therefore benefits from additional carving in the latent space.

When digit \(1\) is itself nominal, as in the set
\(\{1,2,5,7,8\}\), the same configuration changes only from \(0.921\) to
\(0.922\) ROC-AUC and from \(0.748\) to \(0.749\) F1. Here, Stage 0 already
separates the remaining anomalies effectively, leaving little unresolved
off-union structure for Stage 1 to remove. Intermediate gains appear for the
other two settings in which digit \(1\) is anomalous. With sparsity and no
masking, Stage 1 raises ROC-AUC from \(0.848\) to \(0.875\) for nominal digits
\(\{2,3,6,9\}\), and from \(0.903\) to \(0.928\) for nominal digits
\(\{4,5,7\}\). These results indicate that nesting is most useful when the
preceding stage retains anomaly directions within its learned coarse range,
rather than when the first projection has already resolved them.

\paragraph{Masked reconstruction versus push--pull:}
Masking alone provides no consistent carving advantage. For nominal digits
\(\{4,5,7\}\), the unmasked Stage-0 AE obtains \(0.893\) ROC-AUC and \(0.780\)
F1, while masked reconstruction decreases these values to \(0.881\) and
\(0.766\). In contrast, the unmasked AE+PP reaches \(0.897\) ROC-AUC and
\(0.792\) F1 at Stage 0, and nested push--pull further increases them to
\(0.909\) and \(0.798\) at Stage 1. The same general pattern appears across
the other nominal-digit sets: push--pull improves over the corresponding AE
objective more consistently than masking alone, and its gain can increase
when applied recursively.

Masking can nevertheless complement push--pull when combined with an
appropriate nested objective. For nominal digits \(\{4,5,7\}\), masked
AE+PP with sparsity reaches \(0.929\) ROC-AUC and \(0.815\) F1 at Stage 1,
the best result for this setting. Thus, the results do not show that masking
is intrinsically ineffective. Rather, masking by itself does not specify which
perturbation directions should be excluded, whereas push--pull explicitly
separates the reconstruction from the perturbed input while anchoring it to
the nominal observation.

\paragraph{Complete structural ablation:}
Table~\ref{tab:full-mnist-ablation} reorganizes the
\(\{4,5,7\}\) results to isolate the effects of hierarchy, push--pull,
masking, and sparsity. A reconstruction-only nested module leaves performance
essentially unchanged because the expanded latent autoencoder admits the
identity map and receives no explicit carving signal. Push--pull supplies this
signal, while sparsity and masking act as optional complementary
regularizers. Their effects are not uniformly additive: sparsity strengthens
nested push--pull in the \(\{4,5,7\}\) experiment but can reduce performance
for other nominal sets. The principal consistent gain therefore comes from
the push--pull objective, with nesting contributing when Stage 0 retains
off-union anomaly directions.

\input{tables/MNIST_and_metric_tables/different_normaldigits_2stage}
\input{tables/MNIST_and_metric_tables/different_normaldigits_ticked}

\begin{table}[h]
\centering
\small
\caption{Matched comparison of reconstruction, denoising, and Push and Pull
on MNIST with nominal digits \(4,5,7\). Masking and sparsity are disabled
to isolate the training objective. Stage 1 uses the complete nested
reconstruction decoded to input space, without score fusion.}
\label{tab:dae-push-pull-ablation}
\begin{tabular}{lcccc}
\toprule
Objective & Stage & F1 & ROC-AUC & PR-AUC \\
\midrule
AE
& 0 & 0.780 & 0.893 & 0.962 \\
AE
& 1 & 0.780 & 0.893 & 0.962 \\
\midrule
DAE
& 0 & 0.783 & 0.895 & 0.962 \\
DAE
& 1 & 0.783 & 0.895 & 0.962 \\
\midrule
AE with Push and Pull
& 0 & 0.792 & 0.897 & 0.963 \\
AE with Push and Pull
& 1 & \textbf{0.798} & \textbf{0.909} & \textbf{0.967} \\
\bottomrule
\end{tabular}
\end{table}

\paragraph{Denoising versus Push and Pull:}
A denoising autoencoder (DAE) receives a perturbed nominal sample
\(\widetilde{s}_n=s_n+e\) and learns to reconstruct its clean counterpart
\(s_n\). This corresponds to retaining only the Pull term. Indeed, setting
\(\alpha=0\) gives
\[
\mathcal{L}_{\mathrm{PP}}
=
d(\widehat{s}_n,s_n)
+
\bigl[-d(\widehat{s}_n,\widetilde{s}_n)\bigr]_+
=
d(\widehat{s}_n,s_n),
\]
which is the DAE objective, up to whether the reconstruction discrepancy is
implemented as Euclidean distance or its square. The comparison in
Table~\ref{tab:dae-push-pull-ablation} therefore isolates the contribution of
the positive Push margin from ordinary denoising. DAE changes the Stage 0
results only marginally relative to AE, from \(0.780\) to \(0.783\) F1 and
from \(0.893\) to \(0.895\) ROC-AUC, and its Stage 1 results remain unchanged.
In contrast, Push and Pull reaches \(0.792\) F1 and \(0.897\) ROC-AUC at
Stage 0, followed by \(0.798\) F1 and \(0.909\) ROC-AUC after nested carving.
This behavior agrees with Lemma~1 and
Figure~\ref{fig:push-pull-geometry}. In the ideal case
\(s_n\in\mathcal{U}\), Pull can recover \(s_n\), and for
\(\alpha\leq1\) the Push hinge is inactive at that optimum, although it can
still supply a gradient while the margin is violated during learning. For
realistic nominal data lying near rather than exactly on \(\mathcal{U}\),
Pull alone only attracts the reconstruction toward \(s_n\), whereas a
positive Push margin additionally requires separation from
\(\widetilde{s}_n\). The table consequently indicates that exposure to
perturbations through denoising is insufficient to explain the observed
gain; the explicit separation constraint contributes additional carving,
particularly when it is applied at the nested stage.

\subsection{Geometric contribution of nested latent carving}
\label{app:latent-score-analysis}

Aggregate detection metrics show whether Stage 1 improves performance, but not
which geometric cases produce that improvement. We therefore examine the image
and latent reconstruction errors separately for each digit. This provides a
direct empirical test of the geometry proposed in
Figure~\ref{fig:mnist-stage-geometry} and of the representation dynamics
described in Section~\ref{sec:representation-dynamics}.

Let \(T_{\mathrm{in}}\) denote the frozen Stage 0 encoder, let the nested
module map \(x\) to \(\widehat{x}\), and let \(T_{\mathrm{out}}\) denote the
complete decoder. For an input image \(s\),
\[
x=T_{\mathrm{in}}(s), \qquad
\widehat{x}=T_{\mathrm{nested}}(x), \qquad
\widehat{s}=T_{\mathrm{out}}(\widehat{x}).
\]
We measure the change made by the complete model in the image domain and the
change made specifically by Stage 1 in the Stage 0 latent space:
\begin{align}
e^{\mathrm{rec}}(s)
&=
\frac{1}{|s|}
\left\|\widehat{s}-s\right\|_2^2,
&
\text{image reconstruction error},
\\
e^{\mathrm{sub}}(s)
&=
\frac{1}{|x|}
\left\|\widehat{x}-x\right\|_2^2,
&
\text{latent reconstruction error}.
\end{align}
Each quantity is computed per image by averaging over its elements. No
averaging across images or batches is used. The latent error is introduced
only for this diagnostic analysis. The principal results in the main paper
continue to use the complete input space reconstruction score without score
fusion.

The two errors distinguish the geometric cases illustrated in
Figure~\ref{fig:mnist-stage-geometry}. If the anomalous component of \(s\)
lies largely outside the coarse Stage 0 range \(\mathcal{V}\), Stage 0 removes
it before \(x\) is formed. The resulting latent representation can therefore
already resemble a nominal representation, giving a large image error but a
small Stage 1 latent error. Digit \(8\) is the clearest example of this case:
its anomalous direction lies predominantly outside the learned Stage 0 range,
and its retained representation becomes close to that of nominal digit \(5\).
By contrast, an anomaly lying inside
\(\mathcal{V}\setminus\mathcal{U}\) can be reconstructed well at Stage 0 and
remain difficult to detect in the image domain. Its retained off-union
direction is still present in \(x\), however, allowing Stage 1 to carve it and
produce a large discrepancy between \(x\) and \(\widehat{x}\). Figure
\ref{fig:mnist-stage-geometry} predicts digit \(1\) as the representative case.

To quantify this complementarity, we calibrate one threshold for each score
using a held out set
\(\mathcal{D}_{\mathrm{fit}}\) of \(n_{\mathrm{fit}}=435\) nominal images that
are used neither for training nor for evaluation. For
\(c\in\{\mathrm{rec},\mathrm{sub}\}\), let \(t_c(q)\) be the \(q\)th
percentile of score \(c\) on \(\mathcal{D}_{\mathrm{fit}}\). Define
\[
A(s)
=
\left[
e^{\mathrm{rec}}(s)>t_{\mathrm{rec}}(q)
\right],
\qquad
B(s)
=
\left[
e^{\mathrm{sub}}(s)>t_{\mathrm{sub}}(q)
\right].
\]
For this auxiliary analysis, an image is flagged when either channel fires:
\begin{equation}
\operatorname{flag}(s)=A(s)\lor B(s).
\label{eq:or-rule}
\end{equation}
Because Eq.~(\ref{eq:or-rule}) combines two tests, assigning the \(95\)th
percentile independently to both channels would produce a joint false positive
rate above \(5\%\). We therefore select a common percentile \(q^\star\) on
\(\mathcal{D}_{\mathrm{fit}}\) such that the combined rule has an approximately
\(5\%\) false positive rate. Across the evaluated runs,
\(q^\star\in[97.0,97.5]\), producing realized rates between \(0.046\) and
\(0.051\).

For each digit, Table~\ref{tab:flag-decomposition} reports the marginal firing
probabilities \(\Pr(A)\) and \(\Pr(B)\), together with the disjoint outcomes
\[
A\land\neg B,\qquad
\neg A\land B,\qquad
\neg A\land\neg B.
\]
The unreported intersection can be recovered as
\[
\Pr(A\land B)
=
\Pr(A)-\Pr(A\land\neg B)
=
\Pr(B)-\Pr(\neg A\land B).
\]
We additionally report
\begin{equation}
r_{\mathrm{latent}}
=
\frac{\Pr(\neg A\land B)}
     {\Pr(A\lor B)},
\label{eq:latent-only-share}
\end{equation}
which measures the fraction of detected samples that would be lost if only
the image reconstruction score were retained.

\input{tables/MNIST_and_metric_tables/subspace_detection}

The per digit decomposition closely matches the proposed geometry. For digit
\(8\), the image channel fires for \(87.6\%\) of samples, whereas the latent
channel fires for only \(12.4\%\), and only \(0.6\%\) of its detections depend
exclusively on the latent score. Thus, most of the anomalous information in
digit \(8\) is removed before Stage 1, leaving a Stage 0 representation that
the nested module changes little. This supports the first case in
Figure~\ref{fig:mnist-stage-geometry}: an anomaly outside the coarse range is
detected through its large image reconstruction error, but its removed
directions are unavailable to deeper latent processing.

Digit \(1\) exhibits the opposite behavior. The image channel fires for only
\(12.3\%\) of its samples, consistent with digit \(1\) lying close to the
coarse range \(\mathcal{V}\) and therefore remaining a Stage 0 blind case. The
latent channel fires for \(57.6\%\), with \(48.1\%\) of all digit \(1\)
samples detected exclusively in the latent space. Consequently, \(80.8\%\) of
the detected digit \(1\) samples would be missed without the Stage 1 latent
score. This result supports the second geometric case: Stage 0 retains the
digit \(1\) direction inside its coarse range, while Stage 1 differentiates it
from the compact nominal union.

The reconstructions in Figure~\ref{fig:stages} provide complementary visual
evidence. After nested carving, digit \(1\) no longer preserves its original
structure and frequently becomes \(7\)-like, with some reconstructions
remaining visually ambiguous between \(7\) and \(9\). The \(7\)-like cases
are consistent with the predicted representation dynamics: as the learned
range becomes more compact, an off-union anomaly is moved toward a nearby
nominal component. The experiment therefore does more than show an aggregate
benefit from nesting. It identifies the anomalies removed before the latent
stage, the anomalies exposed by latent carving, and the loss of anomalous
information after projection. This agreement between the predicted geometry,
the latent discrepancies, and the reconstructed images demonstrates the
explanatory value of the PoS based geometric framework.

\begin{figure}[t]
  \centering
  \includegraphics[
    trim=0 15 40 10,
    clip,
    width=0.55\linewidth
  ]{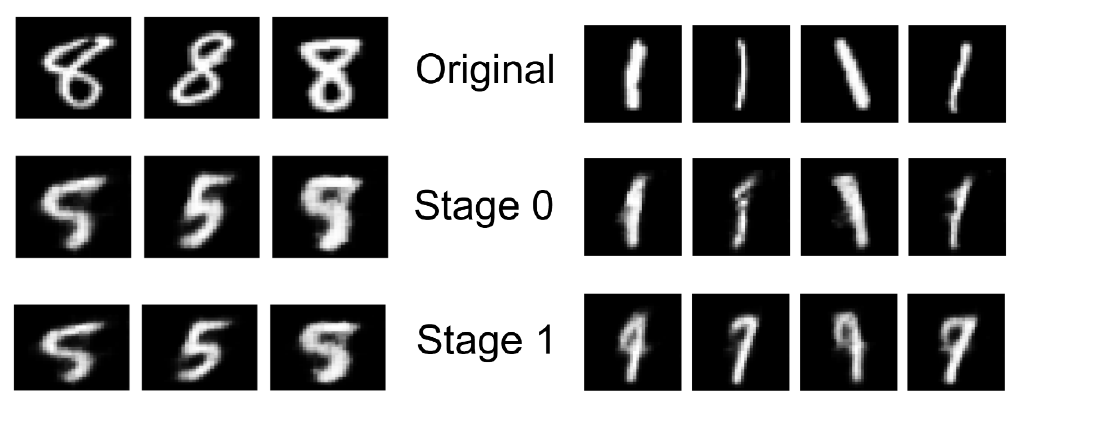}
  \caption{Representative reconstructions at successive stages. Anomalous
  directions removed at Stage 0 cannot be recovered from the Stage 0 latent
  representation. Digit \(1\), which remains inside the coarse Stage 0 range,
  is modified by nested carving and frequently becomes \(7\)-like, consistent
  with projection toward a nearby nominal component.}
  \label{fig:stages}
\end{figure}

\section{Metrics}
\label{app:Metrics}
The following metrics in this section are mainly introduced to provide additional quantitative information regarding the claims that have been asserted throughout the paper. It should be noted that all introduced metrics are always taken between $x$ and $\hat x$ which are the outputs of the frozen encoder and the outputs of the subspace, namely the network in between the frozen encoder decoder pair, respectively. Before any metric is computed, both spaces are standardised per dimension using the mean and the standard deviation of the normal $x$ samples (with $\varepsilon=10^{-8}$ added to the standard deviation and this value is used everywhere), so that the same map is applied to $x$ and $\hat x$. All metrics are computed in the full latent space and since the results are a comparison of the results taken from two latent spaces having the same dimension, their difference is calculated using the relative change, for any scalar quantity $Q$, as following,
\[
  \Delta Q \;=\; 100 \cdot \frac{Q_{\hat x}-Q_{x}}{Q_{x}+\varepsilon} \quad [\%]
\]

\input{tables/MNIST_and_metric_tables/custom_metric_normal}
\input{tables/MNIST_and_metric_tables/custom_metric_anomaly}

Additionally, there exists some commonly used quantities in all the metrics, namely the center of the normal clusters and their radii. Let $z \in \{x, \hat x\}$ index the two spaces mentioned above, and write $z_i$ for the representation of point $i$ in that space. The normal classes are $c \in \mathcal{C}$ with $K=|\mathcal{C}|$ and index sets $\mathcal{H}_c$, which are the same in both spaces, and the anomalies are $a \in \mathcal{A}$. The anomaly classes are $b \in \mathcal{B}$ with $M=|\mathcal{B}|$ and index sets $\mathcal{A}_b$, again the same in both spaces. For each normal class $c \in \mathcal{C}$, we, first, get the center points and the radii of the formed clusters in the latent space according to their labels. The centers of the normal clusters, denoted as $\mu_c^{z}$, are calculated by taking the means of these points, and the radii, denoted as $R_c^{z}$, are calculated by taking the distance of each point in the class to the center and averaging over the class, i.e.,

\begin{equation}
    \mu_c^{z} = \frac{1}{|\mathcal{H}_c|}\sum_{i\in\mathcal{H}_c} z_i,
\qquad
R_c^{z} = \frac{1}{|\mathcal{H}_c|}\sum_{i\in\mathcal{H}_c} \bigl\|z_i-\mu_c^{z}\bigr\|
\label{eq:center-radius}
\end{equation}
where $\|\cdot\|=\|\cdot\|_2$ for all cases.

\subsection{Separation Ratio}

To quantify the possible separation in between the normal clusters, we have used a metric we call the Separation Ratio, $S^{z}$. This metric is calculated by taking the means of the inter-distances of the normal clusters $\bar D_{\text{inter}}^{z}$ and dividing the result by the mean of the radii of all normal clusters $\bar R^{z}$. The relevant equations are as follows

\begin{equation}
    \bar D_{\text{inter}}^{z} = \binom{K}{2}^{-1}\!\!\sum_{\substack{c,c'\in\mathcal{C}\\ c<c'}}\!\bigl\|\mu_c^{z}-\mu_{c'}^{z}\bigr\|,
\qquad
\bar R^{z} = \frac{1}{K}\sum_{c\in\mathcal{C}} R_c^{z},
\qquad
S^{z} = \frac{\bar D_{\text{inter}}^{z}}{\bar R^{z}+\varepsilon},
\label{eq:separation-ratio}
\end{equation}
where $\varepsilon=10^{-8}$. The division by $\bar R^{z}$ makes $S^{z}$ invariant to a uniform rescaling of the latent space. What it does \emph{not} do is isolate separation from compactness. $S^{z}$ rises whenever $\bar D_{\text{inter}}^{z}$ grows relative to $\bar R^{z}$, which happens both when the normal classes move apart and when they contract in place. So we report $\bar D_{\text{inter}}^{z}$ and $\bar R^{z}$ separately in the results, and read $S^{z}$ as a statement about the relative movement of these two terms. For observing shrinkage in normal clusters, we expect the $\bar R^{z}$ to be negative, and for normal clusters moving out, we expect $\bar D_{\text{inter}}^{z}$ to be positive, and as the combination of both of these metrics we expect absolute of $S^{z}$ to be a high number indicating a separation.

\subsection{Proximity Ratio}

As for showing the predicted behavior of the anomalous data approach to the normal clusters in latent space, we have used another metric called Proximity Ratio denoted as $\rho^{z}$. For an anomalous point $a$, let $c^{*}=\arg\min_{c\in\mathcal{C}}\|a-\mu_c^{z}\|$ be its closest normal cluster and $d^{z}(a)$ the distance to that cluster's center. The score is then the median, over the anomalous points, of $d^{z}(a)$ divided by the radius of that closest cluster:
\begin{equation}
d^{z}(a) = \bigl\|a-\mu_{c^{*}}^{z}\bigr\|,
\qquad
    \rho^{z} = \operatorname*{med}_{a\in \mathcal{A}} \frac{d^{z}(a)}{R^{z}_{c^{*}}+\varepsilon}.
\end{equation}
Note that the denominator here is not a single constant per space: $R^{z}_{c^{*}}$ depends on which cluster is closest and therefore varies from one anomalous point to the next. This metric basically tells us whether the anomalies move faster toward the normal clusters than the rate of shrinkage of their closest normal clusters. This, along with $\Delta d$ columns can tell us whether the anomaly points are actually approaching to the normal cluster, and gives the rate of the approaching behavior.

But since this metric is susceptible to the behavior of the normal cluster as well -a shrink in the normal cluster can be interpreted as the anomalies moving away, inaccurately- we have devised some other ways to understand the behavior of the anomalous data. Firstly, we report the relative change of the median of $d^{z}$ itself, taken over all anomalous points at once, indicated as $\Delta d_{\text{samp}}$. Since the anomaly classes are not equally sized, we then take the median within each anomaly class and the median across those class values, indicated as $\Delta d_{\text{pool}}$, so that no single class dominates the number,

\begin{equation}
    d^{z}_{\text{samp}} = \operatorname*{med}_{a\in \mathcal{A}} d^{z}(a),
\qquad
    d^{z}_{\text{pool}} = \operatorname*{med}_{b\in \mathcal{B}}\;\Bigl(\operatorname*{med}_{a\in \mathcal{A}_b} d^{z}(a)\Bigr).
\end{equation}
What the pair of them diagnoses is the effect of the class sizes: $d^{z}_{\text{samp}}$ lets the larger anomaly classes weigh more, while $d^{z}_{\text{pool}}$ does not. It should be noted that with $M$ small and odd the outer median selects one class value rather than combining them, so $d^{z}_{\text{pool}}$ can coincide with a single anomaly class throughout. 

And lastly, we report a second normalised score, in which $d^{z}(a)$ is divided by the mean inter-normal-centroid distance $\bar D_{\text{inter}}^{z}$ rather than by a cluster radius. Unlike the denominator of $\rho^{z}$, this one is a single constant per space and is independent of which cluster is closest, so it acts as a scale reference: $\mathrm{IC}^{z}$ is unchanged under a uniform rescaling of the latent space and asks whether the anomalies approached the normal classes faster than the space as a whole contracted.

\begin{equation}
    \mathrm{IC}^{z} =\operatorname*{med}_{a\in \mathcal{A}} \frac{d^{z}(a)}{\bar D_{\text{inter}}^{z}+\varepsilon}.
\end{equation}

The expectation from $\rho^{z}$ and $\mathrm{IC}^{z}$ is for them to drop to show an approaching behavior of the anomalous data to their closest overall normal cluster, and thus the $\Delta \rho$ and $\Delta \mathrm{IC}$ results are expected to be negative.

%% file: tables/MNIST_and_metric_tables/different_normaldigits_2stage.tex
\begingroup
\setlength{\tabcolsep}{3.5pt}
\renewcommand{\arraystretch}{1.05}
\scriptsize

\setlength{\LTleft}{0pt}
\setlength{\LTright}{0pt}
\setlength{\LTcapwidth}{\linewidth}

\begin{longtable}{@{\extracolsep{\fill}}lccc|cc|cc@{}}
\caption{\textbf{Per-stage MNIST anomaly-detection performance.}
At each stage \(\ell\), the anomaly score is the input-space reconstruction
error \(d(s,\widehat{s}^{(\ell)})\), where
\(\widehat{s}^{(\ell)}\) is the complete reconstruction through that stage.
No latent-space or multistage score fusion is used. Each entry reports the
mean \(\pm\) sample standard deviation over seeds \(42\)--\(46\). AE denotes
the nominal reconstruction objective \(\cL_{\mathrm{AE}}\), while AE+PP
additionally includes \(\lambda_{\mathrm{PP}}\cL_{\mathrm{PP}}\). The best
result within each nominal-digit set and stage is shown in \textbf{bold}.}
\label{tab:stage-metrics}\\
\toprule
\textbf{Normal digits}
& \textbf{Aug.}
& \textbf{Objective}
& \textbf{Sparse}
& \multicolumn{2}{c|}{\textbf{Stage 0}}
& \multicolumn{2}{c}{\textbf{Stage 1}} \\
& & & & \textbf{ROC-AUC} & \textbf{F1}
& \textbf{ROC-AUC} & \textbf{F1} \\
\midrule
\endfirsthead

\multicolumn{8}{c}{\tablename\ \thetable{} -- continued} \\
\toprule
\textbf{Normal digits}
& \textbf{Aug.}
& \textbf{Objective}
& \textbf{Sparse}
& \multicolumn{2}{c|}{\textbf{Stage 0}}
& \multicolumn{2}{c}{\textbf{Stage 1}} \\
& & & & \textbf{ROC-AUC} & \textbf{F1}
& \textbf{ROC-AUC} & \textbf{F1} \\
\midrule
\endhead

\midrule
\multicolumn{8}{r}{\textit{Continued on next page}} \\
\endfoot

\bottomrule
\endlastfoot

0,2,4,5,6,8,9 & mask & AE & -- &
$0.700 \pm 0.015$ & $0.401 \pm 0.038$ &
$0.702 \pm 0.017$ & $0.402 \pm 0.040$ \\

0,2,4,5,6,8,9 & mask & AE & yes &
$0.685 \pm 0.008$ & $0.373 \pm 0.045$ &
$0.688 \pm 0.012$ & $0.382 \pm 0.038$ \\

0,2,4,5,6,8,9 & mask & AE+PP & -- &
$0.711 \pm 0.011$ & $0.427 \pm 0.029$ &
$\mathbf{0.822 \pm 0.016}$ & $0.493 \pm 0.028$ \\

0,2,4,5,6,8,9 & mask & AE+PP & yes &
$0.708 \pm 0.020$ & $0.400 \pm 0.030$ &
$0.782 \pm 0.024$ & $0.472 \pm 0.048$ \\

0,2,4,5,6,8,9 & none & AE & -- &
$0.704 \pm 0.016$ & $0.410 \pm 0.032$ &
$0.710 \pm 0.017$ & $0.411 \pm 0.035$ \\

0,2,4,5,6,8,9 & none & AE & yes &
$0.698 \pm 0.023$ & $0.394 \pm 0.043$ &
$0.704 \pm 0.023$ & $0.401 \pm 0.042$ \\

0,2,4,5,6,8,9 & none & AE+PP & -- &
$\mathbf{0.722 \pm 0.015}$ & $\mathbf{0.436 \pm 0.050}$ &
$0.816 \pm 0.017$ & $0.489 \pm 0.025$ \\

0,2,4,5,6,8,9 & none & AE+PP & yes &
$0.711 \pm 0.013$ & $0.417 \pm 0.049$ &
$0.782 \pm 0.038$ & $\mathbf{0.494 \pm 0.058}$ \\

\midrule

1,2,5,7,8 & mask & AE & -- &
$0.900 \pm 0.002$ & $0.664 \pm 0.028$ &
$0.899 \pm 0.002$ & $0.660 \pm 0.031$ \\

1,2,5,7,8 & mask & AE & yes &
$0.894 \pm 0.012$ & $0.653 \pm 0.049$ &
$0.895 \pm 0.011$ & $0.657 \pm 0.048$ \\

1,2,5,7,8 & mask & AE+PP & -- &
$0.919 \pm 0.002$ & $0.731 \pm 0.012$ &
$0.918 \pm 0.004$ & $0.724 \pm 0.026$ \\

1,2,5,7,8 & mask & AE+PP & yes &
$0.919 \pm 0.005$ & $0.710 \pm 0.038$ &
$0.918 \pm 0.006$ & $0.714 \pm 0.033$ \\

1,2,5,7,8 & none & AE & -- &
$0.910 \pm 0.008$ & $0.689 \pm 0.036$ &
$0.911 \pm 0.008$ & $0.690 \pm 0.036$ \\

1,2,5,7,8 & none & AE & yes &
$0.903 \pm 0.006$ & $0.672 \pm 0.041$ &
$0.904 \pm 0.007$ & $0.674 \pm 0.040$ \\

1,2,5,7,8 & none & AE+PP & -- &
$\mathbf{0.921 \pm 0.004}$ & $\mathbf{0.748 \pm 0.025}$ &
$\mathbf{0.922 \pm 0.003}$ & $\mathbf{0.749 \pm 0.025}$ \\

1,2,5,7,8 & none & AE+PP & yes &
$0.919 \pm 0.003$ & $0.727 \pm 0.021$ &
$0.919 \pm 0.004$ & $0.738 \pm 0.026$ \\

\midrule

2,3,6,9 & mask & AE & -- &
$0.814 \pm 0.004$ & $0.557 \pm 0.023$ &
$0.815 \pm 0.003$ & $0.561 \pm 0.017$ \\

2,3,6,9 & mask & AE & yes &
$0.823 \pm 0.003$ & $0.564 \pm 0.036$ &
$0.820 \pm 0.004$ & $0.572 \pm 0.036$ \\

2,3,6,9 & mask & AE+PP & -- &
$0.845 \pm 0.010$ & $0.613 \pm 0.025$ &
$0.847 \pm 0.006$ & $0.633 \pm 0.024$ \\

2,3,6,9 & mask & AE+PP & yes &
$0.841 \pm 0.006$ & $0.600 \pm 0.034$ &
$0.869 \pm 0.008$ & $0.646 \pm 0.055$ \\

2,3,6,9 & none & AE & -- &
$0.829 \pm 0.009$ & $0.601 \pm 0.031$ &
$0.828 \pm 0.010$ & $0.603 \pm 0.029$ \\

2,3,6,9 & none & AE & yes &
$0.830 \pm 0.009$ & $0.580 \pm 0.033$ &
$0.830 \pm 0.011$ & $0.584 \pm 0.033$ \\

2,3,6,9 & none & AE+PP & -- &
$0.842 \pm 0.005$ & $0.614 \pm 0.035$ &
$0.850 \pm 0.008$ & $0.626 \pm 0.022$ \\

2,3,6,9 & none & AE+PP & yes &
$\mathbf{0.848 \pm 0.003}$ & $\mathbf{0.626 \pm 0.027}$ &
$\mathbf{0.875 \pm 0.010}$ & $\mathbf{0.665 \pm 0.036}$ \\

\midrule

4,5,7 & mask & AE & -- &
$0.881 \pm 0.005$ & $0.766 \pm 0.017$ &
$0.882 \pm 0.003$ & $0.766 \pm 0.018$ \\

4,5,7 & mask & AE & yes &
$0.884 \pm 0.005$ & $0.769 \pm 0.009$ &
$0.886 \pm 0.005$ & $0.770 \pm 0.008$ \\

4,5,7 & mask & AE+PP & -- &
$0.902 \pm 0.003$ & $0.790 \pm 0.016$ &
$0.913 \pm 0.002$ & $0.794 \pm 0.019$ \\

4,5,7 & mask & AE+PP & yes &
$0.899 \pm 0.006$ & $0.788 \pm 0.018$ &
$\mathbf{0.929 \pm 0.007}$ & $\mathbf{0.815 \pm 0.006}$ \\

4,5,7 & none & AE & -- &
$0.893 \pm 0.008$ & $0.780 \pm 0.012$ &
$0.893 \pm 0.008$ & $0.780 \pm 0.011$ \\

4,5,7 & none & AE & yes &
$0.890 \pm 0.011$ & $0.772 \pm 0.012$ &
$0.889 \pm 0.011$ & $0.772 \pm 0.009$ \\

4,5,7 & none & AE+PP & -- &
$0.897 \pm 0.004$ & $0.792 \pm 0.008$ &
$0.909 \pm 0.008$ & $0.798 \pm 0.010$ \\

4,5,7 & none & AE+PP & yes &
$\mathbf{0.903 \pm 0.004}$ & $\mathbf{0.793 \pm 0.011}$ &
$0.928 \pm 0.004$ & $0.808 \pm 0.029$ \\

\end{longtable}
\endgroup

%% file: tables/MNIST_and_metric_tables/different_normaldigits_ticked.tex
\begingroup
\footnotesize
\setlength{\tabcolsep}{3.5pt}
\renewcommand{\arraystretch}{1.05}

\begin{longtable}{@{}lccccccc@{}}
\caption{\textbf{Full factorial ablation on MNIST with nominal digits
\(4,5,7\).} The first row reproduces the two-dimensional autoencoder baseline
of~\cite{bouman2025}. Rows without a hierarchy tick use the
Stage-0 reconstruction score; ticked rows use the complete Stage-1
reconstruction decoded to the input space. Values are
means over different seeds. Best results are shown in \textbf{bold} and
second-best results are \underline{underlined}.}
\label{tab:full-mnist-ablation}\\
\toprule
\textbf{Normal digits}
& \textbf{Hierarchy}
& \textbf{Push}
& \textbf{Mask}
& \textbf{Sparsity}
& \textbf{F1}
& \textbf{ROC-AUC}
& \textbf{PR-AUC} \\
& \textbf{level}
& \textbf{pull}
& & & & & \\
\midrule
\endfirsthead

\multicolumn{8}{c}{\tablename\ \thetable{} -- continued} \\
\toprule
\textbf{Normal digits}
& \textbf{Hierarchy}
& \textbf{Push}
& \textbf{Mask}
& \textbf{Sparsity}
& \textbf{F1}
& \textbf{ROC-AUC}
& \textbf{PR-AUC} \\
& \textbf{level}
& \textbf{pull}
& & & & & \\
\midrule
\endhead

\midrule
\multicolumn{8}{r}{\textit{Continued on next page}} \\
\endfoot

\bottomrule
\endlastfoot
 Baseline (4,5,7) &  &  &  &  & 0.693 & 0.866 & 0.936 \\
\midrule
4,5,7 &  &  &  &  & 0.780 & 0.893 & 0.962 \\
4,5,7 & \checkmark &  &  &  & 0.780 & 0.893 & 0.962 \\
4,5,7 &  &  &  & \checkmark & 0.772 & 0.890 & 0.960 \\
4,5,7 & \checkmark &  &  & \checkmark & 0.772 & 0.889 & 0.960 \\
4,5,7 &  & \checkmark &  &  & 0.792 & 0.897 & 0.963 \\
4,5,7 & \checkmark & \checkmark &  &  & 0.798 & 0.909 & 0.967 \\
4,5,7 &  & \checkmark &  & \checkmark & 0.793 & 0.903 & 0.965 \\
4,5,7 & \checkmark & \checkmark &  & \checkmark & \underline{0.808} & \underline{0.928} & \underline{0.972} \\
4,5,7 &  &  & \checkmark &  & 0.766 & 0.881 & 0.957 \\
4,5,7 & \checkmark &  & \checkmark &  & 0.766 & 0.882 & 0.957 \\
4,5,7 &  &  & \checkmark & \checkmark & 0.769 & 0.884 & 0.958 \\
4,5,7 & \checkmark &  & \checkmark & \checkmark & 0.770 & 0.886 & 0.959 \\
4,5,7 &  & \checkmark & \checkmark &  & 0.790 & 0.902 & 0.965 \\
4,5,7 & \checkmark & \checkmark & \checkmark &  & 0.794 & 0.913 & 0.968 \\
4,5,7 &  & \checkmark & \checkmark & \checkmark & 0.788 & 0.899 & 0.964 \\
4,5,7 & \checkmark & \checkmark & \checkmark & \checkmark & \textbf{0.815} & \textbf{0.929} & \textbf{0.973} \\
\end{longtable}
\endgroup

%% file: tables/MNIST_and_metric_tables/subspace_detection.tex
\begin{table}[t]
\centering
\caption{Per-image flag decomposition of the two-channel or-rule.
Entries are fractions of test images, with thresholds calibrated on held-out
nominal images to a joint \(5\%\) false-positive rate (\(q\approx97\)).
Results are mean \(\pm\) standard deviation over 12 runs. Nominal digits are
the negative class; hence, their detection share is undefined.}
\label{tab:flag-decomposition}

\begingroup
\scriptsize
\setlength{\tabcolsep}{2.5pt}
\renewcommand{\arraystretch}{0.92}

\resizebox{\columnwidth}{!}{%
\begin{tabular}{@{}llcccccc@{}}
\toprule
& & \multicolumn{2}{c}{Channel fires}
& \multicolumn{3}{c}{Disjoint outcomes} & \\
\cmidrule(lr){3-4}
\cmidrule(lr){5-7}
Digit & Role
& Recon.
& Subsp.
& \makecell{Recon.\\only}
& \makecell{Subsp.\\only}
& Neither
& \makecell{Subsp.-only\\share of det.} \\
\midrule
0
& anomaly
& \(0.952{\pm}0.044\)
& \(0.325{\pm}0.113\)
& \(0.632{\pm}0.127\)
& \(0.006{\pm}0.008\)
& \(0.042{\pm}0.038\)
& \(0.6\%\) \\

\textbf{1}
& anomaly
& \(\mathbf{0.123{\pm}0.094}\)
& \(\mathbf{0.576{\pm}0.137}\)
& \(0.028{\pm}0.015\)
& \(\mathbf{0.481{\pm}0.101}\)
& \(0.396{\pm}0.126\)
& \(\mathbf{80.8\%}\) \\

2
& anomaly
& \(0.965{\pm}0.008\)
& \(0.444{\pm}0.098\)
& \(0.531{\pm}0.099\)
& \(0.010{\pm}0.005\)
& \(0.024{\pm}0.005\)
& \(1.1\%\) \\

3
& anomaly
& \(0.827{\pm}0.033\)
& \(0.129{\pm}0.077\)
& \(0.705{\pm}0.093\)
& \(0.008{\pm}0.004\)
& \(0.165{\pm}0.030\)
& \(0.9\%\) \\

6
& anomaly
& \(0.581{\pm}0.062\)
& \(0.388{\pm}0.080\)
& \(0.249{\pm}0.081\)
& \(0.056{\pm}0.031\)
& \(0.364{\pm}0.050\)
& \(8.8\%\) \\

8
& anomaly
& \(0.876{\pm}0.031\)
& \(0.124{\pm}0.105\)
& \(0.758{\pm}0.106\)
& \(0.006{\pm}0.008\)
& \(0.117{\pm}0.032\)
& \(0.7\%\) \\

9
& anomaly
& \(0.101{\pm}0.014\)
& \(0.049{\pm}0.026\)
& \(0.081{\pm}0.020\)
& \(0.030{\pm}0.017\)
& \(0.870{\pm}0.023\)
& \(21.6\%\) \\

\midrule
4
& nominal
& \(0.017{\pm}0.004\)
& \(0.026{\pm}0.008\)
& \(0.012{\pm}0.003\)
& \(0.021{\pm}0.008\)
& \(0.962{\pm}0.009\)
& --- \\

5
& nominal
& \(0.050{\pm}0.011\)
& \(0.049{\pm}0.013\)
& \(0.034{\pm}0.009\)
& \(0.033{\pm}0.007\)
& \(0.917{\pm}0.015\)
& --- \\

7
& nominal
& \(0.023{\pm}0.005\)
& \(0.034{\pm}0.008\)
& \(0.016{\pm}0.002\)
& \(0.027{\pm}0.006\)
& \(0.950{\pm}0.007\)
& --- \\
\bottomrule
\end{tabular}%
}

\endgroup
\end{table}

%% file: tables/MNIST_and_metric_tables/custom_metric_normal.tex
\begin{table}[h]\centering\small
\caption{\textbf{Do the normal classes separate from one another?}
Every column is a relative change from $x$ to $\hat x$.
$\bar D_{\text{inter}}$ is how far apart the classes sit;
$\bar S_{\text{intra}}$ is how wide the
classes are;
$S=\bar D_{\text{inter}}/\bar S_{\text{intra}}$ combines the two into a single
separation score, so it can rise either because the classes move apart or because
they contract in place. Under PushPull the rise in $S$ is driven by $\bar S_{\text{intra}}$ falling by
roughly $8$--$10\%$ while $\bar D_{\text{inter}}$ also falls, only more slowly. MSE leaves all three unchanged.
The arrow marks the direction of improvement, the best value is
\textbf{bold} and the second best is \underline{underlined}.}
 
\label{tab:normal-full}
\begin{tabular}{llcc rrr}
\toprule
normal set & mask & loss & sparsity & $\Delta S\ \uparrow$ & $\Delta \bar D_{\text{inter}}$ & $\Delta \bar S_{\text{intra}}$\\
 & & & & (\%) & (\%) & (\%) \\
 
\midrule
$\{4,5,7\}$ & none & MSE & -- & +0.1 & +0.0 & -0.0 \\
 & none & MSE & yes & -0.2 & -0.1 & +0.1 \\
 & mask & MSE & -- & -0.1 & +0.0 & +0.1 \\
 & mask & MSE & yes & -0.8 & -0.9 & -0.1 \\
 & none & PP & -- & \underline{+7.6} & -3.1 & -10.0 \\
 & none & PP & yes & +2.6 & -0.4 & -3.0 \\
 & mask & PP & -- & \textbf{+7.7} & -2.0 & -9.0 \\
 & mask & PP & yes & +3.5 & -0.9 & -4.3 \\
\midrule
$\{2,3,6,9\}$ & none & MSE & -- & +0.2 & +0.0 & -0.2 \\
 & none & MSE & yes & -0.4 & +0.4 & +0.8 \\
 & mask & MSE & -- & -0.3 & -0.2 & +0.1 \\
 & mask & MSE & yes & -0.8 & -1.1 & -0.3 \\
 & none & PP & -- & \underline{+5.2} & -4.0 & -8.7 \\
 & none & PP & yes & +1.9 & -2.8 & -4.6 \\
 & mask & PP & -- & \textbf{+7.2} & -1.9 & -8.5 \\
 & mask & PP & yes & +1.8 & -1.2 & -2.9 \\
\midrule
$\{1,2,5,7,8\}$ & none & MSE & -- & -0.1 & +0.2 & +0.3 \\
 & none & MSE & yes & +0.0 & +0.8 & +0.8 \\
 & mask & MSE & -- & +0.1 & -0.2 & -0.4 \\
 & mask & MSE & yes & -0.1 & -1.6 & -1.5 \\
 & none & PP & -- & \textbf{+5.4} & -2.9 & -7.9 \\
 & none & PP & yes & +1.9 & -0.9 & -2.7 \\
 & mask & PP & -- & \underline{+5.3} & -2.7 & -7.6 \\
 & mask & PP & yes & +1.4 & -0.1 & -1.5 \\
\midrule
$\{0,2,4,5,6,8,9\}$ & none & MSE & -- & +0.3 & -0.0 & -0.3 \\
 & none & MSE & yes & +0.7 & -0.0 & -0.7 \\
 & mask & MSE & -- & +0.1 & -0.0 & -0.1 \\
 & mask & MSE & yes & +0.1 & -1.3 & -1.5 \\
 & none & PP & -- & \underline{+6.5} & -2.0 & -8.0 \\
 & none & PP & yes & +1.6 & -0.7 & -2.3 \\
 & mask & PP & -- & \textbf{+7.3} & -1.7 & -8.3 \\
 & mask & PP & yes & +0.7 & -0.4 & -1.1 \\
\bottomrule
\end{tabular}
\end{table}

%% file: tables/MNIST_and_metric_tables/custom_metric_anomaly.tex
\begin{table}[h!]\centering\small
\caption{\textbf{Do the anomalies move into the normal clusters?} All columns are relative changes from $x$ to $\hat x$, and all are built from the same quantity
$d(a)$, the distance from an anomalous point to its nearest normal centroid.
$\Delta d_{\text{samp}}$ is the median of $d(a)$ over all anomalous points, while $\Delta d_{\text{pool}}$ takes the
median within each anomaly class and then the median across classes.
$\Delta\mathrm{IC}$ reports whether the anomalies approached
faster than the space itself contracted.
$\Delta\rho$ divides instead by the radius of each point's own nearest cluster.
The arrows mark the direction of improvement, and the best value in each column is set in bold and the second best is \underline{underlined}.}
\label{tab:anom-full}
\begin{tabular}{llcc rrrr}
\toprule
normal set & mask & loss & sparsity & $\Delta d_{\text{samp}}\ \downarrow$ & $\Delta d_{\text{pool}}\ \downarrow$ & $\Delta \mathrm{IC}\ \downarrow$ & $\Delta\rho\ \downarrow$\\
 & & & & (\%) & (\%) & (\%) & (\%) \\
\midrule
$\{4,5,7\}$ & none & MSE & -- & -0.4 & -0.1 & -0.5 & -0.1 \\
 & none & MSE & yes & -0.5 & -0.3 & -0.4 & -0.6 \\
 & mask & MSE & -- & -1.0 & -0.8 & -1.0 & -0.6 \\
 & mask & MSE & yes & -1.1 & -1.7 & -0.2 & -1.0 \\
 & none & PP & -- & \underline{-12.4} & \textbf{-15.3} & -9.6 & -2.2 \\
 & none & PP & yes & -12.1 & -13.4 & \underline{-11.8} & \underline{-9.2} \\
 & mask & PP & -- & -11.9 & -12.7 & -10.1 & -3.3 \\
 & mask & PP & yes & \textbf{-15.5} & \underline{-14.8} & \textbf{-14.7} & \textbf{-10.4} \\
\midrule
$\{2,3,6,9\}$ & none & MSE & -- & -0.7 & -1.1 & -0.8 & -0.7 \\
 & none & MSE & yes & -1.0 & -1.4 & -1.4 & -2.3 \\
 & mask & MSE & -- & -0.8 & -1.6 & -0.5 & -0.9 \\
 & mask & MSE & yes & -2.4 & -2.5 & -1.3 & -2.6 \\
 & none & PP & -- & \underline{-15.3} & \underline{-19.7} & -11.8 & -7.2 \\
 & none & PP & yes & \textbf{-18.1} & \textbf{-20.6} & \textbf{-15.7} & \textbf{-13.0} \\
 & mask & PP & -- & -12.7 & -15.7 & -11.1 & -3.9 \\
 & mask & PP & yes & -13.6 & -14.1 & \underline{-12.6} & \underline{-8.7} \\
\midrule
$\{1,2,5,7,8\}$ & none & MSE & -- & -0.1 & -0.2 & -0.3 & -0.3 \\
 & none & MSE & yes & -0.3 & -1.0 & -1.1 & -1.2 \\
 & mask & MSE & -- & -0.3 & -0.4 & -0.1 & -0.3 \\
 & mask & MSE & yes & -2.7 & -2.5 & -1.1 & -2.3 \\
 & none & PP & -- & \textbf{-11.7} & \textbf{-12.0} & \textbf{-9.0} & \textbf{-6.3} \\
 & none & PP & yes & -5.2 & -2.3 & -4.3 & \underline{-3.5} \\
 & mask & PP & -- & \underline{-9.9} & \underline{-11.1} & \underline{-7.4} & -3.1 \\
 & mask & PP & yes & -2.4 & -1.2 & -2.2 & -2.0 \\
\midrule
$\{0,2,4,5,6,8,9\}$ & none & MSE & -- & -0.4 & -0.1 & -0.3 & -0.6 \\
 & none & MSE & yes & +1.0 & +1.9 & +1.0 & +0.8 \\
 & mask & MSE & -- & -0.9 & +0.1 & -0.8 & -0.8 \\
 & mask & MSE & yes & -2.4 & -1.1 & -1.1 & -1.2 \\
 & none & PP & -- & \textbf{-21.0} & \textbf{-17.1} & \textbf{-19.4} & \underline{-8.8} \\
 & none & PP & yes & -10.8 & -13.2 & -10.1 & -5.7 \\
 & mask & PP & -- & \underline{-20.0} & \underline{-15.1} & \underline{-18.6} & \textbf{-11.1} \\
 & mask & PP & yes & -8.7 & -9.3 & -8.3 & -6.9 \\
\bottomrule
\end{tabular}
\end{table}

%% file: appendices/one_class_ablation.tex
\section{Standard One-Class Anomaly Detection}
\label{apd:oneclass}

This section gives the protocol, competing methods, implementation details,
and ablations for the experiments in
Section~\ref{sec:standard-oneclass}. It also reports the class-wise results
omitted from the main paper.

\subsection{Benchmarks and protocol}\label{apd:protocol}

We use MNIST~\citep{mnist}, Fashion-MNIST~\citep{fashion_mnist},
CIFAR-10 and CIFAR-100~\citep{cifar}, and
Arrhythmia~\citep{arrhythmia_odds}. MNIST, Fashion-MNIST, and CIFAR-10
define ten one-class problems, one for each nominal class. CIFAR-100 defines
twenty problems using its coarse superclasses. Arrhythmia defines one problem;
following \citet{toll}, classes 3, 4, 5, 7, 8, 9, 14, and 15 are anomalous and
the remaining classes are nominal.

For each image problem, the training split is restricted to the nominal class.
The validation and test splits retain all classes and are relabelled as nominal
or anomalous. We use ten seeds for each nominal class. A seed controls model
initialisation and stochastic operations but does not change the data split.
Validation is performed every 20 iterations, and the reported value is the test
score at the checkpoint selected on the corresponding validation metric. We use
AUC-ROC for the image datasets. For Arrhythmia, the 15\% of test samples with
the highest anomaly scores are labelled anomalous before computing $F_1$, as in
\citet{toll}. Class-wise results are means and standard deviations over seeds;
dataset averages are unweighted means over nominal classes.

MNIST and Fashion-MNIST inputs are standardised to zero mean and unit variance.
CIFAR-10 and CIFAR-100 use global contrast normalisation with the $\ell_1$ norm,
followed by min--max scaling to $[0,1]$, as in \citet{toll}. No anomalous sample
is used for training. Table~\ref{tab:dataset-stats} summarises the data used
before restricting the training pool to a nominal class.

\begin{table}[t]
\centering
\caption{One-class benchmark statistics. For the image datasets, the training
count denotes the full pool before restriction to the nominal class. All our
variants and seeds use the same fixed splits.}
\label{tab:dataset-stats}

\begingroup
\footnotesize
\setlength{\tabcolsep}{2.5pt}
\renewcommand{\arraystretch}{1.05}

\begin{tabular*}{\columnwidth}{@{\extracolsep{\fill}}lccccc@{}}
\toprule
Dataset
& \shortstack{Training\\pool}
& Validation
& Test
& \shortstack{Input\\shape}
& Problems \\
\midrule
MNIST
& 50\,000
& 10\,000
& 10\,000
& \(1{\times}28{\times}28\)
& 10 \\

Fashion-MNIST
& 50\,000
& 10\,000
& 10\,000
& \(1{\times}28{\times}28\)
& 10 \\

CIFAR-10
& 40\,000
& 10\,000
& 10\,000
& \(3{\times}32{\times}32\)
& 10 \\

CIFAR-100
& 40\,000
& 10\,000
& 10\,000
& \(3{\times}32{\times}32\)
& 20 \\

Arrhythmia
& \multicolumn{3}{c}{452 records (fixed stratified split)}
& 274 features
& 1 \\
\bottomrule
\end{tabular*}

\endgroup
\end{table}

\subsection{Competing methods}

We compare against the following competing methods: OC-SVM~\citep{ocsvm},
KDE~\citep{kde}, Isolation Forest~\citep{iforest}, Deep
SVDD~\citep{dsvdd}, the winner-take-all autoencoder
(DCAE)~\citep{dcae}, VAE~\citep{vae}, MemAE~\citep{memae},
MemGAN~\citep{memgan}, deep structured energy-based models
(DSEBM and DSEBM-e)~\citep{dsebm}, DAGMM~\citep{dagmm},
AnoGAN~\citep{anogan}, ALAD~\citep{alad}, ADGAN~\citep{adgan},
OCGAN~\citep{ocgan}, GANomaly~\citep{ganomaly}, RCGAN~\citep{rcgan},
GOAD~\citep{goad}, geometric-transformation classification
(GeoTrans)~\citep{geotrans}, the attribute-restoration framework
(ARNet)~\citep{arnet}, and targeted collapse (TOLL)~\citep{toll}.

TOLL adds a latent-norm compactness term to this autoencoder family; its
training loss and anomaly score include $\beta\lVert x\rVert_2$. The TOLL
column denotes that regularised method, whereas AE denotes the $\beta=0$
reconstruction-only baseline. AE-PP and AE-PoS also set $\beta=0$: AE-PP adds
Dynamic Push and Pull to the common backbone, and AE-PoS further applies Nested
Manifold Carving to the frozen Stage-0 representation. Thus, the methods share
TOLL's architecture and optimisation settings, but not its compactness
objective.

\subsection{Models and training}\label{apd:impl}

\paragraph{Common backbone:}
AE, AE-PP, and the Stage-0 component of AE-PoS use the autoencoder architecture,
latent width, optimiser, learning rate, and batch size specified by
\citet{toll}. This holds model capacity and the basic optimisation budget fixed
while changing the objective. Stage~0 trains for 2,000 iterations with Adam and
no weight decay. Stage~1 freezes the Stage-0 encoder and decoder from the same
nominal class and seed, then trains a nested MLP encoder--decoder for 6,000
iterations. The nested MLP uses ReLU, with LayerNorm on MNIST and
Fashion-MNIST and BatchNorm on Arrhythmia, CIFAR-10, and CIFAR-100. Its
dimensions and dataset-specific settings are given in Tables~\ref{tab:hparams}
and~\ref{tab:hparams-l1}.

\paragraph{Objectives and score:}
AE minimises clean reconstruction error. AE-PP retains that term and adds the
Dynamic Push and Pull loss using the negative
views in Table~\ref{tab:degview}. AE-PoS trains Stage~1 inside the frozen
Stage-0 model. All reconstruction and push-pull distances use the $\ell_2$ norm.
At test time, AE, AE-PP, and AE-PoS use reconstruction error as the anomaly
score.

\input{tables/headline}

\begin{table}[t]
  \centering
  \caption{Stage-0 AE-PP settings. Latent width, learning rate, and batch size
  follow \citet{toll}; scale and weight specify Dynamic Push and Pull. All runs
  use Adam without weight decay for 2,000 iterations and validate every 20
  iterations.}
  \label{tab:hparams}
  \begingroup
  \small
  \setlength{\tabcolsep}{3.5pt}
  \renewcommand{\arraystretch}{1.05}
  \begin{tabular}{@{}lccccc@{}}
  \toprule
  & & & & \multicolumn{2}{c}{Push-pull} \\
  \cmidrule(lr){5-6}
  Dataset & Latent & LR & Batch & Scale & Weight \\
  \midrule
  Arrhythmia    &  16 & $10^{-3}$ &  100 & 3.00 & 1.00 \\
  MNIST         & 128 & $10^{-4}$ & 1000 & 3.00 & 1.00 \\
  Fashion-MNIST & 256 & $10^{-4}$ &  500 & 3.00 & 1.00 \\
  CIFAR-10      &  64 & $10^{-2}$ &   50 & 2.00 & 1.00 \\
  CIFAR-100     &  64 & $10^{-3}$ &  100 & 2.00 & 1.00 \\
  \bottomrule
  \end{tabular}
  \endgroup
\end{table}

\begin{table}[t]
\centering
\caption{Stage-1 AE-PoS settings. Stage~1 is trained within the frozen
Stage~0 model for each nominal class and seed. The decoder mirrors the listed
MLP encoder path.}
\label{tab:hparams-l1}

\begingroup
\footnotesize
\setlength{\tabcolsep}{2.5pt}
\renewcommand{\arraystretch}{1.05}

\begin{tabular*}{\columnwidth}{@{\extracolsep{\fill}}lcccccc@{}}
\toprule
& & & &
\multicolumn{3}{c}{Push-pull} \\
\cmidrule(lr){5-7}
Dataset
& Encoder path
& LR
& Batch
& Space
& Scale
& Weight \\
\midrule
Arrhythmia
& \(16\!\to\!8\!\to\!2\)
& \(10^{-3}\)
& 100
& latent
& 1.00
& 1.00 \\

MNIST
& \(128\!\to\!256\!\to\!256\)
& \(10^{-4}\)
& 1000
& pixel
& 2.00
& 1.00 \\

Fashion-MNIST
& \(256\!\to\!384\!\to\!384\)
& \(10^{-4}\)
& 500
& pixel
& 2.00
& 4.00 \\

CIFAR-10
& \(64\!\to\!32\!\to\!32\)
& \(10^{-2}\)
& 50
& latent
& 2.00
& 1.00 \\

CIFAR-100
& \(64\!\to\!96\!\to\!96\)
& \(10^{-3}\)
& 100
& latent
& 2.00
& 1.00 \\

\bottomrule
\end{tabular*}

\endgroup
\end{table}

\subsection{Push-pull negative views}

The negative view is generated from a nominal training sample and is used only
to construct the push-pull objective; it is not labelled anomaly data. At
Stage~1, MNIST, Fashion-MNIST, and Arrhythmia reuse their Stage-0
transformation, and the CIFAR models perturb the frozen Stage-0 latent rather
than an image. Table~\ref{tab:degview} gives the complete
configuration.

\begin{table*}[t]
\centering
\caption{Negative views used by Dynamic Push and Pull. \emph{OneOf} samples
one transformation per input with weights \(3{:}1{:}1{:}1\) in the listed
order.}
\label{tab:degview}

\begingroup
\footnotesize
\setlength{\tabcolsep}{3pt}
\renewcommand{\arraystretch}{1.02}

\begin{tabularx}{\textwidth}{
  @{}
  l
  >{\raggedright\arraybackslash}X
  >{\raggedright\arraybackslash}X
  @{}
}
\toprule
Dataset & Stage 0 & Stage 1 \\
\midrule

MNIST
& CutPaste with three rotated patches
& Same CutPaste configuration \\

Fashion-MNIST
& \emph{OneOf}: CutPaste (area \(0.15\)--\(0.4\), two patches),
  patch shuffle (grid 2), \(90^\circ\) rotation, and phase scramble
  (\(0.3\))
& Same \emph{OneOf} configuration \\

CIFAR-10
& \emph{OneOf}: phase scramble (\(0.3\)), patch shuffle (grid 2),
  CutPaste, and channel shuffle
& Gaussian noise on the Stage-0 latent, with relative
  \(\sigma=0.03\) and \(p=0.8\) \\

CIFAR-100
& \emph{OneOf}: phase scramble (\(0.3\)), patch shuffle (grid 2),
  CutPaste, and channel shuffle
& Gaussian noise on the Stage-0 latent, with relative
  \(\sigma=0.03\) and \(p=0.8\) \\

Arrhythmia
& Feature shuffle with fraction \(0.2\)
& Same feature-shuffle configuration \\

\bottomrule
\end{tabularx}

\endgroup
\end{table*}

\subsection{Ablation and complete results}

The controlled ablation uses the same backbone, data splits, and training
budget for AE, AE-PP, and AE-PoS. AE isolates reconstruction, AE-PP adds Dynamic
Push and Pull, and AE-PoS further adds Stage~1.


Tables~\ref{tab:oneclass-mnist}, \ref{tab:oneclass-fmnist},
\ref{tab:oneclass-cifar10}, and~\ref{tab:oneclass-cifar100} report the complete
class-wise results for the image datasets. In particular,
Table~\ref{tab:oneclass-cifar10} expands the average CIFAR-10 comparison
reported in Table~\ref{tab:oneclass-cifar10-summary} of the main paper. Arrhythmia,
which comprises a single one-class task, is reported directly in
Table~\ref{tab:oneclass-arrhythmia} of the main paper.

\input{tables/mnist.tex}

\input{tables/fmnist.tex}
\input{tables/cifar10.tex}
\input{tables/cifar100.tex}

\subsection{Push--Pull Sensitivity}
\label{apd:ppcal}

Table~\ref{tab:pp-alpha-sweep} evaluates the sensitivity of AE-PP to the
margin scale \(\alpha\), with the Push--Pull weight fixed at \(w=1\). The AE
column is the reconstruction-only baseline obtained using the common TOLL
backbone with its compactness coefficient set to \(\beta=0\), and is therefore
distinct from the AE-PP configurations. When \(\alpha=0\), the Push term vanishes, leaving only the Pull objective, which maps a perturbed nominal view to its clean counterpart. This is exactly a denoising autoencoder when the perturbation is additive Gaussian noise, as in our controlled multi-component MNIST experiment, and more generally subsumes perturbation-to-clean autoencoders based on CutPaste, shuffling, rotations, and related transformations used in the other settings. For \(\alpha>0\), the same dataset-specific perturbations additionally activate Push and impose an explicit separation margin. The consistent improvement
from \(\alpha=0\) to \(\alpha>0\) isolates the benefit of the proposed Push
term beyond denoising and supports its intended role in separating
overlapping nominal submanifolds. Most gains are obtained by \(\alpha=2\) or
\(3\), after which performance remains broadly stable. Validation selects
\(\alpha=3\) for Arrhythmia, MNIST, and Fashion-MNIST, and \(\alpha=2\) for
CIFAR-10 and CIFAR-100, indicating limited sensitivity once a sufficient Push
margin is imposed. Arrhythmia is evaluated using F1, while all other datasets
use AUC-ROC.

\input{tables/sensitivity.tex}

Under the conditions of Lemma~\ref{lem:pushpull}, Eq.~(\ref{eq:alpha-compatibility}) also predicts saturation with
respect to \(\alpha\). Once \(\alpha\) exceeds the zero-Push-error bound
\(1+\eta(s_n)/\delta_e\) for most training pairs, the hinge remains active,
and its gradient,
\(-\nabla_\theta d(\hat{s}_n,\tilde{s}_n)\), no longer depends on
\(\alpha\). Consistent with this prediction, performance plateaus at
\(\alpha=2\) or \(3\); on Arrhythmia, all settings
\(\alpha\in\{3,4,5,6\}\) produce identical results.

\subsection{Detecting Unseen Degradations in High-Quality Images}
\label{apd:hq-full}

\input{tables/image}
\input{tables/image_psnr_full}

\subsubsection{Dataset Construction}\label{sec:image_data}

\noindent\textbf{Training data:}\quad
All models are trained on the same set of 8{,}693 clean images: 800 from
DIV2K~\citep{div2k}, 2{,}649 from Flickr2K~\citep{flickr2k}, 4{,}744 from the
Waterloo Exploration Database~\citep{wed}, and 500 from
BSDS500~\citep{bsds500}.

\noindent\textbf{Synthetic degraded views during training:}\quad
During training, each \(256\times256\) clean crop is paired with a degraded
view generated online for the Push--Pull loss. Defocus blur
(\(r\in[1,6]\)), motion blur (\(\ell\in[3,25]\)), and Gaussian blur
(\(\sigma\in[0.2,1.0]\)) are selected independently with probabilities
\(0.55\), \(0.40\), and \(0.30\), respectively. When \(n\) blur types are
selected, their strengths are scaled by \(1/\sqrt{n}\). Their kernels are
combined with an optional, unattenuated sinc kernel (\(p=0.10\)) and applied
once in linear-light space using \(\gamma=2.2\). Spatially varying defocus is
used for \(20\%\) of the defocus samples. The blurred image is then
downsampled by a random factor of at most \(2\) and resized to its original
resolution with probability \(0.5\). Gaussian sensor noise
(\(\sigma\in[0,6]\)) is always added, followed by JPEG compression with
quality \(70\)--\(95\) with probability \(0.8\).

\noindent\textbf{REDS blur set:}\quad
The REDS blur set contains five aligned degraded versions of each ground-truth
video frame, with blur levels \(0.2\), \(0.4\), \(0.6\), \(0.8\), and \(1.0\).
For evaluation, we use the same provided subset of 200 frames at every level,
with one frame selected from each video clip. The degradation increases
monotonically with the level: reconstruction error relative to the ground truth
rises while image sharpness decreases. The blur follows local scene motion,
leaving static regions nearly unchanged while increasingly blurring moving
regions, rather than applying a single spatially invariant kernel to the whole
image. Empirically, the mean squared deviation from the ground truth grows
approximately quadratically with the blur level.

\noindent\textbf{Underwater degradation:}\quad
We use underwater images from the LSUI dataset~\citep{lsui}, grouped by
degradation severity following~\citet{xopgan}. The groups are defined using
PSNR between each underwater image and its paired LSUI reference: high
degradation corresponds to \(8.79\)-\(16.57\) dB, medium degradation to
\(16.57\)-\(19.79\) dB, and low degradation to \(19.79\)-\(31.53\) dB.
Our evaluation contains 144, 143, and 143 images from these groups,
respectively. The paired LSUI references are used only to determine degradation
severity; clean DIV8K images define the nominal class in our anomaly-detection
evaluation.

\noindent\textbf{Anomalous test data:}\quad
The evaluation covers 16 settings across five degradation families. Additive
noise uses the 24 Kodak24 images~\citep{kodak24} corrupted with Gaussian noise
at \(\sigma=25\). The REDS blur set~\citep{reds} provides five aligned severity
levels from \(0.2\) to \(1.0\), evaluated on the same 200 frames at every
level. GoPro~\citep{gopro} contributes 200 motion-blurred images and is
evaluated under two nominal-reference protocols. Rain comprises Test100
(98 images)~\citep{rain_test100}, Rain100L and Rain100H (100 each)
\citep{rain100}, Test1200 (200)~\citep{rain_test1200}, and Test2800
(200)~\citep{rain_test2800}. Finally, LSUI~\citep{lsui} contributes high-,
medium-, and low-degradation underwater groups containing 144, 143, and 143
images, respectively, following the PSNR-based partition of
\citet{xopgan}.

\noindent\textbf{Nominal test data:}\quad
For Kodak24, REDS, and the paired GoPro protocol, we use the clean counterpart
of each degraded image, so the two classes differ only in degradation. For
rain, underwater, and the second GoPro protocol, the nominal class is drawn
from DIV8K~\citep{div8k}. These nominal sets are matched to the corresponding
anomalous sets in size and centre-cropped at native resolution to the largest
image dimensions in each test set. The DIV8K protocols are therefore
corpus-level rather than scene-paired comparisons and may also reflect
differences in image content and native resolution; in particular, crops from
8K images generally contain finer per-pixel detail than equally sized crops
from lower-resolution images.
\input{figs/fig_underwater}
\input{figs/fig_blurreds}
\subsubsection{Compared Models}
\label{sec:image_models}

\noindent\textbf{Autoencoder architecture (AE, AE-PP, AE-PoS):}\quad
The Stage-0 encoder maps the RGB input to 128 channels using a stride-\(2\)
patch convolution, applies four residual blocks, and projects the result to a
9-channel latent representation at half the input resolution. This gives a
latent-to-input dimensionality ratio of
\(9(H/2)(W/2)/(3HW)=0.75\). The decoder mirrors the encoder and restores the
original resolution using pixel shuffle. Each residual block consists of
LayerNorm, a \(1\times1\) expansion to twice the channel width, a
\(3\times3\) depthwise convolution, a ReLU-gated elementwise product between
the two channel halves, and a \(1\times1\) projection back to the original
width. All convolutions are bias-free, and the network uses neither attention
nor encoder--decoder skip connections. This is the architecture used in the AE
and AE-PP baselines. Stage~1 applies the same design to the
frozen 9-channel Stage-0 latent, using an overcomplete \(9\to27\to9\) mapping
without changing the spatial resolution.

\noindent\textbf{Optimisation:}\quad
Stage~0 is trained for \(250\mathrm{k}\) iterations using AdamW with a learning
rate of \(10^{-3}\), zero weight decay, a batch size of \(4\), and gradient
clipping at \(1.0\). We use a two-cycle cosine schedule with periods
\([150\mathrm{k},100\mathrm{k}]\) and relative weights \([1,0.5]\).
Stage~1 follows the same training schedule with a learning rate of \(10^{-4}\).

\noindent\textbf{VAE:}\quad
As a pretrained baseline, we use the frozen Stable Diffusion KL-\(f8\)
autoencoder~\citep{ldm} with the MSE-fine-tuned decoder released as
\texttt{sd-vae-ft-mse}~\citep{sdvaeftmse}. It is evaluated deterministically
through its 4-channel latent representation, without training on our data.
Because its decoder was fine-tuned using mean squared error, the checkpoint is
well matched to our reconstruction-error score. The VAE contains
\(83{,}653{,}863\) parameters, compared with \(830{,}720\) for AE and AE-PP
and \(1{,}665{,}280\) for the complete two-stage AE-PoS. It is therefore
approximately \(101\times\) larger than AE and \(50\times\) larger than
AE-PoS.

\subsection{Results}

Tables~\ref{tab:four-way} and~\ref{tab:reconstruction_psnr_full} show that
AE-PP and AE-PoS combine reliable detection of unseen degradations with high
nominal reconstruction fidelity. The visual comparisons in
Figures~\ref{fig:reds-blur-ramp}, \ref{fig:image-anomaly},
and~\ref{fig:underwater} explain the contrasting behaviours. Plain AE
preserves fine image details but also reconstructs degraded inputs, resulting
in weak anomaly separation. VAE has substantially lower fidelity on nominal
images and increasingly represents the low-frequency blurred component as
blur strengthens; under strong blur, it even reconstructs the degraded input
more accurately than its clean counterpart, reversing the desired anomaly-score
ordering. AE-PoS instead preserves nominal content while excluding unseen
degradations from its learned target set, causing them to appear as strong
reconstruction residuals.

This combination is particularly useful as a lightweight pre-detection stage
in mobile imaging pipelines. When most captured images are already of
sufficient quality, only those identified as degraded need to be passed to
computationally expensive restoration networks, reducing the expected latency
and energy consumption.

\input{figs/fig_mix_images}

%% file: tables/headline.tex
\begin{table}[t]
  \caption{Plain AE, AE-PP and AE-PoS on the one-class benchmarks.}
  \label{tab:headline}
  \begin{center}
  \setlength{\tabcolsep}{6pt}
  \begin{tabular}{l r r r}
  \toprule
  Dataset & AE & AE-PP & AE-PoS \\
  \midrule
  MNIST & 94.74$\pm$4.65 & \textbf{99.38$\pm$0.42} & 99.35$\pm$0.51 \\
  Fashion-MNIST & 91.60$\pm$5.34 & 94.83$\pm$3.95 & \textbf{94.87$\pm$4.04} \\
  CIFAR-10 & 68.96$\pm$7.29 & 71.30$\pm$6.70 & \textbf{71.73$\pm$6.60} \\
  CIFAR-100 & 65.42$\pm$9.09 & 68.56$\pm$8.89 & \textbf{68.87$\pm$8.97} \\
  Arrhythmia (F1) & 60.77$\pm$7.26 & 67.69$\pm$5.75 & \textbf{70.77$\pm$5.75} \\
  \bottomrule
  \end{tabular}
  \end{center}
\end{table}

%% file: tables/mnist.tex
\begin{table}[!htb]
  \caption{Mean and standard deviation of AUC-ROC in \% on MNIST~\citep{mnist} classes over 10 seeds. The highest value in each row is in bold.}
  \label{tab:oneclass-mnist}
  \begin{center}
  \setlength{\tabcolsep}{2pt}
  \resizebox{\columnwidth}{!}{%
  \begin{tabular}{lrrrrrrrrrrr@{\hspace{5pt}}|@{\hspace{5pt}}rr}
  \toprule
  Normal & OC-SVM & OCGAN & VAE & DCAE & IF & AnoGAN & KDE & DSVDD & MEMAE & MEMGAN & Toll & AE-PP & AE-PoS \\
  \midrule
  0 & 98.6$\pm$0.0 & 99.8 & 99.7 & 97.6$\pm$0.7 & 98.0$\pm$0.3 & 96.6$\pm$1.3 & 97.1$\pm$0.0 & 98.0$\pm$0.7 & 99.3$\pm$0.1 & 99.3$\pm$0.1 & 99.8$\pm$0.1 & \textbf{99.91$\pm$0.02} & 99.90$\pm$0.02 \\
  1 & 99.5$\pm$0.0 & 99.9 & 99.9 & 98.3$\pm$0.6 & 97.3$\pm$0.4 & 99.2$\pm$0.6 & 98.9$\pm$0.0 & 99.7$\pm$0.1 & 99.8$\pm$0.0 & 99.9$\pm$0.0 & 99.9$\pm$0.0 & 99.94$\pm$0.01 & \textbf{99.95$\pm$0.01} \\
  2 & 82.5$\pm$0.1 & 94.2 & 93.6 & 85.4$\pm$2.4 & 88.6$\pm$0.5 & 85.0$\pm$2.9 & 79.0$\pm$0.0 & 91.7$\pm$0.8 & 90.6$\pm$0.8 & 94.5$\pm$0.1 & 96.2$\pm$1.6 & 99.24$\pm$0.05 & \textbf{99.28$\pm$0.05} \\
  3 & 88.1$\pm$0.0 & 96.3 & 95.9 & 86.7$\pm$0.9 & 89.9$\pm$0.4 & 88.7$\pm$2.1 & 86.2$\pm$0.0 & 91.9$\pm$1.5 & 94.7$\pm$0.6 & 95.7$\pm$0.4 & 97.9$\pm$0.6 & 99.14$\pm$0.11 & \textbf{99.15$\pm$0.12} \\
  4 & 94.9$\pm$0.0 & 97.5 & 97.3 & 86.5$\pm$2.0 & 92.7$\pm$0.6 & 89.4$\pm$1.3 & 87.9$\pm$0.0 & 94.9$\pm$0.8 & 94.5$\pm$0.4 & 96.1$\pm$0.4 & 97.8$\pm$0.4 & \textbf{99.06$\pm$0.03} & 99.03$\pm$0.05 \\
  5 & 77.1$\pm$0.0 & 98.0 & 96.4 & 78.2$\pm$2.7 & 85.5$\pm$0.8 & 88.3$\pm$2.9 & 73.8$\pm$0.0 & 88.5$\pm$0.9 & 95.1$\pm$0.1 & 93.6$\pm$0.3 & 98.1$\pm$0.6 & 99.33$\pm$0.03 & \textbf{99.34$\pm$0.04} \\
  6 & 96.5$\pm$0.0 & 99.1 & 99.3 & 94.6$\pm$0.5 & 95.6$\pm$0.3 & 94.7$\pm$2.7 & 87.6$\pm$0.0 & 98.3$\pm$0.5 & 98.4$\pm$0.5 & 98.6$\pm$0.1 & 99.5$\pm$0.1 & \textbf{99.84$\pm$0.01} & \textbf{99.84$\pm$0.02} \\
  7 & 93.7$\pm$0.0 & 98.1 & 97.6 & 92.3$\pm$1.0 & 92.0$\pm$0.4 & 93.5$\pm$1.8 & 91.4$\pm$0.0 & 94.6$\pm$0.9 & 95.4$\pm$0.2 & 96.2$\pm$0.2 & 98.7$\pm$0.2 & \textbf{99.28$\pm$0.06} & \textbf{99.28$\pm$0.07} \\
  8 & 88.9$\pm$0.0 & 93.9 & 92.3 & 86.5$\pm$1.6 & 89.9$\pm$0.4 & 84.9$\pm$2.1 & 79.2$\pm$0.0 & 93.9$\pm$1.6 & 86.9$\pm$0.5 & 93.5$\pm$0.1 & 97.3$\pm$0.5 & \textbf{98.54$\pm$0.23} & 98.17$\pm$0.21 \\
  9 & 93.1$\pm$0.0 & 98.1 & 97.6 & 90.4$\pm$1.8 & 93.5$\pm$0.3 & 92.4$\pm$1.1 & 88.2$\pm$0.0 & 96.5$\pm$0.3 & 97.3$\pm$0.2 & 95.9$\pm$0.1 & 98.5$\pm$0.3 & 99.49$\pm$0.04 & \textbf{99.56$\pm$0.03} \\
  \midrule
  Average & 91.3 & 97.5 & 97.0 & 89.7 & 92.3 & 91.4 & 87.0 & 94.8 & 95.2 & 96.5 & 98.4 & \textbf{99.38} & 99.35 \\
  \bottomrule
  \end{tabular}}
  \end{center}
\end{table}

%% file: tables/fmnist.tex
\begin{table}[t]
\centering
\caption{One-class anomaly detection on
Fashion-MNIST~\citep{fashion_mnist}, measured by AUC-ROC (\%).
Values with uncertainty are reported as mean \(\pm\) standard deviation over
10 seeds. Bold indicates the highest mean in each row.}
\label{tab:oneclass-fmnist}

\begingroup
\footnotesize
\setlength{\tabcolsep}{2.5pt}
\renewcommand{\arraystretch}{1.05}

\resizebox{\columnwidth}{!}{%
\begin{tabular}{@{}l*{9}{c}@{}}
\toprule
&
\multicolumn{7}{c}{Baselines} &
\multicolumn{2}{c}{Ours} \\
\cmidrule(lr){2-8}
\cmidrule(lr){9-10}
Nominal
& DAGMM
& DSEBM
& ADGAN
& GANomaly
& GeoTrans
& ARNet
& TOLL
& AE-PP
& AE-PoS \\
\midrule

T-shirt/top
& 42.1
& 91.6
& 89.9
& 80.3
& \textbf{99.4}
& 92.7
& \(91.9{\pm}0.2\)
& \(95.12{\pm}0.09\)
& \(94.92{\pm}0.10\) \\

Trouser
& 55.1
& 71.8
& 81.9
& 83.0
& 97.6
& \textbf{99.3}
& \(98.7{\pm}0.0\)
& \(98.91{\pm}0.06\)
& \(99.24{\pm}0.08\) \\

Pullover
& 50.4
& 88.3
& 87.6
& 75.9
& 91.1
& 89.1
& \(91.3{\pm}0.2\)
& \(\mathbf{92.47{\pm}0.20}\)
& \(92.14{\pm}0.12\) \\

Dress
& 57.0
& 87.3
& 91.2
& 87.2
& 89.9
& 93.6
& \(92.4{\pm}0.2\)
& \(95.24{\pm}0.19\)
& \(\mathbf{95.40{\pm}0.11}\) \\

Coat
& 26.9
& 85.2
& 86.5
& 71.4
& 92.1
& 90.8
& \(91.5{\pm}0.1\)
& \(92.92{\pm}0.22\)
& \(\mathbf{93.25{\pm}0.27}\) \\

Sandal
& 70.5
& 87.1
& 89.6
& 92.7
& 93.4
& 93.1
& \(93.1{\pm}0.2\)
& \(\mathbf{94.28{\pm}0.39}\)
& \(93.07{\pm}0.46\) \\

Shirt
& 48.3
& 73.4
& 74.3
& 81.0
& 83.3
& 85.0
& \(84.6{\pm}0.3\)
& \(85.13{\pm}0.20\)
& \(\mathbf{85.37{\pm}0.14}\) \\

Sneaker
& 83.5
& 98.1
& 97.2
& 88.3
& 98.9
& 98.4
& \(99.0{\pm}0.0\)
& \(\mathbf{99.15{\pm}0.09}\)
& \(99.13{\pm}0.03\) \\

Bag
& 49.9
& 86.0
& 89.0
& 69.3
& 90.8
& \textbf{97.8}
& \(90.8{\pm}0.6\)
& \(96.56{\pm}0.48\)
& \(97.46{\pm}0.13\) \\

Ankle boot
& 34.0
& 97.1
& 97.1
& 80.3
& \textbf{99.2}
& 98.4
& \(98.9{\pm}0.1\)
& \(98.51{\pm}0.12\)
& \(98.71{\pm}0.08\) \\

\midrule
Average
& 51.8
& 86.6
& 88.4
& 80.9
& 93.5
& 93.9
& 93.2
& 94.83
& \textbf{94.87} \\
\bottomrule
\end{tabular}%
}

\endgroup
\end{table}

%% file: tables/cifar10.tex
\begin{table}[h]
\centering
\caption{One-class anomaly detection on CIFAR-10~\citep{cifar}, measured by
AUC-ROC (\%). AE-PP and AE-PoS results are reported as mean
\(\pm\) standard deviation over 10 seeds. Bold indicates the highest mean in
each row.}
\label{tab:oneclass-cifar10}

\begingroup
\footnotesize
\setlength{\tabcolsep}{2.5pt}
\renewcommand{\arraystretch}{1.05}

\resizebox{\columnwidth}{!}{%
\begin{tabular}{@{}l*{13}{r}@{}}
\toprule
&
\multicolumn{11}{c}{Baselines} &
\multicolumn{2}{c}{Ours} \\
\cmidrule(lr){2-12}
\cmidrule(lr){13-14}
Nominal
& \multicolumn{1}{c}{OCGAN}
& \multicolumn{1}{c}{VAE}
& \multicolumn{1}{c}{DSVDD}
& \multicolumn{1}{c}{DSEBM}
& \multicolumn{1}{c}{DAGMM}
& \multicolumn{1}{c}{IF}
& \multicolumn{1}{c}{AnoGAN}
& \multicolumn{1}{c}{ALAD}
& \multicolumn{1}{c}{MEMAE}
& \multicolumn{1}{c}{MEMGAN}
& \multicolumn{1}{c}{TOLL}
& \multicolumn{1}{c}{AE-PP}
& \multicolumn{1}{c}{AE-PoS} \\
\midrule

Airplane
& 75.7
& 70.0
& \(61.7{\pm}4.1\)
& \(41.4{\pm}2.3\)
& \(56.0{\pm}6.9\)
& \(60.1{\pm}0.7\)
& \(67.1{\pm}2.5\)
& \(64.7{\pm}2.6\)
& \(66.5{\pm}0.9\)
& \(73.0{\pm}0.8\)
& \(75.0{\pm}4.3\)
& \(75.53{\pm}2.18\)
& \(\mathbf{75.99{\pm}1.94}\) \\

Automobile
& 53.1
& 38.6
& \(65.9{\pm}2.1\)
& \(57.1{\pm}2.0\)
& \(48.3{\pm}1.8\)
& \(50.8{\pm}0.6\)
& \(54.7{\pm}3.4\)
& \(38.7{\pm}0.8\)
& \(46.4{\pm}0.1\)
& \(52.5{\pm}0.7\)
& \(67.6{\pm}3.2\)
& \(68.65{\pm}1.46\)
& \(\mathbf{69.10{\pm}1.32}\) \\

Bird
& 64.0
& \textbf{67.9}
& \(50.8{\pm}0.8\)
& \(61.9{\pm}0.1\)
& \(53.8{\pm}4.0\)
& \(49.2{\pm}0.4\)
& \(52.9{\pm}3.0\)
& \(67.0{\pm}0.7\)
& \(66.0{\pm}0.1\)
& \(67.2{\pm}0.1\)
& \(56.2{\pm}2.8\)
& \(59.47{\pm}1.95\)
& \(59.99{\pm}1.94\) \\

Cat
& 62.0
& 53.5
& \(59.1{\pm}1.4\)
& \(50.1{\pm}0.4\)
& \(51.2{\pm}0.8\)
& \(55.1{\pm}0.4\)
& \(54.5{\pm}1.9\)
& \(59.2{\pm}0.3\)
& \(52.9{\pm}0.1\)
& \(57.3{\pm}0.2\)
& \(\mathbf{64.6{\pm}1.6}\)
& \(63.41{\pm}0.47\)
& \(63.91{\pm}0.58\) \\

Deer
& 72.3
& \textbf{74.8}
& \(60.9{\pm}1.1\)
& \(73.3{\pm}0.2\)
& \(52.2{\pm}7.3\)
& \(49.8{\pm}0.4\)
& \(65.1{\pm}3.2\)
& \(72.7{\pm}0.6\)
& \(72.8{\pm}0.1\)
& \(73.9{\pm}0.9\)
& \(67.7{\pm}1.4\)
& \(69.47{\pm}0.81\)
& \(70.01{\pm}0.99\) \\

Dog
& 62.0
& 52.3
& \(65.7{\pm}0.8\)
& \(60.5{\pm}0.3\)
& \(49.3{\pm}3.6\)
& \(58.5{\pm}0.4\)
& \(60.3{\pm}2.6\)
& \(52.8{\pm}1.2\)
& \(52.9{\pm}0.2\)
& \(65.0{\pm}0.2\)
& \(68.1{\pm}1.6\)
& \(68.85{\pm}0.71\)
& \(\mathbf{69.36{\pm}0.43}\) \\

Frog
& 72.3
& 68.7
& \(67.7{\pm}2.6\)
& \(68.4{\pm}0.3\)
& \(64.9{\pm}1.7\)
& \(42.9{\pm}0.6\)
& \(58.5{\pm}1.4\)
& \(69.5{\pm}1.1\)
& \(63.7{\pm}0.4\)
& \(72.8{\pm}0.7\)
& \(74.7{\pm}2.5\)
& \(76.77{\pm}0.51\)
& \(\mathbf{77.20{\pm}0.75}\) \\

Horse
& 57.5
& 49.3
& \(67.3{\pm}0.9\)
& \(53.3{\pm}0.7\)
& \(55.3{\pm}0.8\)
& \(55.1{\pm}0.7\)
& \(62.5{\pm}0.8\)
& \(44.8{\pm}0.4\)
& \(45.9{\pm}0.1\)
& \(52.5{\pm}0.5\)
& \(65.8{\pm}1.3\)
& \(71.24{\pm}1.47\)
& \(\mathbf{71.71{\pm}1.42}\) \\

Ship
& 82.0
& 69.6
& \(75.9{\pm}1.2\)
& \(73.9{\pm}0.3\)
& \(51.9{\pm}2.4\)
& \(74.2{\pm}0.6\)
& \(75.8{\pm}4.1\)
& \(73.4{\pm}0.4\)
& \(70.1{\pm}0.1\)
& \(74.4{\pm}0.3\)
& \(80.0{\pm}1.1\)
& \(83.12{\pm}0.63\)
& \(\mathbf{83.33{\pm}0.62}\) \\

Truck
& 55.4
& 38.6
& \(73.1{\pm}1.2\)
& \(63.6{\pm}3.1\)
& \(54.2{\pm}5.8\)
& \(58.9{\pm}0.7\)
& \(66.5{\pm}2.8\)
& \(39.2{\pm}1.3\)
& \(48.2{\pm}0.2\)
& \(65.6{\pm}1.6\)
& \(76.2{\pm}0.5\)
& \(76.53{\pm}0.51\)
& \(\mathbf{76.69{\pm}0.60}\) \\

\midrule
Average
& 65.6
& \multicolumn{1}{c}{58.3}
& \multicolumn{1}{c}{64.8}
& \multicolumn{1}{c}{60.4}
& \multicolumn{1}{c}{54.4}
& \multicolumn{1}{c}{55.5}
& \multicolumn{1}{c}{61.8}
& \multicolumn{1}{c}{59.3}
& \multicolumn{1}{c}{58.5}
& \multicolumn{1}{c}{65.3}
& \multicolumn{1}{c}{69.6}
& \multicolumn{1}{c}{71.30}
& \multicolumn{1}{c}{\textbf{71.73}} \\
\bottomrule
\end{tabular}%
}

\endgroup
\end{table}

%% file: tables/cifar100.tex
\begin{table}[!h]
\centering
\caption{One-class anomaly detection on the CIFAR-100 coarse
superclasses~\citep{cifar}, measured by AUC-ROC (\%). Values with uncertainty
are reported as mean \(\pm\) standard deviation over 10 seeds. Bold indicates
the highest mean in each row.}
\label{tab:oneclass-cifar100}

\begingroup
\footnotesize
\setlength{\tabcolsep}{2.5pt}
\renewcommand{\arraystretch}{1.05}

\resizebox{\columnwidth}{!}{%
\begin{tabular}{@{}l*{9}{r}@{}}
\toprule
&
\multicolumn{7}{c}{Baselines} &
\multicolumn{2}{c}{Ours} \\
\cmidrule(lr){2-8}
\cmidrule(lr){9-10}
Nominal
& \multicolumn{1}{c}{DAGMM}
& \multicolumn{1}{c}{DSEBM}
& \multicolumn{1}{c}{ADGAN}
& \multicolumn{1}{c}{GANomaly}
& \multicolumn{1}{c}{GeoTrans}
& \multicolumn{1}{c}{ARNet}
& \multicolumn{1}{c}{TOLL}
& \multicolumn{1}{c}{AE-PP}
& \multicolumn{1}{c}{AE-PoS} \\
\midrule

Aquatic mammals
& 43.4
& 64.0
& 63.1
& 57.9
& 74.7
& \textbf{77.5}
& \(70.3{\pm}3.4\)
& \(69.30{\pm}1.55\)
& \(69.62{\pm}1.07\) \\

Fish
& 49.5
& 47.9
& 54.9
& 51.9
& 68.5
& 70.0
& \(66.2{\pm}1.1\)
& \(69.99{\pm}1.49\)
& \(\mathbf{70.38{\pm}1.41}\) \\

Flowers
& 66.1
& 53.7
& 41.3
& 36.0
& 74.0
& 62.4
& \(80.3{\pm}0.7\)
& \(80.13{\pm}1.06\)
& \(\mathbf{80.85{\pm}1.06}\) \\

Food containers
& 52.6
& 48.4
& 50.0
& 46.5
& \textbf{81.0}
& 76.2
& \(58.2{\pm}1.5\)
& \(66.80{\pm}3.91\)
& \(67.27{\pm}3.95\) \\

Fruit and vegetables
& 56.9
& 59.7
& 40.6
& 46.6
& 78.4
& 77.7
& \(80.1{\pm}1.3\)
& \(79.78{\pm}0.72\)
& \(\mathbf{80.36{\pm}0.62}\) \\

Household electrical devices
& 52.4
& 46.6
& 42.8
& 42.9
& 59.1
& \textbf{64.0}
& \(61.4{\pm}2.6\)
& \(61.56{\pm}1.92\)
& \(61.56{\pm}1.89\) \\

Household furniture
& 55.0
& 51.7
& 51.1
& 53.7
& 81.8
& \textbf{86.9}
& \(64.8{\pm}3.0\)
& \(69.64{\pm}1.51\)
& \(69.78{\pm}1.44\) \\

Insects
& 52.8
& 54.8
& 55.4
& 59.4
& 65.0
& 65.6
& \(70.5{\pm}1.3\)
& \(72.51{\pm}0.93\)
& \(\mathbf{73.16{\pm}1.37}\) \\

Large carnivores
& 53.2
& 66.7
& 59.2
& 63.7
& \textbf{85.5}
& 82.7
& \(62.1{\pm}2.1\)
& \(59.37{\pm}1.98\)
& \(59.59{\pm}1.97\) \\

Large man-made outdoor things
& 42.5
& 71.2
& 62.7
& 68.0
& \textbf{90.6}
& 90.2
& \(79.6{\pm}1.4\)
& \(81.53{\pm}1.19\)
& \(81.99{\pm}0.98\) \\

Large natural outdoor scenes
& 52.7
& 78.3
& 79.8
& 75.6
& \textbf{87.6}
& 85.9
& \(81.6{\pm}1.1\)
& \(83.89{\pm}1.02\)
& \(84.15{\pm}0.98\) \\

Large omnivores and herbivores
& 46.4
& 62.7
& 53.7
& 57.6
& \textbf{83.9}
& 83.5
& \(62.6{\pm}1.2\)
& \(61.04{\pm}0.84\)
& \(61.19{\pm}0.77\) \\

Medium-sized mammals
& 42.7
& 66.8
& 58.9
& 58.7
& 83.2
& \textbf{84.6}
& \(61.2{\pm}1.9\)
& \(61.21{\pm}1.06\)
& \(61.40{\pm}1.33\) \\

Non-insect invertebrates
& 45.4
& 52.6
& 57.4
& 59.9
& 58.0
& \textbf{67.6}
& \(58.6{\pm}2.0\)
& \(55.65{\pm}2.03\)
& \(55.95{\pm}2.18\) \\

People
& 57.2
& 44.0
& 39.4
& 43.9
& \textbf{92.1}
& 84.2
& \(65.4{\pm}4.5\)
& \(65.01{\pm}1.74\)
& \(65.11{\pm}1.75\) \\

Reptiles
& 48.8
& 56.8
& 55.6
& 59.9
& 68.3
& \textbf{74.1}
& \(56.1{\pm}1.5\)
& \(53.48{\pm}1.68\)
& \(53.77{\pm}1.67\) \\

Small mammals
& 54.4
& 63.1
& 63.3
& 64.4
& 73.5
& \textbf{80.3}
& \(62.3{\pm}1.1\)
& \(63.58{\pm}1.20\)
& \(63.94{\pm}1.27\) \\

Trees
& 36.4
& 73.0
& 66.7
& 71.8
& \textbf{93.8}
& 91.0
& \(80.9{\pm}1.8\)
& \(81.19{\pm}0.82\)
& \(81.31{\pm}0.69\) \\

Vehicles 1
& 52.4
& 57.7
& 44.3
& 54.9
& \textbf{90.7}
& 85.3
& \(65.2{\pm}1.4\)
& \(65.77{\pm}1.19\)
& \(66.20{\pm}1.45\) \\

Vehicles 2
& 50.3
& 55.5
& 53.0
& 56.8
& 85.0
& \textbf{85.4}
& \(63.6{\pm}1.6\)
& \(69.82{\pm}1.03\)
& \(69.85{\pm}1.04\) \\

\midrule
Average
& 50.5
& 58.8
& 54.7
& 56.5
& 78.7
& \textbf{78.8}
& 67.6
& 68.56
& 68.87 \\
\bottomrule
\end{tabular}%
}

\endgroup
\end{table}

%% file: tables/sensitivity.tex
\begin{table*}[t]
  \centering
  \caption{Effect of the Push--Pull margin scale \(\alpha\).
  AE is the reconstruction-only baseline obtained using the common TOLL
  backbone with its compactness coefficient set to \(\beta=0\).
  All AE-PP configurations use \(w=1\); \(\alpha=0\) retains Pull while
  disabling Push. Results are reported as mean \(\pm\) standard deviation.
  Bold indicates the highest mean in each row, and the dagger marks the
  value of \(\alpha\) selected on the validation set.}
  \label{tab:pp-alpha-sweep}

  \begingroup
  \scriptsize
  \setlength{\tabcolsep}{2.5pt}
  \renewcommand{\arraystretch}{1.05}

  \resizebox{\textwidth}{!}{%
  \begin{tabular}{@{}l c ccccccc@{}}
    \toprule
    &
    \multicolumn{1}{c}{Baseline} &
    \multicolumn{7}{c}{AE-PP (\(w=1\))} \\
    \cmidrule(lr){2-2}
    \cmidrule(lr){3-9}
    Dataset
    & AE (\(\beta=0\))
    & \(\alpha=0\)
    & \(\alpha=1\)
    & \(\alpha=2\)
    & \(\alpha=3\)
    & \(\alpha=4\)
    & \(\alpha=5\)
    & \(\alpha=6\) \\
    \midrule

    Arrhythmia (F1)
    & \(60.77{\pm}7.26\)
    & \(63.08{\pm}3.24\)
    & \(63.08{\pm}3.24\)
    & \(65.38{\pm}4.05\)
    & \(\mathbf{67.69{\pm}6.07}^{\dagger}\)
    & \(\mathbf{67.69{\pm}6.07}\)
    & \(\mathbf{67.69{\pm}6.07}\)
    & \(\mathbf{67.69{\pm}6.07}\) \\

    MNIST
    & \(94.74{\pm}4.65\)
    & \(98.54{\pm}1.38\)
    & \(99.04{\pm}0.77\)
    & \(99.37{\pm}0.44\)
    & \(\mathbf{99.38{\pm}0.42}^{\dagger}\)
    & \(99.32{\pm}0.45\)
    & \(99.28{\pm}0.48\)
    & \(99.26{\pm}0.50\) \\

    Fashion-MNIST
    & \(91.60{\pm}5.34\)
    & \(93.75{\pm}4.60\)
    & \(94.12{\pm}4.43\)
    & \(94.74{\pm}4.02\)
    & \(\mathbf{94.83{\pm}3.97}^{\dagger}\)
    & \(94.74{\pm}3.99\)
    & \(94.67{\pm}3.94\)
    & \(94.69{\pm}3.95\) \\

    CIFAR-10
    & \(68.96{\pm}7.29\)
    & \(69.32{\pm}7.36\)
    & \(69.48{\pm}7.13\)
    & \(\mathbf{71.30{\pm}6.73}^{\dagger}\)
    & \(71.16{\pm}6.81\)
    & \(71.05{\pm}6.78\)
    & \(71.23{\pm}6.97\)
    & \(71.29{\pm}7.06\) \\

    CIFAR-100
    & \(65.42{\pm}9.09\)
    & \(66.78{\pm}9.26\)
    & \(67.18{\pm}9.17\)
    & \(68.56{\pm}8.91^{\dagger}\)
    & \(68.45{\pm}8.98\)
    & \(68.59{\pm}8.69\)
    & \(\mathbf{68.69{\pm}8.53}\)
    & \(68.57{\pm}8.72\) \\

    \bottomrule
  \end{tabular}%
  }

  \endgroup
\end{table*}

%% file: tables/image.tex
\begin{table}[h]
\centering
\caption{Detection of unseen image degradations. The clean reference defines
the nominal class: paired clean images (Own GT) are used when available;
otherwise, a fixed set of clean DIV8K crops is used. AUC-ROC and PR-AUC are
computed using reconstruction error as the anomaly score. Bold indicates the
best result in each row and metric.}
\label{tab:four-way}

\begingroup
\small
\setlength{\tabcolsep}{3pt}
\renewcommand{\arraystretch}{1.0}

\begin{tabular}{@{}llr rr rr rr rr@{}}
\toprule
\multirow{2}{*}{Dataset} &
\multirow{2}{*}{\shortstack{Clean\\reference}} &
\multirow{2}{*}{\shortstack{\#\\Images}} &
\multicolumn{2}{c}{AE} &
\multicolumn{2}{c}{VAE} &
\multicolumn{2}{c}{AE-PP} &
\multicolumn{2}{c}{AE-PoS} \\
\cmidrule(lr){4-5}
\cmidrule(lr){6-7}
\cmidrule(lr){8-9}
\cmidrule(lr){10-11}
& & &
ROC & PR &
ROC & PR &
ROC & PR &
ROC & PR \\
\midrule

\multicolumn{11}{@{}l}{\textit{Additive noise}} \\
Kodak24 ($\sigma=25$)
& Own GT & 24
& \textbf{1.0000} & \textbf{1.0000}
& 0.9983 & 0.9983
& \textbf{1.0000} & \textbf{1.0000}
& \textbf{1.0000} & \textbf{1.0000} \\
\addlinespace[2pt]

\multicolumn{11}{@{}l}{\textit{Blur}} \\
REDS blur 0.2
& Own GT & 200
& \textbf{0.7268} & \textbf{0.6589}
& 0.3552 & 0.4024
& 0.6083 & 0.5725
& 0.6002 & 0.5766 \\

REDS blur 0.4
& Own GT & 200
& 0.6581 & 0.5683
& 0.2168 & 0.3484
& 0.7597 & 0.7239
& \textbf{0.7808} & \textbf{0.7574} \\

REDS blur 0.6
& Own GT & 200
& 0.6726 & 0.5699
& 0.1111 & 0.3245
& 0.9211 & 0.9222
& \textbf{0.9554} & \textbf{0.9567} \\

REDS blur 0.8
& Own GT & 200
& 0.7703 & 0.6713
& 0.0528 & 0.3151
& 0.9920 & 0.9930
& \textbf{0.9972} & \textbf{0.9975} \\

REDS blur 1.0
& Own GT & 200
& 0.5724 & 0.5215
& 0.0346 & 0.3119
& 0.9985 & 0.9986
& \textbf{0.9996} & \textbf{0.9996} \\

GoPro
& Own GT & 200
& 0.4114 & 0.4217
& 0.0569 & 0.3137
& 0.9471 & 0.9568
& \textbf{0.9846} & \textbf{0.9862} \\

GoPro
& DIV8K & 200
& 0.1621 & 0.3378
& 0.1018 & 0.3231
& 0.9983 & 0.9982
& \textbf{0.9997} & \textbf{0.9997} \\
\addlinespace[2pt]

\multicolumn{11}{@{}l}{\textit{Rain}} \\
Test100
& DIV8K & 98
& 0.3284 & 0.3863
& 0.7842 & 0.6766
& 0.9481 & 0.9418
& \textbf{0.9594} & \textbf{0.9470} \\

Rain100L
& DIV8K & 100
& 0.4709 & 0.4461
& 0.7467 & 0.6432
& 0.8708 & 0.8366
& \textbf{0.8925} & \textbf{0.8479} \\

Rain100H
& DIV8K & 100
& 0.8196 & 0.7307
& 0.9051 & 0.8269
& 0.9436 & 0.9131
& \textbf{0.9570} & \textbf{0.9226} \\

Test1200
& DIV8K & 200
& 0.2025 & 0.3458
& 0.7151 & 0.5944
& 0.9885 & 0.9864
& \textbf{0.9919} & \textbf{0.9881} \\

Test2800
& DIV8K & 200
& 0.3801 & 0.4016
& 0.8017 & 0.6917
& 0.9798 & 0.9764
& \textbf{0.9879} & \textbf{0.9825} \\
\addlinespace[2pt]

\multicolumn{11}{@{}l}{\textit{Underwater}} \\
Underwater high
& DIV8K & 144
& 0.3673 & 0.4010
& 0.5254 & 0.4684
& 0.9846 & 0.9848
& \textbf{0.9891} & \textbf{0.9888} \\

Underwater medium
& DIV8K & 143
& 0.3555 & 0.3974
& 0.5308 & 0.4769
& 0.9750 & 0.9753
& \textbf{0.9866} & \textbf{0.9857} \\

Underwater low
& DIV8K & 143
& 0.1630 & 0.3359
& 0.2918 & 0.3729
& 0.9412 & 0.9246
& \textbf{0.9798} & \textbf{0.9677} \\
\bottomrule
\end{tabular}

\endgroup
\end{table}

%% file: tables/image_psnr_full.tex
\begin{table}[t]
\centering
\caption{Average reconstruction PSNR (dB), computed between each input \(s\)
and its reconstruction \(\hat{s}\), for clean and degraded images. Higher PSNR
indicates lower reconstruction error. The \# Images column reports the number
of images in each evaluation setting. ``Own GT'' denotes the paired clean
ground truth associated with each degraded image, whereas ``DIV8K'' denotes
the fixed clean DIV8K evaluation subset.}
\label{tab:reconstruction_psnr_full}

\begingroup
\small
\setlength{\tabcolsep}{4.2pt}
\renewcommand{\arraystretch}{1.0}

\resizebox{\columnwidth}{!}{%
\begin{tabular}{@{}llr rr rr rr rr@{}}
\toprule
\multirow{2}{*}{Dataset}
& \multirow{2}{*}{\shortstack{Clean\\reference}}
& \multirow{2}{*}{\shortstack{\#\\Images}}
& \multicolumn{2}{c}{AE}
& \multicolumn{2}{c}{VAE}
& \multicolumn{2}{c}{AE-PP}
& \multicolumn{2}{c}{AE-PoS} \\
\cmidrule(lr){4-5}
\cmidrule(lr){6-7}
\cmidrule(lr){8-9}
\cmidrule(lr){10-11}
& & &
Clean & Degr.
& Clean & Degr.
& Clean & Degr.
& Clean & Degr. \\
\midrule

\addlinespace[1pt]
\multicolumn{11}{@{}l}{\itshape Additive noise} \\
Kodak24 (\(\sigma=25\))
& Own GT & 24
& 57.15 & 26.49
& 26.49 & 18.86
& 44.58 & 23.64
& 43.65 & 23.55 \\

\addlinespace[1pt]
\multicolumn{11}{@{}l}{\itshape Synthetic blur} \\
REDS blur 0.2
& Own GT & 200
& 63.04 & 62.06
& 27.59 & 29.11
& 43.42 & 42.55
& 43.07 & 42.38 \\
REDS blur 0.4
& Own GT & 200
& 63.04 & 62.41
& 27.59 & 30.84
& 43.42 & 41.00
& 43.07 & 40.88 \\
REDS blur 0.6
& Own GT & 200
& 63.04 & 62.37
& 27.59 & 32.85
& 43.42 & 38.12
& 43.07 & 37.97 \\
REDS blur 0.8
& Own GT & 200
& 63.04 & 61.95
& 27.59 & 35.22
& 43.42 & 33.26
& 43.07 & 33.60 \\
REDS blur 1.0
& Own GT & 200
& 63.04 & 62.72
& 27.59 & 37.01
& 43.42 & 30.51
& 43.07 & 31.79 \\

\addlinespace[1pt]
\multicolumn{11}{@{}l}{\itshape Motion blur} \\
GoPro
& Own GT & 200
& 62.66 & 63.12
& 31.15 & 39.17
& 33.80 & 28.11
& 34.50 & 28.55 \\
GoPro
& DIV8K & 200
& 59.14 & 63.12
& 29.58 & 39.17
& 44.28 & 28.11
& 43.54 & 28.55 \\

\addlinespace[1pt]
\multicolumn{11}{@{}l}{\itshape Rain} \\
Test100
& DIV8K & 98
& 58.43 & 61.11
& 29.86 & 23.51
& 44.59 & 34.35
& 43.53 & 34.58 \\
Rain100L
& DIV8K & 100
& 58.73 & 59.85
& 30.53 & 24.54
& 44.89 & 38.60
& 43.77 & 38.30 \\
Rain100H
& DIV8K & 100
& 58.73 & 54.50
& 30.53 & 20.75
& 44.89 & 37.08
& 43.77 & 36.77 \\
Test1200
& DIV8K & 200
& 59.41 & 63.13
& 30.33 & 25.31
& 44.67 & 31.72
& 43.77 & 32.80 \\
Test2800
& DIV8K & 200
& 59.44 & 60.93
& 30.22 & 23.47
& 44.58 & 32.12
& 43.69 & 32.90 \\

\addlinespace[1pt]
\multicolumn{11}{@{}l}{\itshape Underwater} \\
Underwater high
& DIV8K & 144
& 58.91 & 61.03
& 29.64 & 29.01
& 44.19 & 29.61
& 43.43 & 30.76 \\
Underwater medium
& DIV8K & 143
& 58.91 & 61.00
& 29.53 & 29.01
& 44.10 & 30.07
& 43.42 & 31.06 \\
Underwater low
& DIV8K & 143
& 58.91 & 63.02
& 29.53 & 33.71
& 44.10 & 35.30
& 43.42 & 33.94 \\
\bottomrule
\end{tabular}%
}

\endgroup
\end{table}

%% file: figs/fig_underwater.tex
\ifdefined\wsm\else
  \newlength{\wsm}\newlength{\wbg}\newlength{\wgap}\newlength{\wsh}\newlength{\wbh}
  \newlength{\wlab}\newlength{\wlabn}
  \newcommand{\wS}[1]{\includegraphics[width=\wsm]{#1}}
  \newcommand{\wB}[1]{\includegraphics[width=\wbg]{#1}}
  \newcommand{\wSv}[2]{\begin{tikzpicture}[inner sep=0]
      \node (im) {\wS{#1}};
      \node[anchor=south east, inner sep=1.2pt, fill=white, font=\footnotesize]
        at (im.south east) {#2};
    \end{tikzpicture}}
  \newcommand{\wrl}[1]{\makebox[\wlab][c]{\rotatebox{90}{\makebox[\wsh][c]{\large #1}}}}
  \newcommand{\wrb}[1]{\makebox[\wlab][c]{\rotatebox{90}{\makebox[\wbh][c]{\normalsize #1}}}}
  \newcommand{\wrw}[1]{\makebox[\wlabn][c]{\rotatebox{90}{\makebox[\wsh][c]{\normalsize #1}}}}
  \newcommand{\wrn}[1]{\makebox[\wlabn][c]{\smash{\rotatebox{90}{%
      \makebox[\dimexpr 2\wsh + \wgap\relax][c]{\normalsize #1}}}}}
  \newcommand{\wrB}[1]{\makebox[\wlab][c]{\rotatebox{90}{\makebox[\wbh][c]{\large #1}}}}
\fi
\setlength{\wgap}{2pt}
\setlength{\wlab}{1.6em}
\setlength{\wsm}{\dimexpr (\textwidth - \wlab - 6.0000\wgap) / 6\relax}
\setlength{\wbg}{\dimexpr 2\wsm + \wgap\relax}
\setlength{\wsh}{0.7478\wsm}\setlength{\wbh}{0.7478\wbg}
\begin{figure}[t]
\begin{center}
  \setlength{\tabcolsep}{0.5\wgap}\renewcommand{\arraystretch}{0}
  \begin{tabular}{@{}c cc cc cc@{}}
     & \multicolumn{2}{@{}c@{}}{\normalsize low} & \multicolumn{2}{@{}c@{}}{\normalsize medium} & \multicolumn{2}{@{}c@{}}{\normalsize high} \\[2pt]
     \wrB{$s$} & \multicolumn{2}{@{}c@{}}{\wB{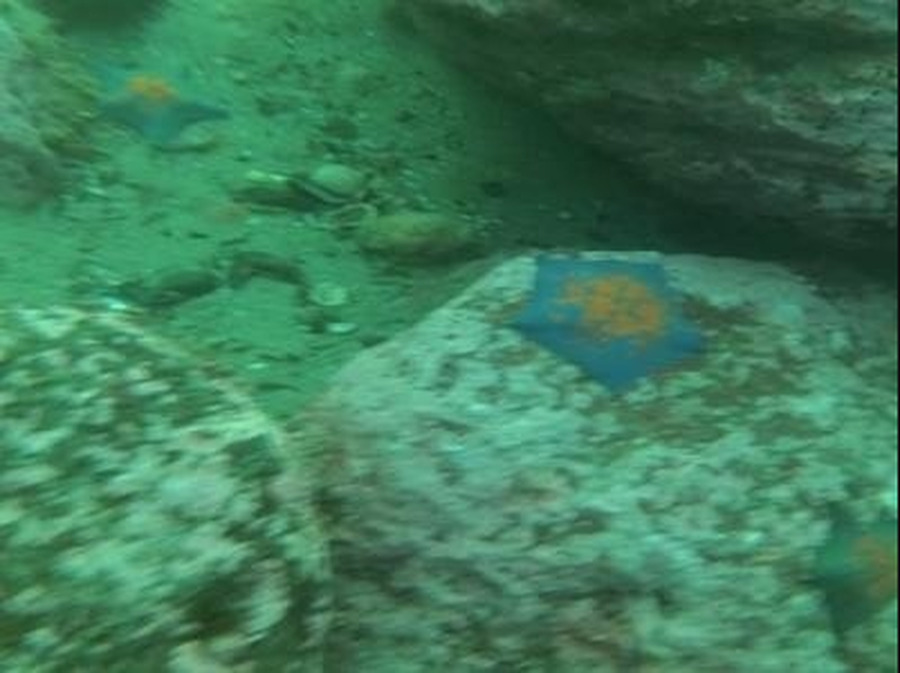}} & \multicolumn{2}{@{}c@{}}{\wB{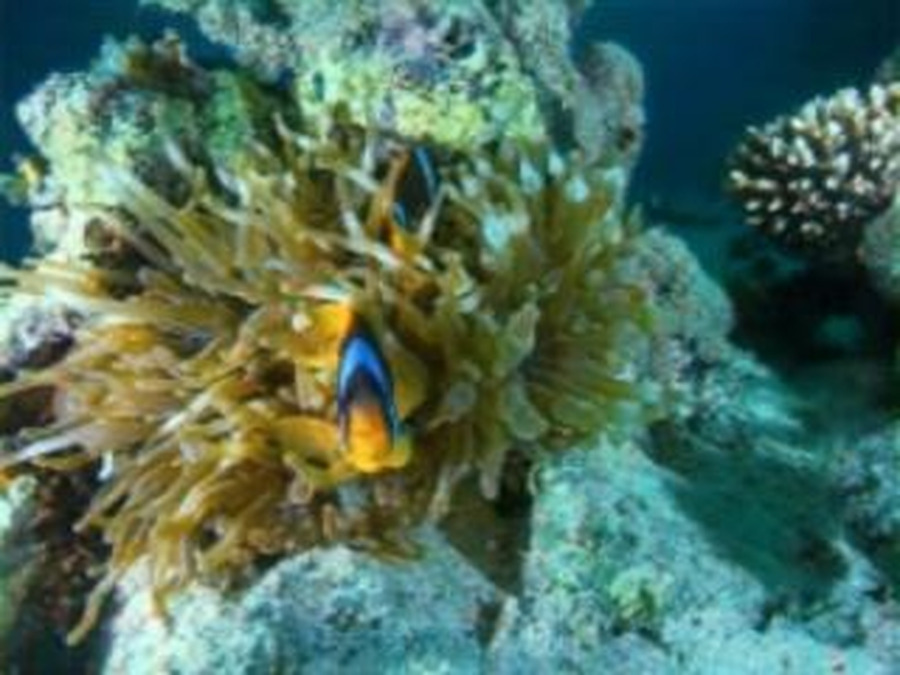}} & \multicolumn{2}{@{}c@{}}{\wB{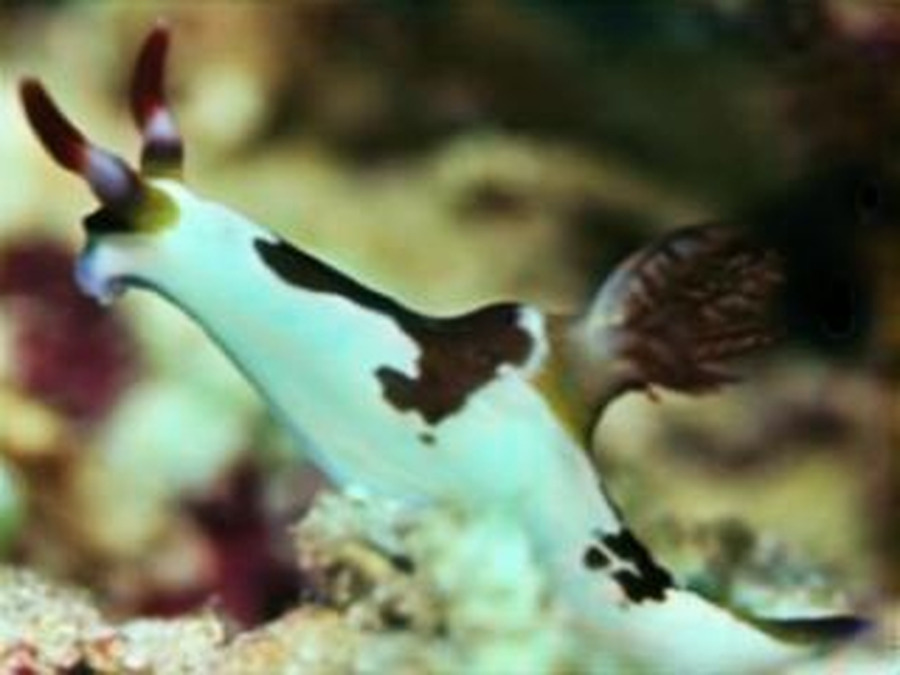}} \\[3pt]
     & {\large $\hat{s}$} & {\large $|s-\hat{s}|$} & {\large $\hat{s}$} & {\large $|s-\hat{s}|$} & {\large $\hat{s}$} & {\large $|s-\hat{s}|$} \\[6pt]
     \wrw{AE} & \wSv{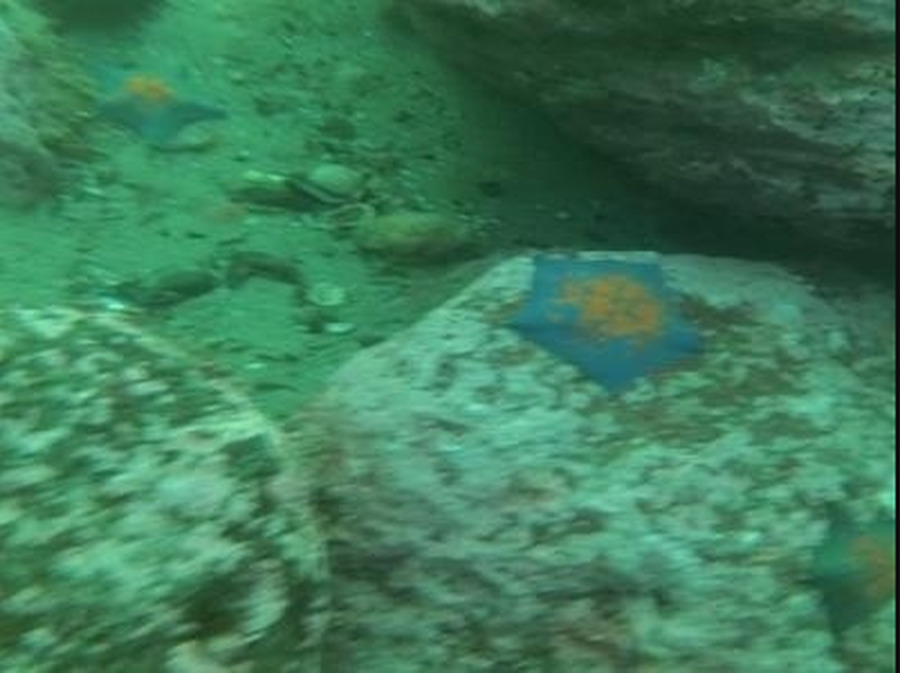}{61.8\,dB} & \wS{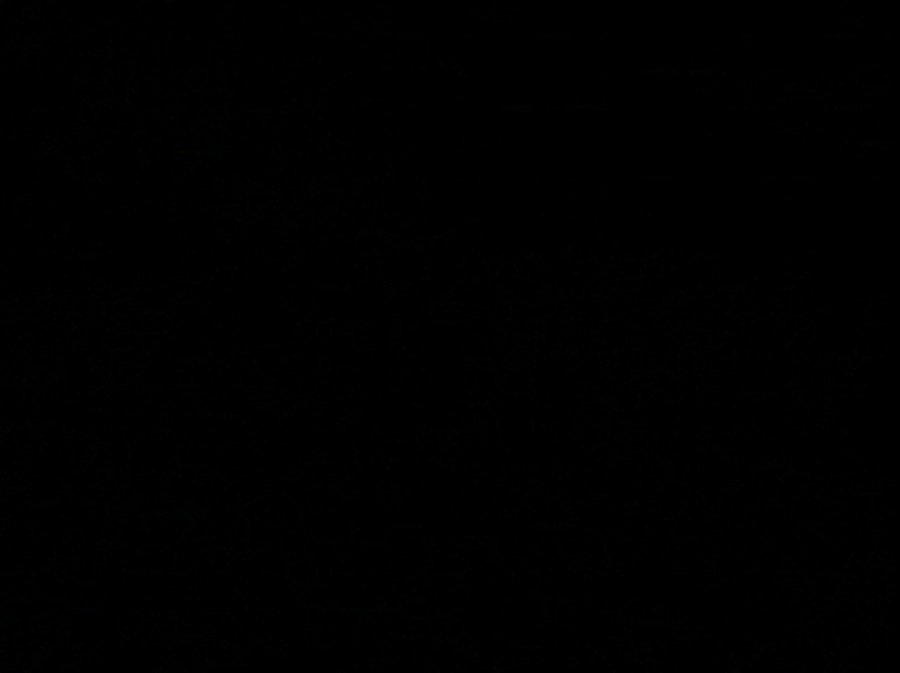} & \wSv{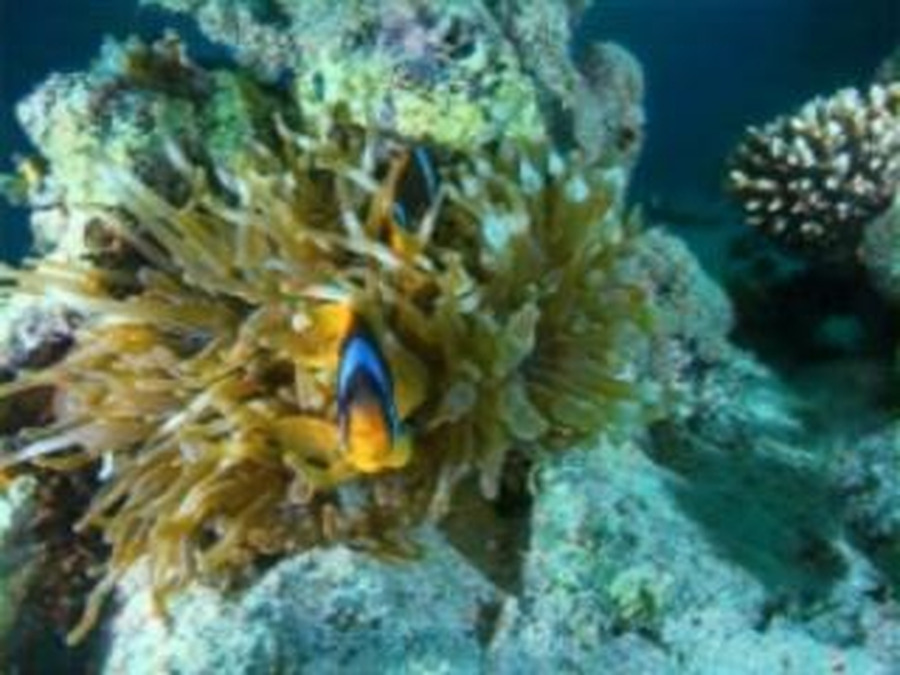}{59.8\,dB} & \wS{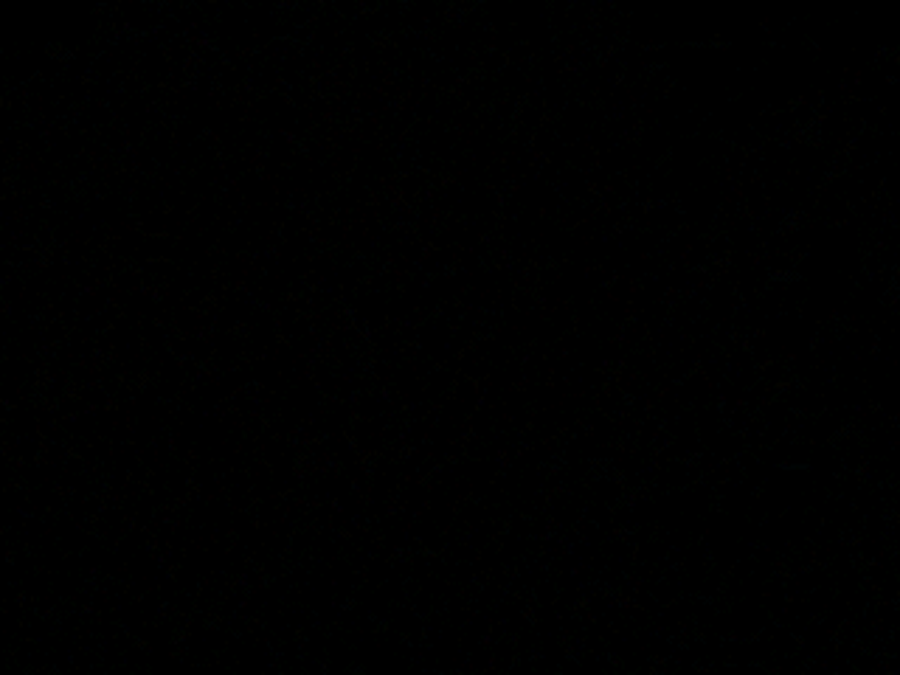} & \wSv{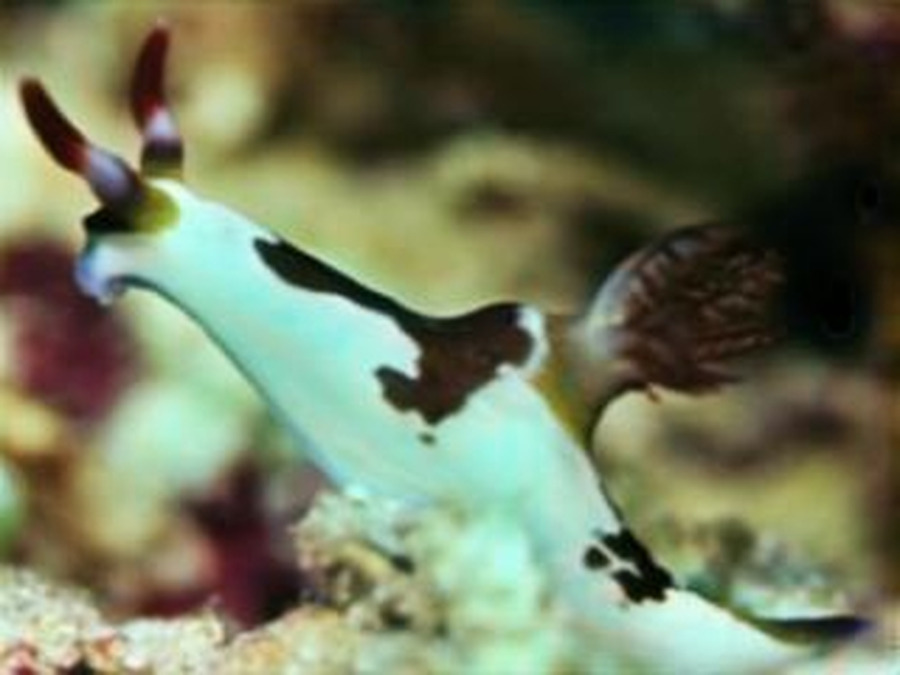}{60.6\,dB} & \wS{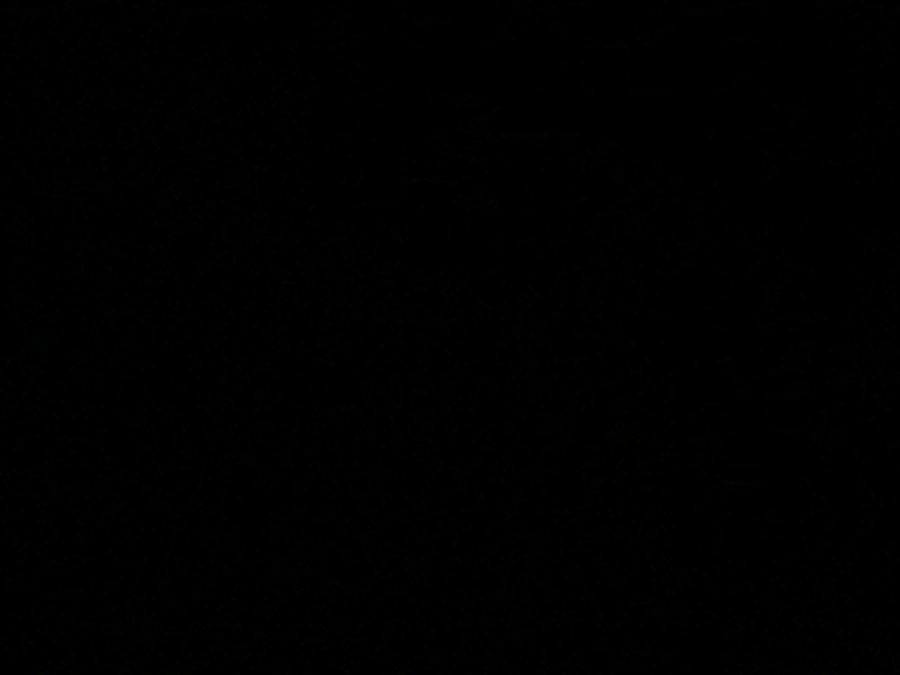} \\[\wgap]
     \wrw{VAE} & \wSv{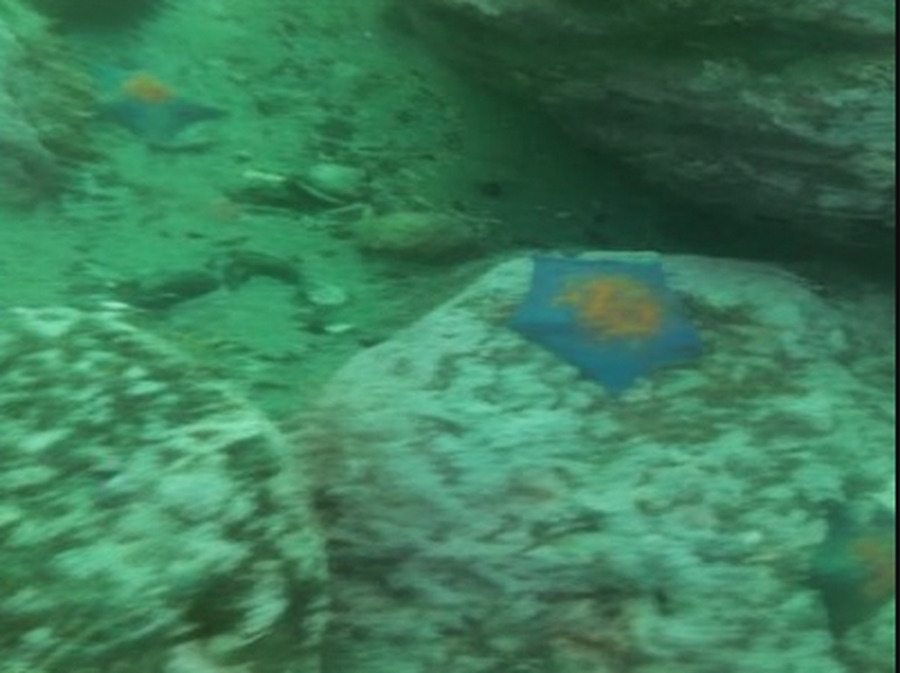}{31.4\,dB} & \wS{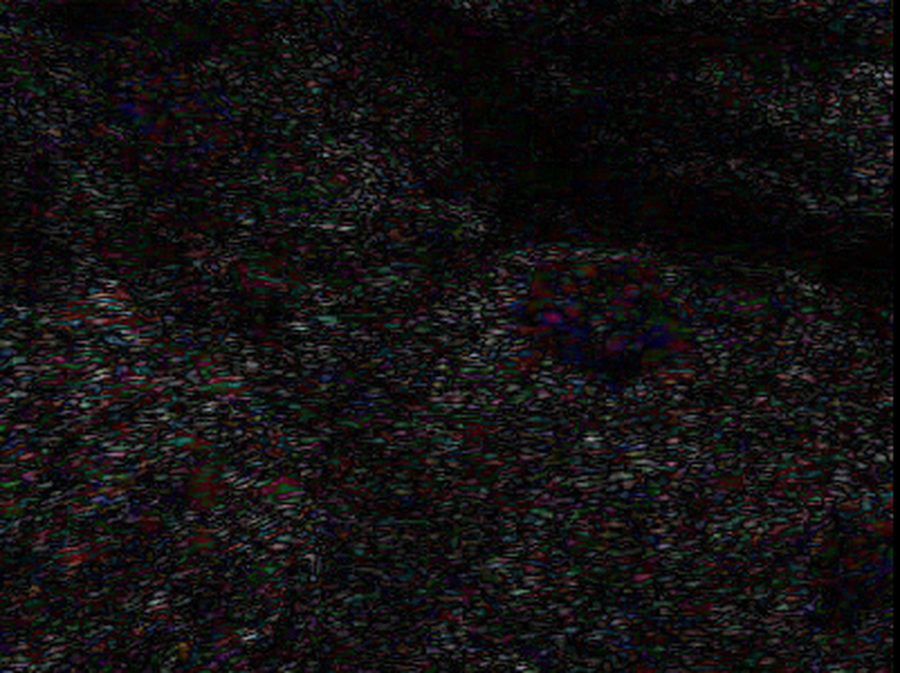} & \wSv{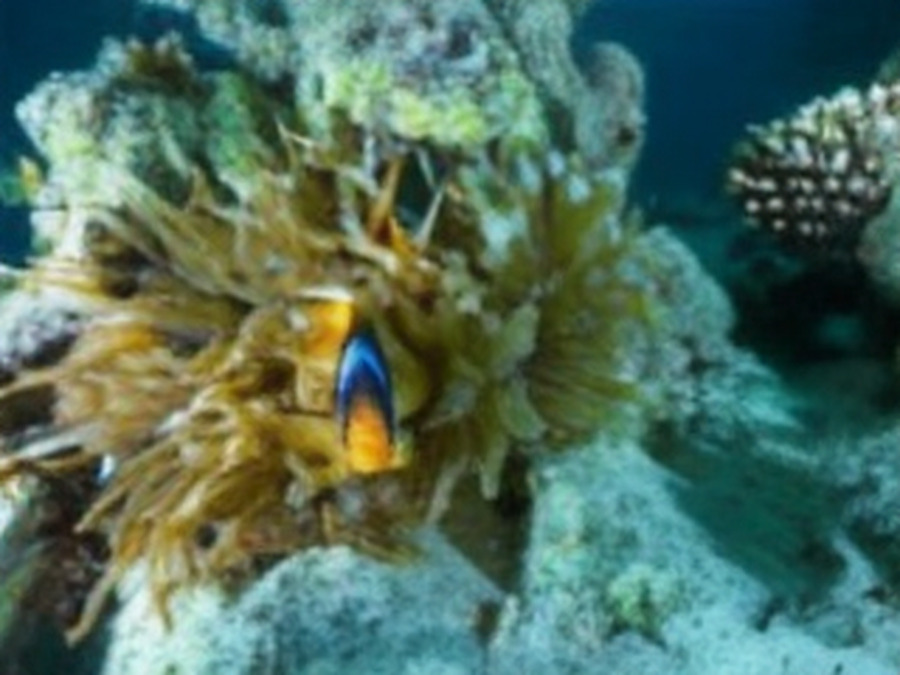}{24.7\,dB} & \wS{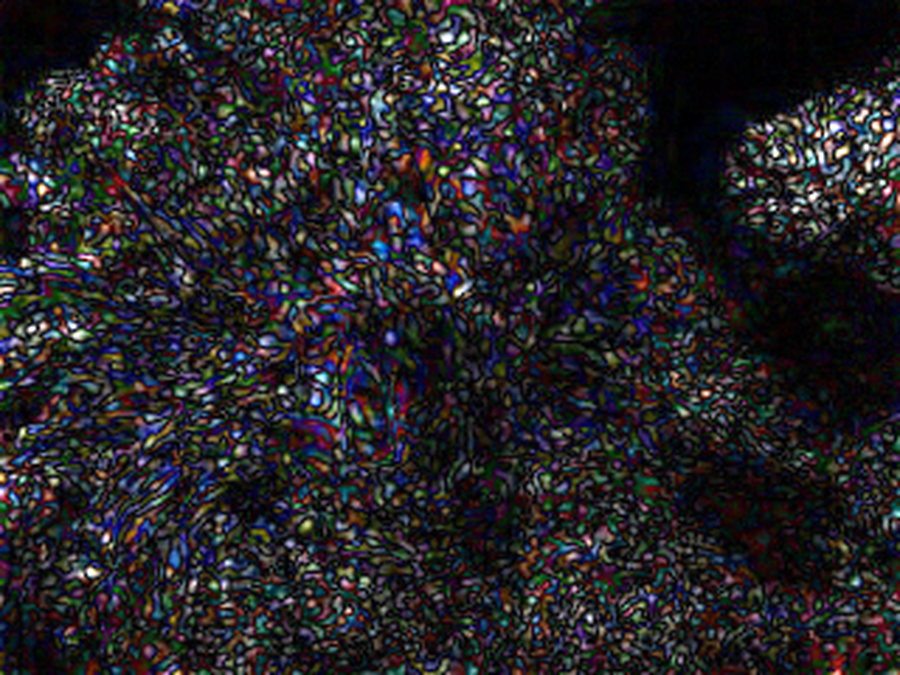} & \wSv{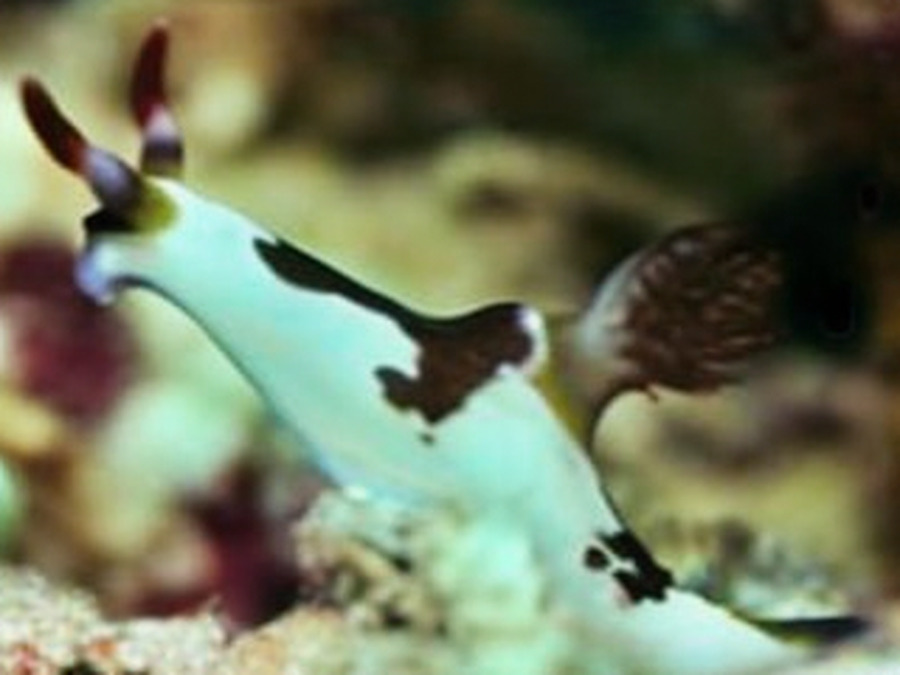}{33.3\,dB} & \wS{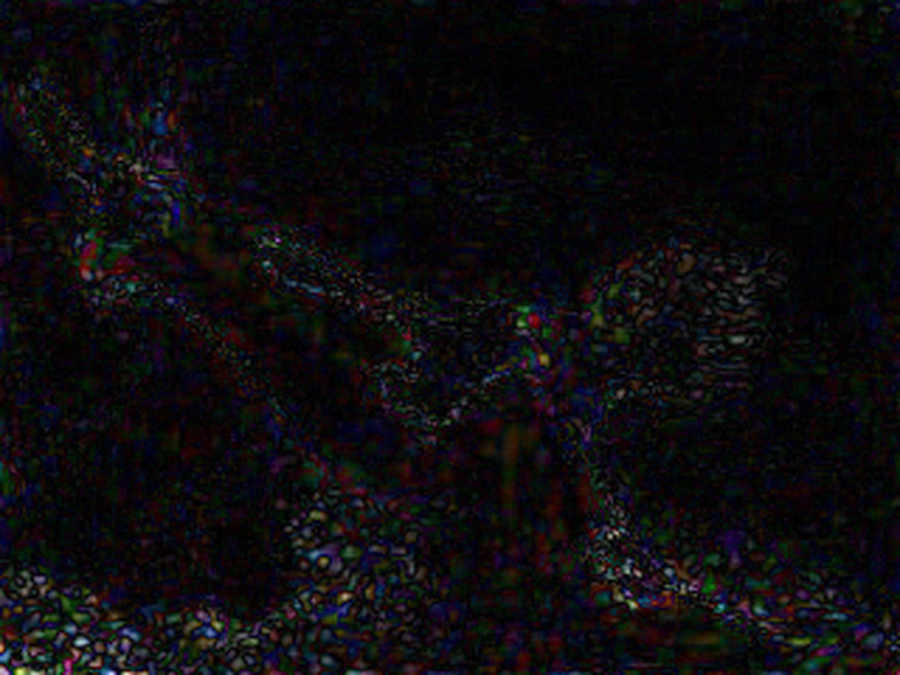} \\[\wgap]
     \wrw{AE-PoS} & \wSv{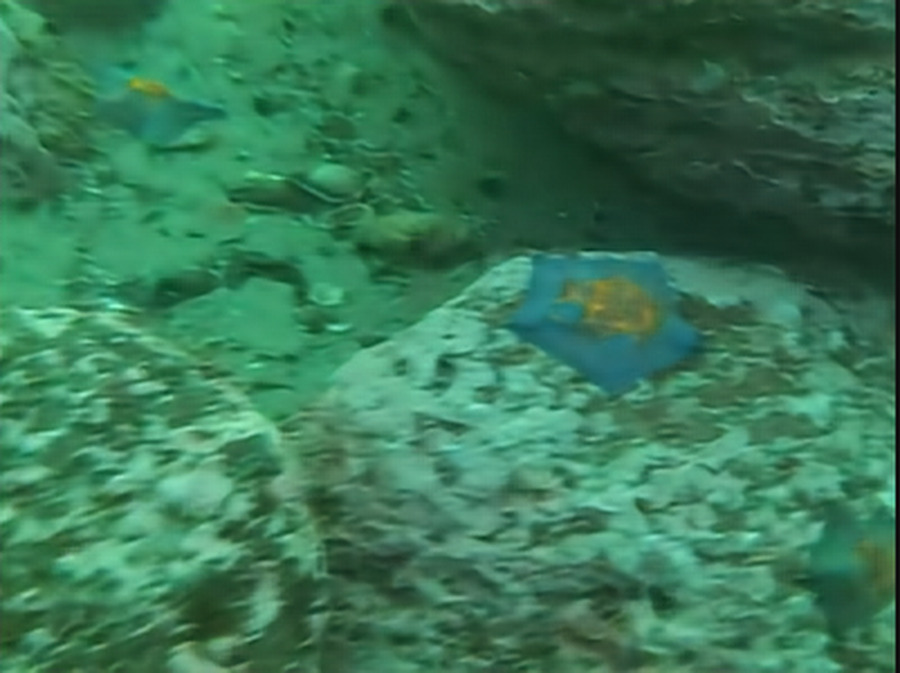}{29.7\,dB} & \wS{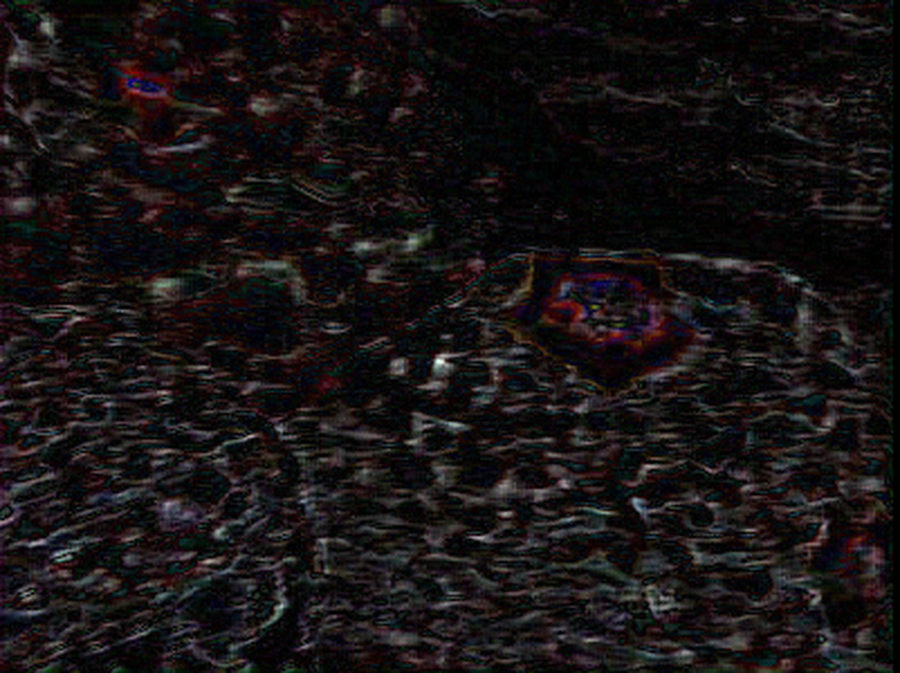} & \wSv{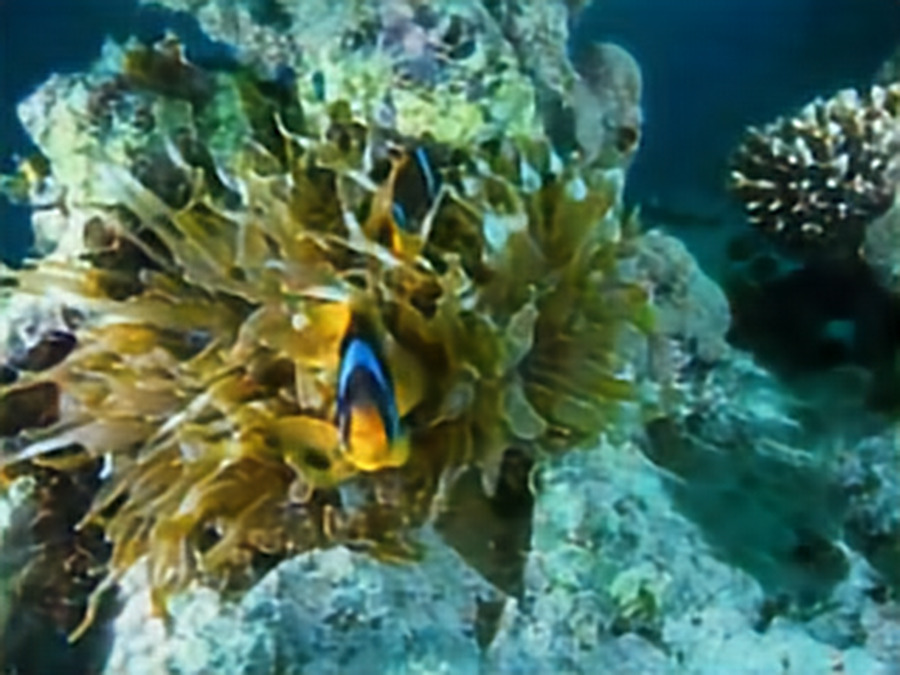}{26.8\,dB} & \wS{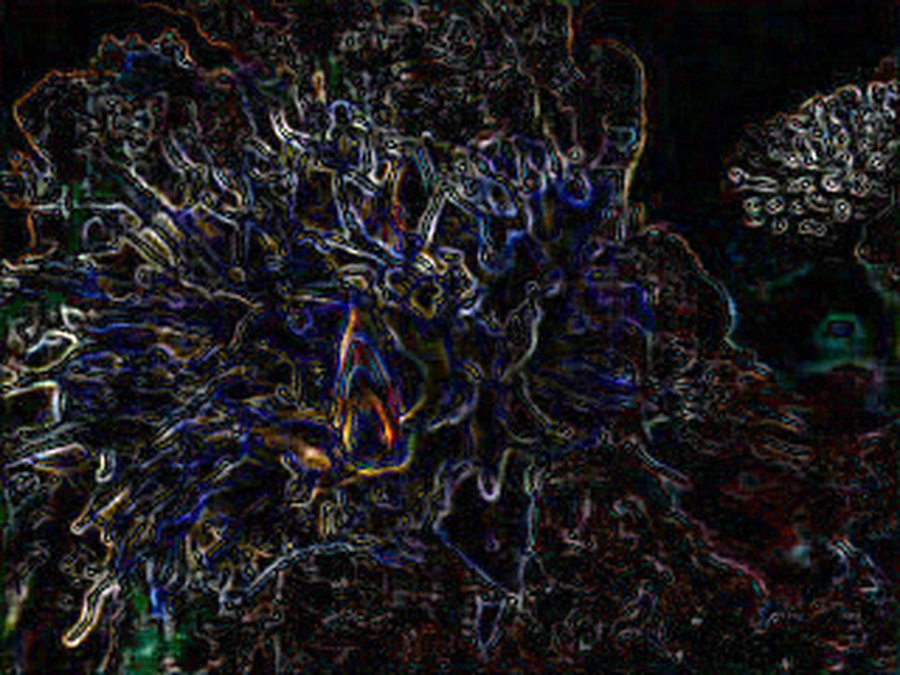} & \wSv{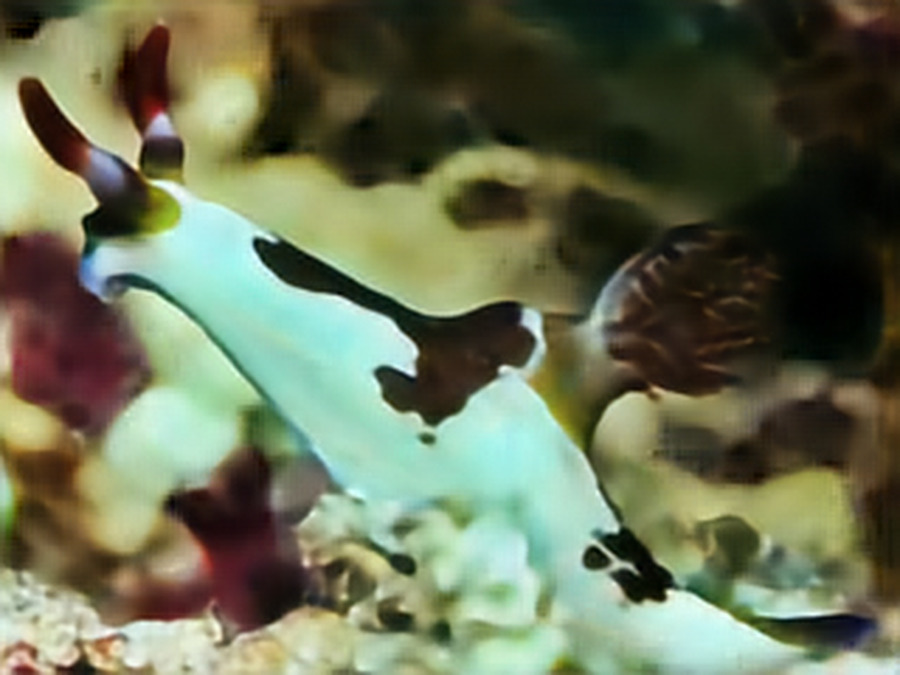}{24.1\,dB} & \wS{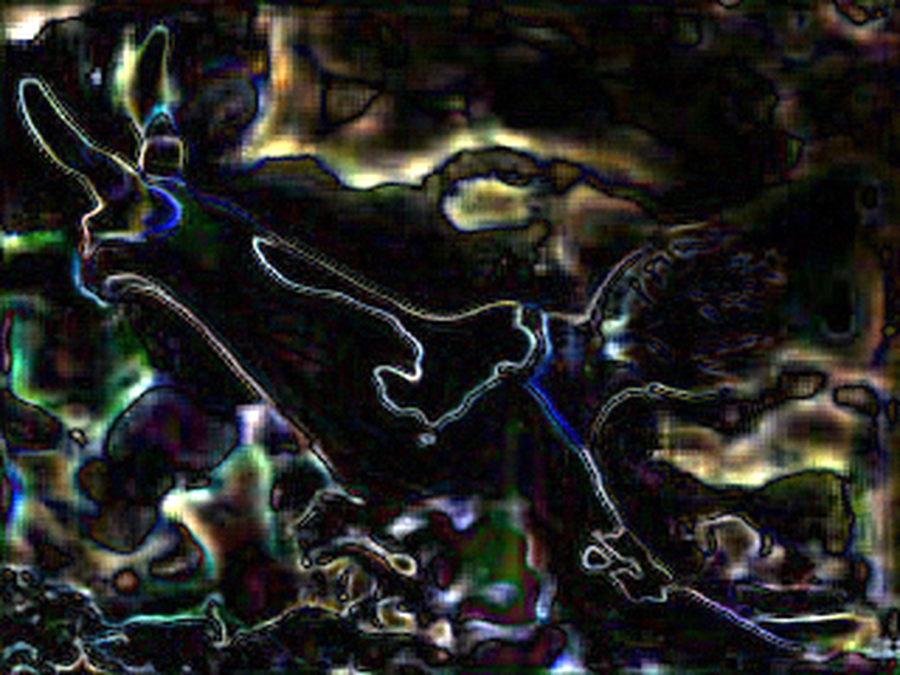} \\
  \end{tabular}
\end{center}
\caption{Underwater degradation at three severities (columns). Top row: the input $s$; below it each network's reconstruction $\hat{s}$ with its PSNR and the absolute difference $|s-\hat{s}|$. All difference panels share one intensity gain.}
\label{fig:underwater}
\end{figure}

%% file: figs/fig_blurreds.tex
\ifdefined\wsm\else
  \newlength{\wsm}\newlength{\wbg}\newlength{\wgap}\newlength{\wsh}\newlength{\wbh}
  \newlength{\wlab}\newlength{\wlabn}
  \newcommand{\wS}[1]{\includegraphics[width=\wsm]{#1}}
  \newcommand{\wB}[1]{\includegraphics[width=\wbg]{#1}}
  \newcommand{\wSv}[2]{\begin{tikzpicture}[inner sep=0]
      \node (im) {\wS{#1}};
      \node[anchor=south east, inner sep=1.2pt, fill=white, font=\footnotesize]
        at (im.south east) {#2};
    \end{tikzpicture}}
  \newcommand{\wrl}[1]{\makebox[\wlab][c]{\rotatebox{90}{\makebox[\wsh][c]{\large #1}}}}
  \newcommand{\wrb}[1]{\makebox[\wlab][c]{\rotatebox{90}{\makebox[\wbh][c]{\normalsize #1}}}}
  \newcommand{\wrw}[1]{\makebox[\wlabn][c]{\rotatebox{90}{\makebox[\wsh][c]{\normalsize #1}}}}
  \newcommand{\wrn}[1]{\makebox[\wlabn][c]{\smash{\rotatebox{90}{%
      \makebox[\dimexpr 2\wsh + \wgap\relax][c]{\normalsize #1}}}}}
  \newcommand{\wrB}[1]{\makebox[\wlab][c]{\rotatebox{90}{\makebox[\wbh][c]{\large #1}}}}
\fi
\setlength{\wgap}{2pt}
\setlength{\wlab}{1.3em}
\setlength{\wlabn}{1.4em}
\setlength{\wsm}{\dimexpr (\textwidth - \wlab - \wlabn - 5.0000\wgap) / 5\relax}
\setlength{\wsh}{0.5622\wsm}
\begin{figure}[t]
\begin{center}
  \setlength{\tabcolsep}{0.5\wgap}\renewcommand{\arraystretch}{0}
 \begin{tabular}{@{}cc ccccc@{}}
     & & {\normalsize $0.2$} & {\normalsize $0.4$} & {\normalsize $0.6$} & {\normalsize $0.8$} & {\normalsize $1.0$} \\[2pt]
     & \wrl{$s$} & \wS{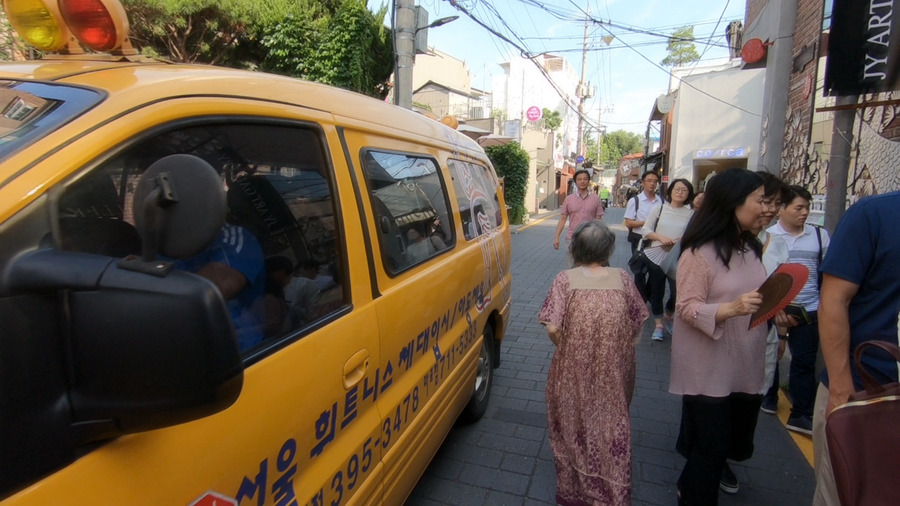} & \wS{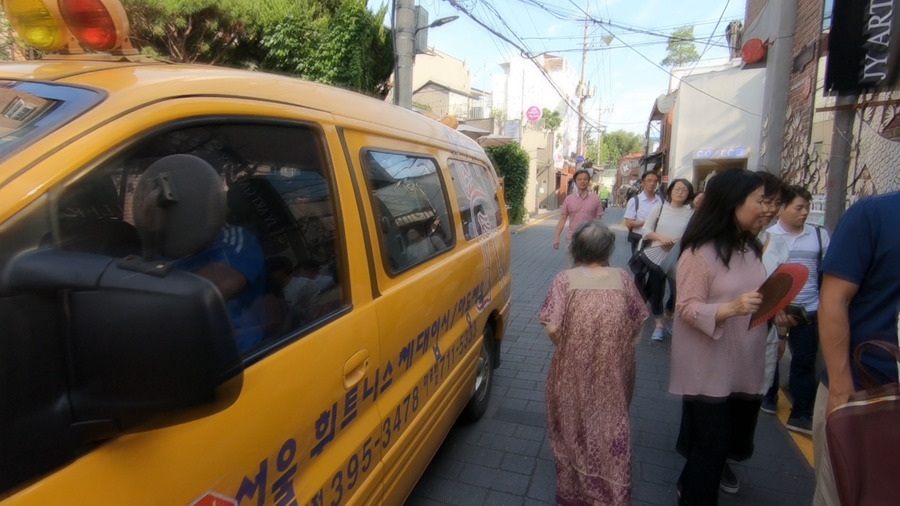} & \wS{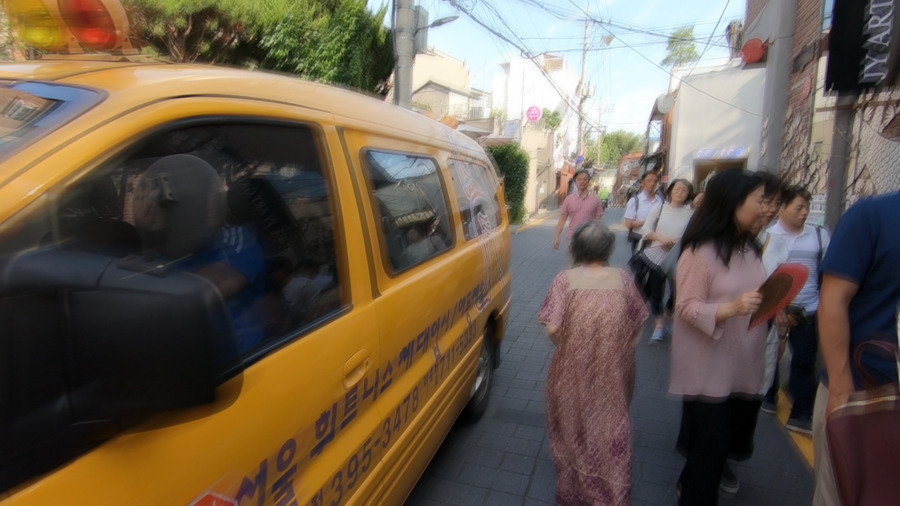} & \wS{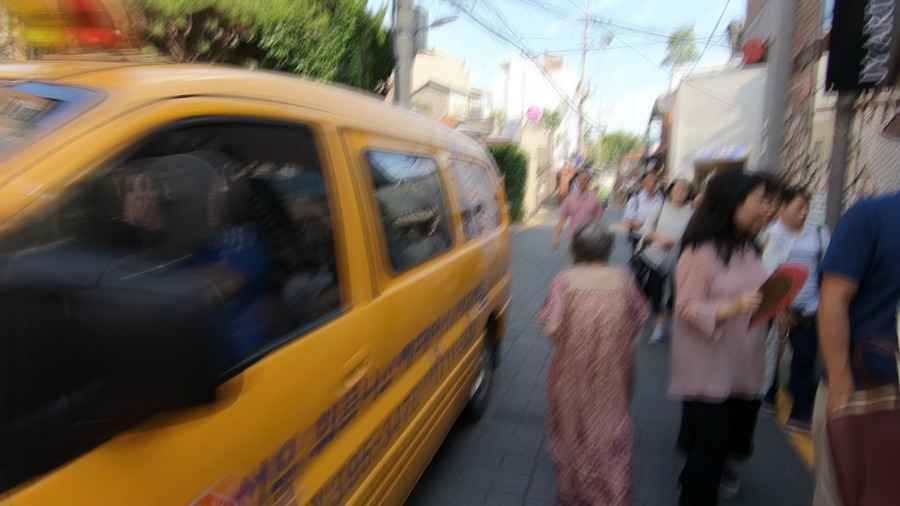} & \wS{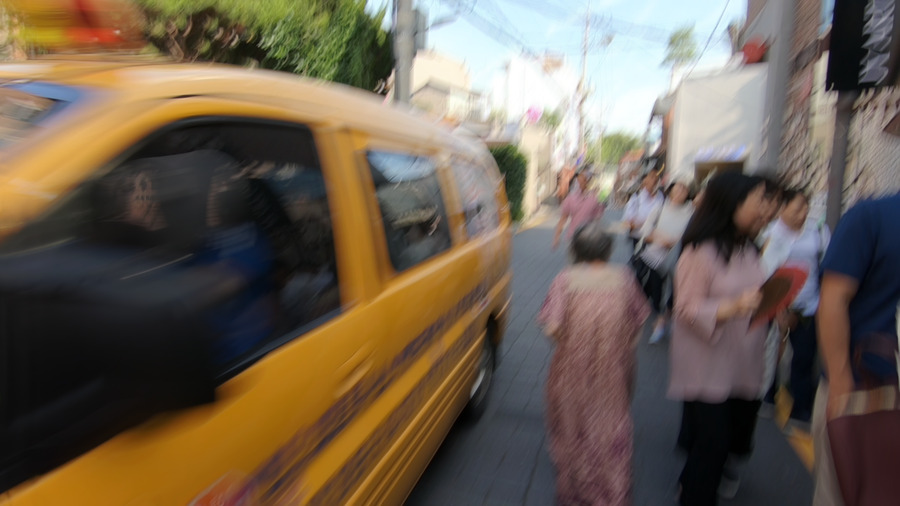} \\[\wgap]
     & \wrl{$\hat{s}$} & \wSv{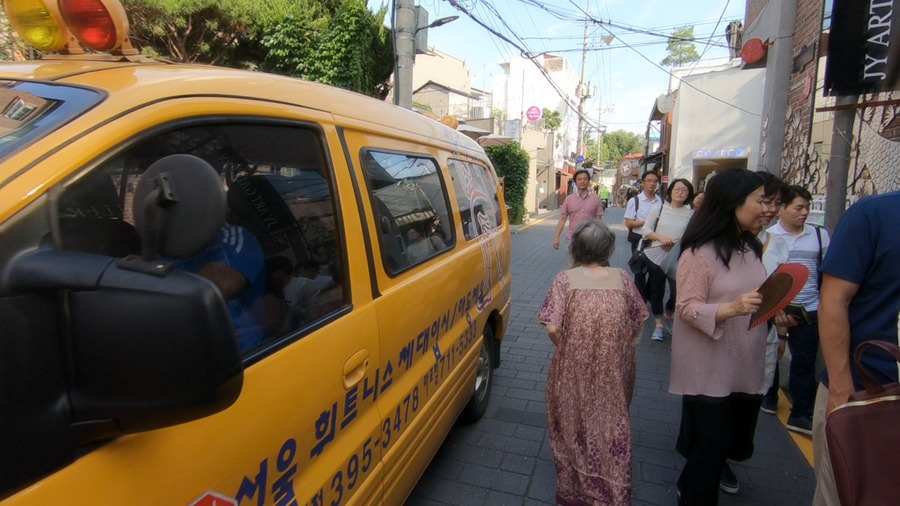}{61.8\,dB} & \wSv{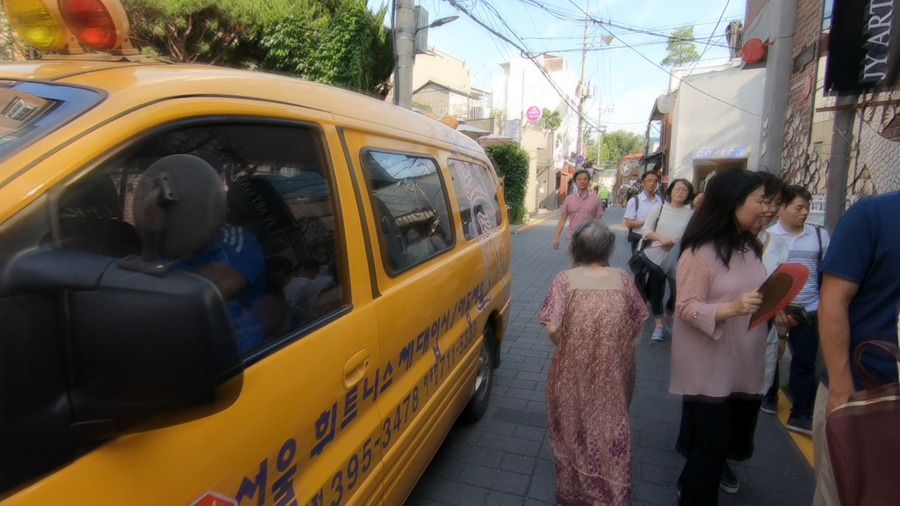}{62.3\,dB} & \wSv{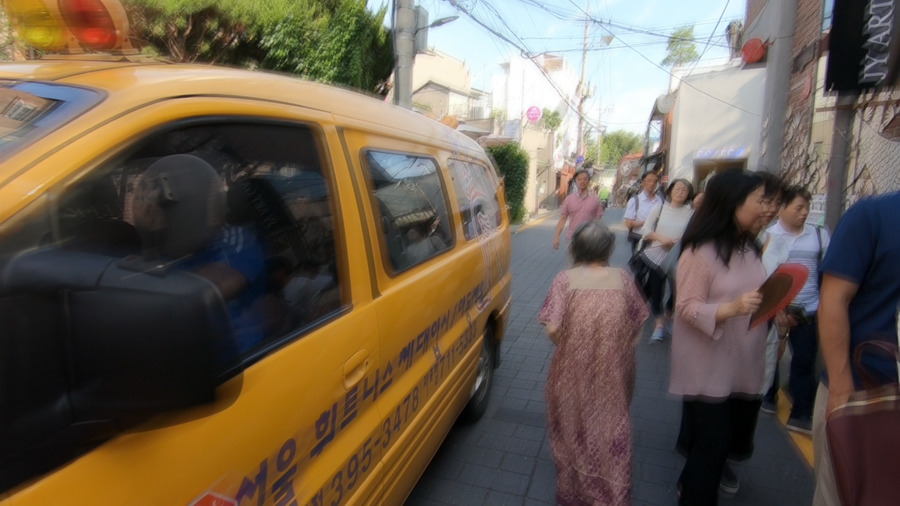}{62.4\,dB} & \wSv{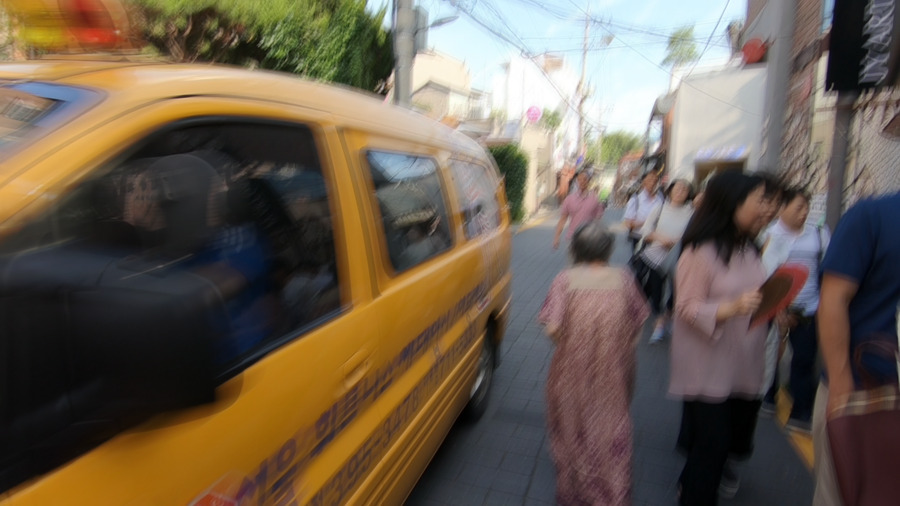}{62.0\,dB} & \wSv{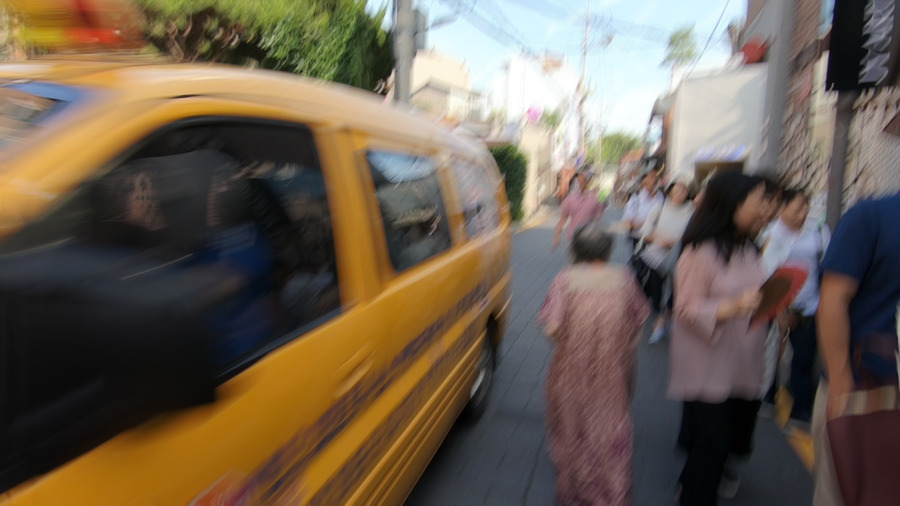}{62.8\,dB} \\[\wgap]
     \wrn{AE} & \wrl{$s-\hat{s}$} & \wS{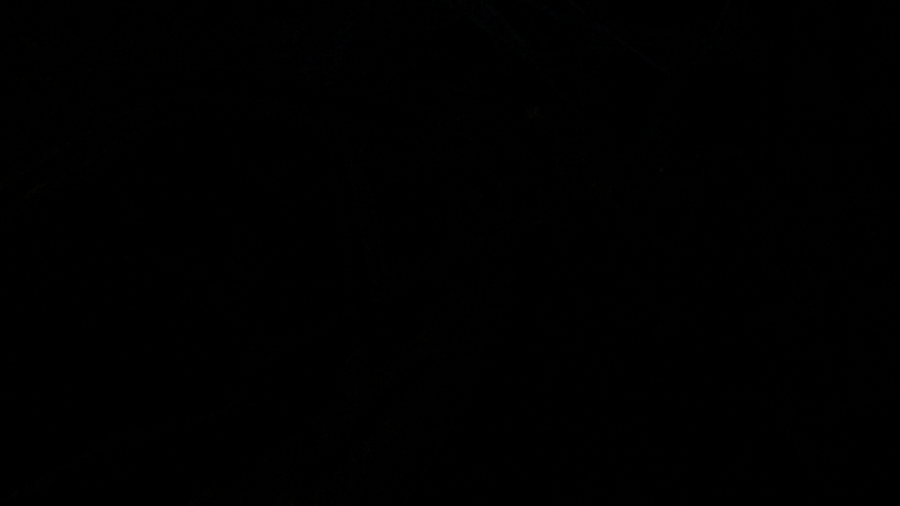} & \wS{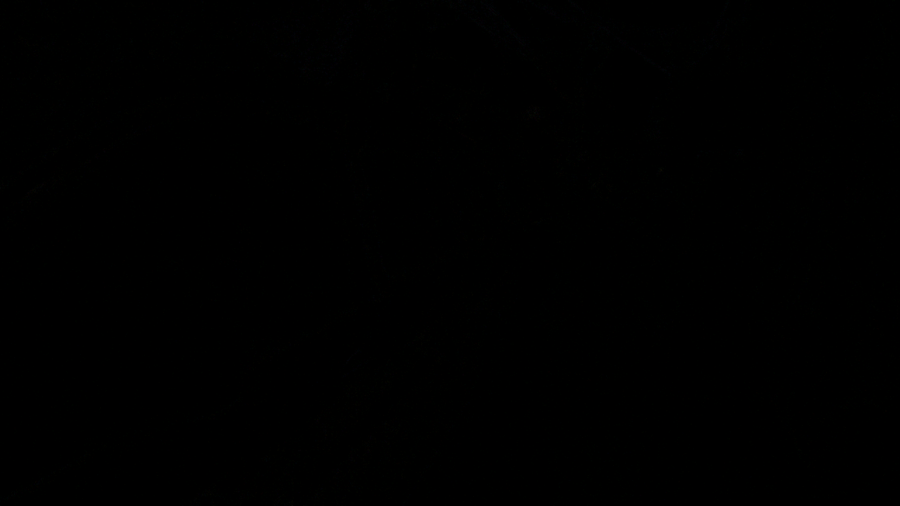} & \wS{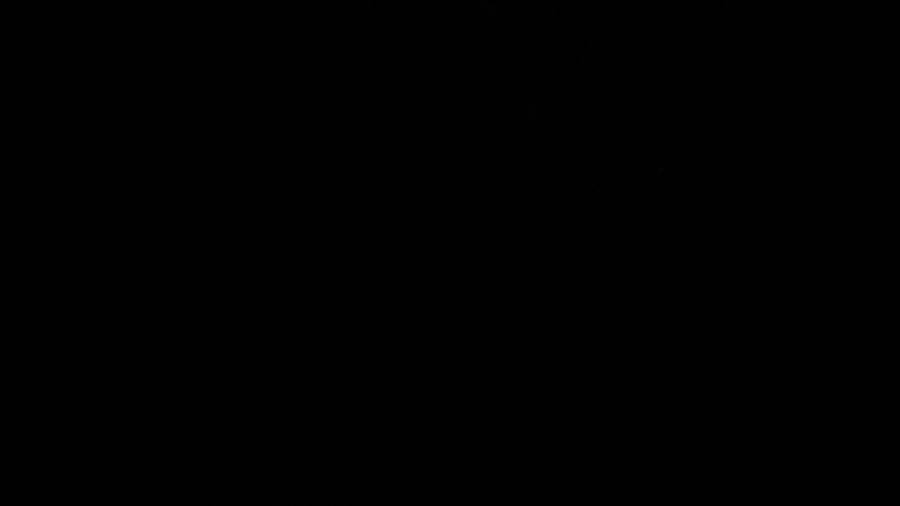} & \wS{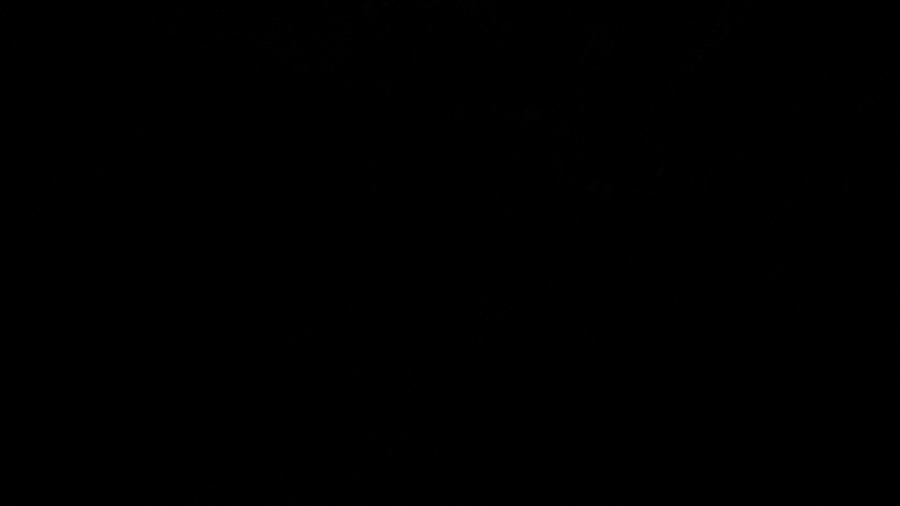} & \wS{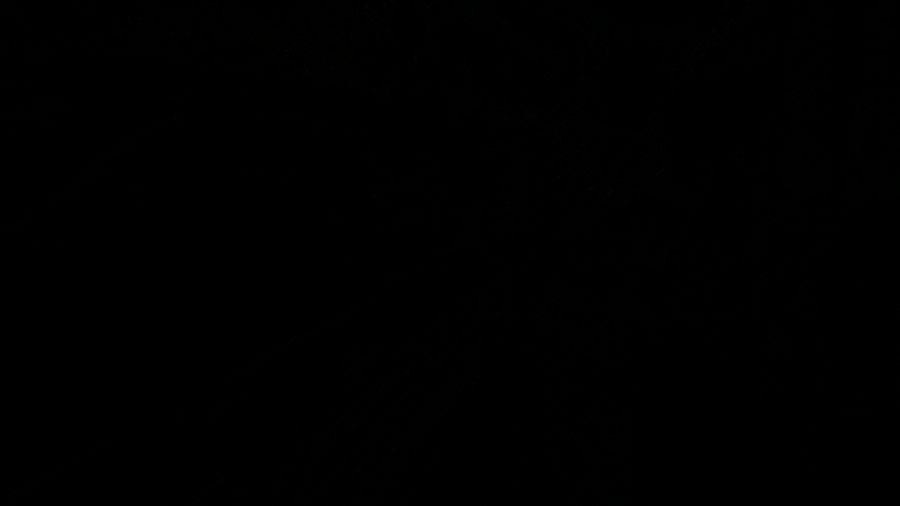} \\[\wgap]
     & \wrl{$\hat{s}$} & \wSv{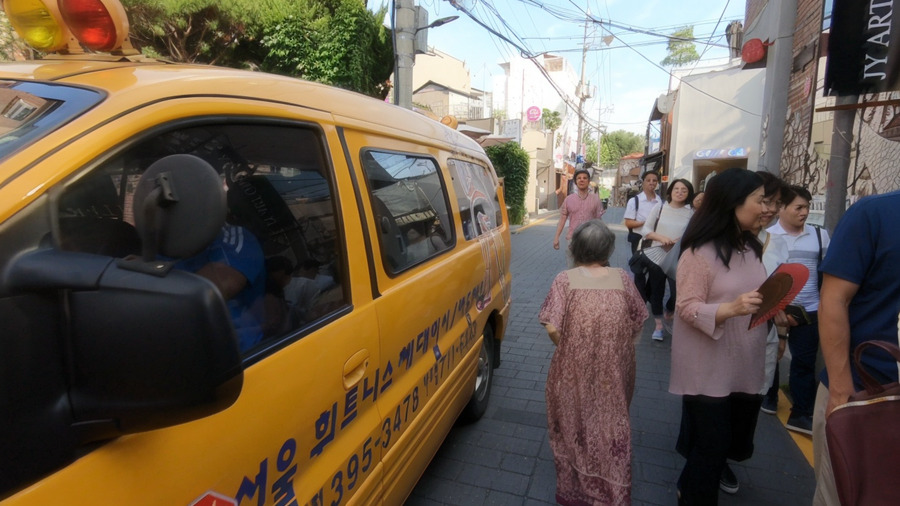}{29.2\,dB} & \wSv{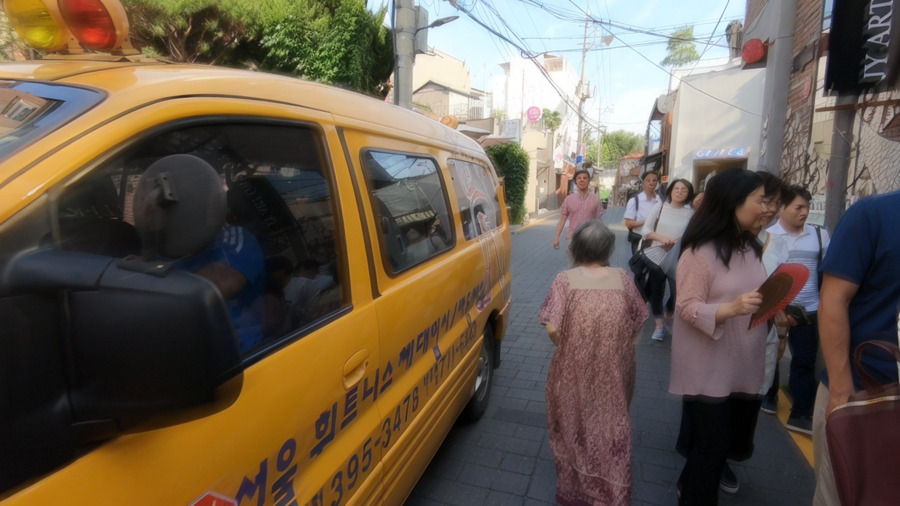}{31.1\,dB} & \wSv{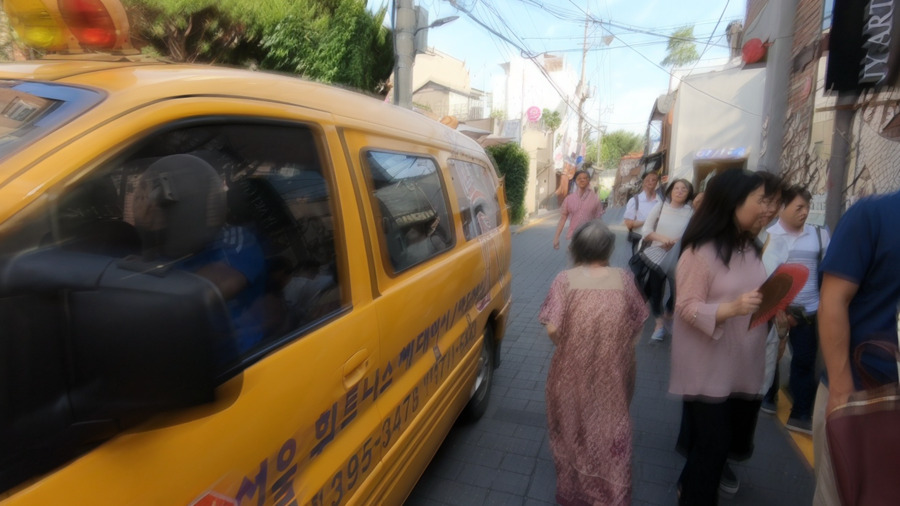}{33.2\,dB} & \wSv{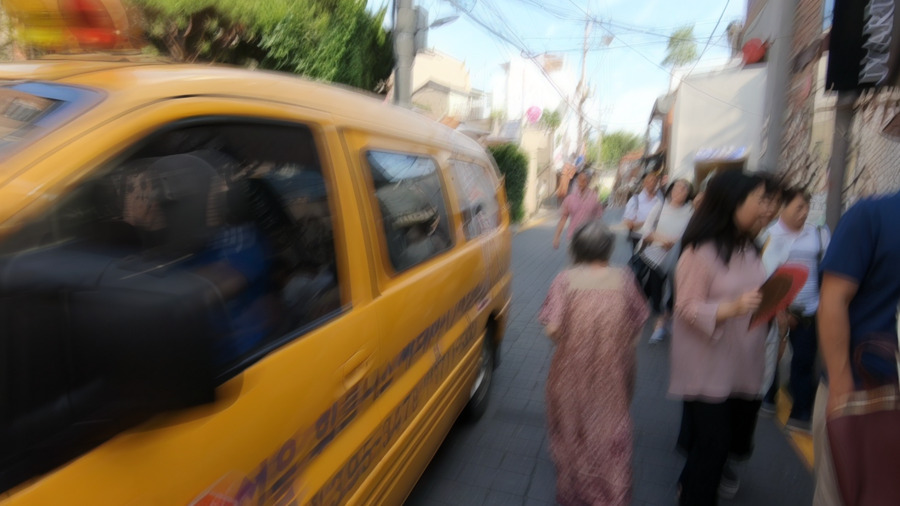}{35.7\,dB} & \wSv{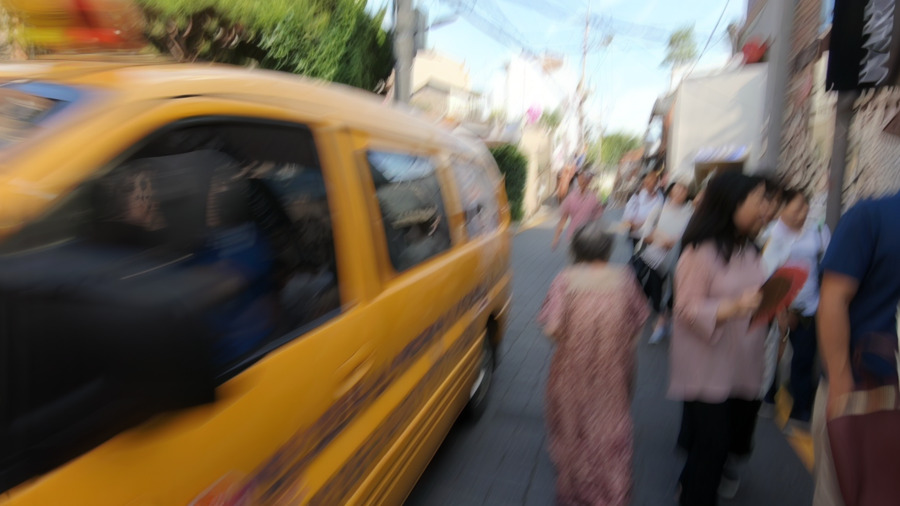}{37.3\,dB} \\[\wgap]
     \wrn{VAE} & \wrl{$s-\hat{s}$} & \wS{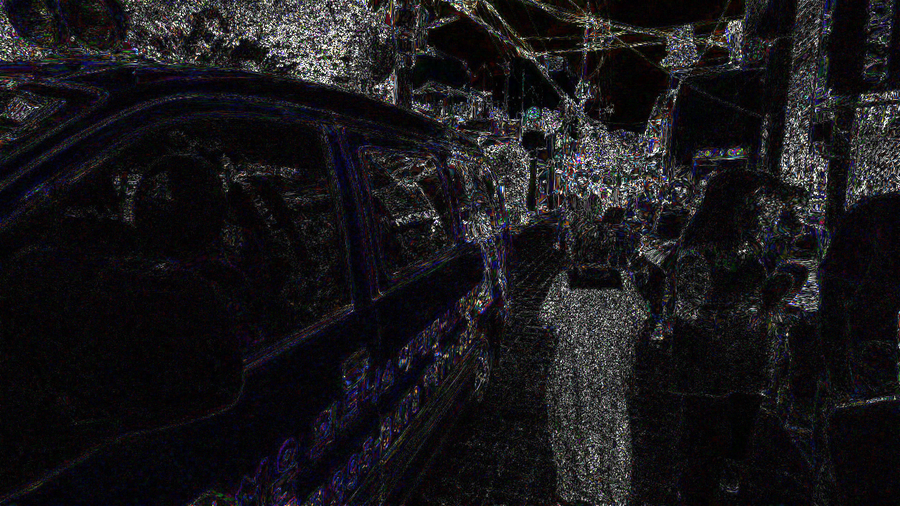} & \wS{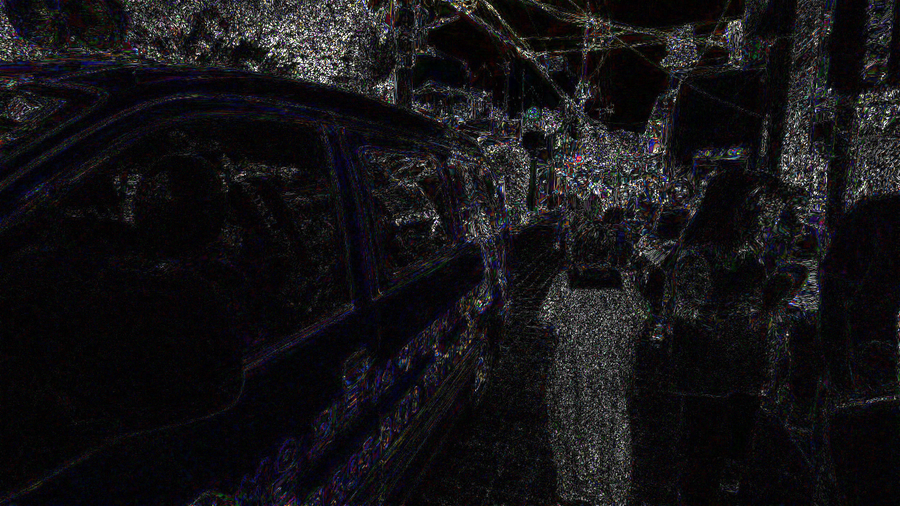} & \wS{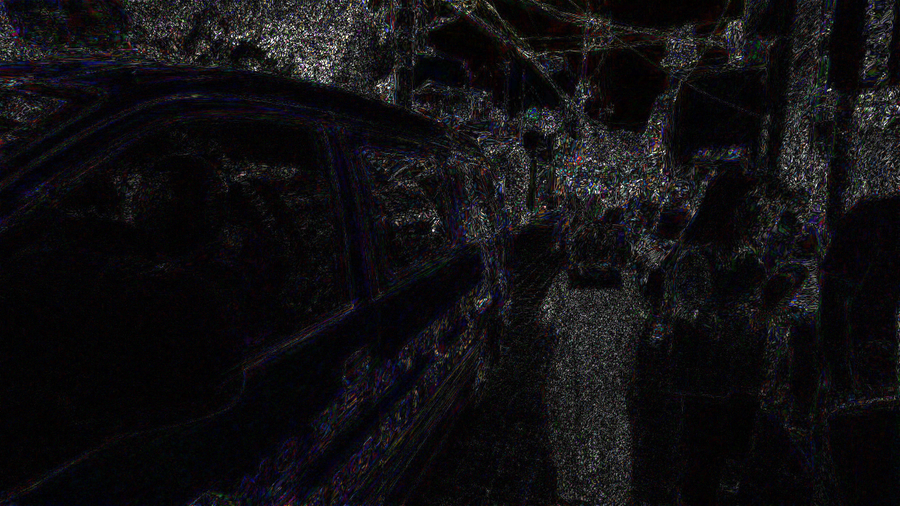} & \wS{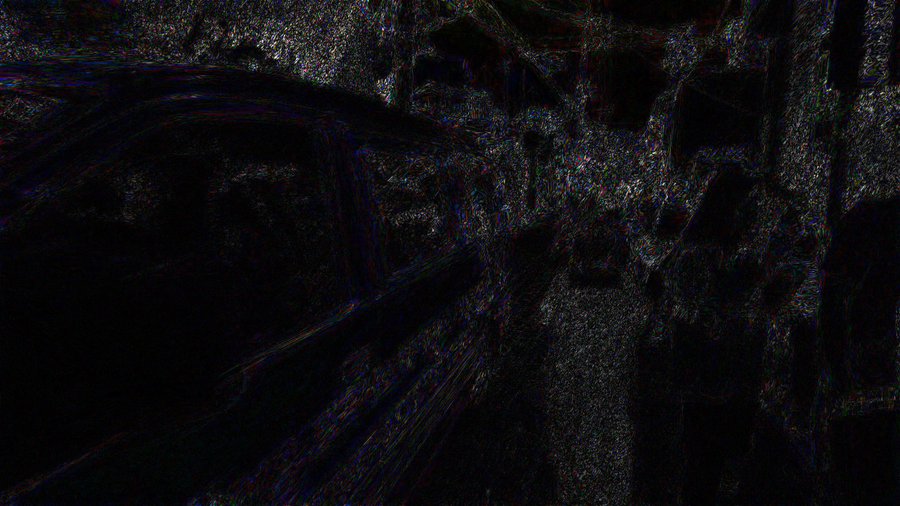} & \wS{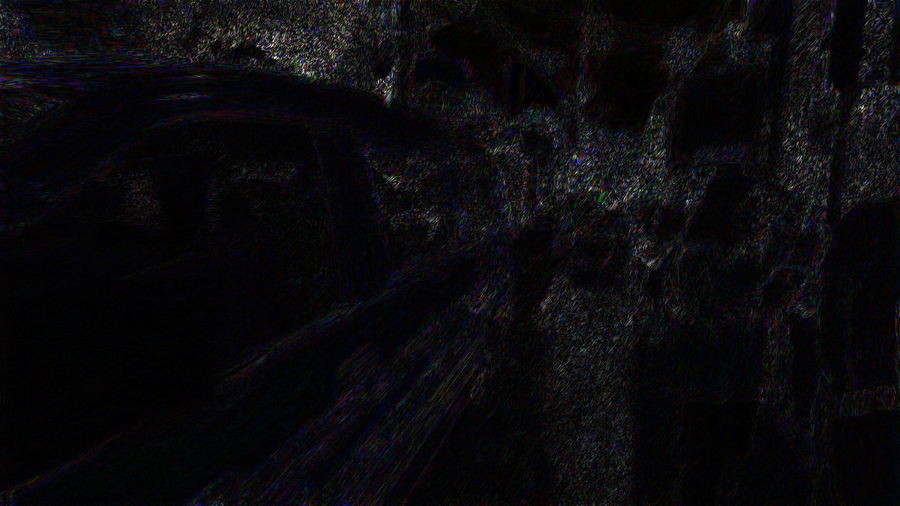} \\[\wgap]
     & \wrl{$\hat{s}$} & \wSv{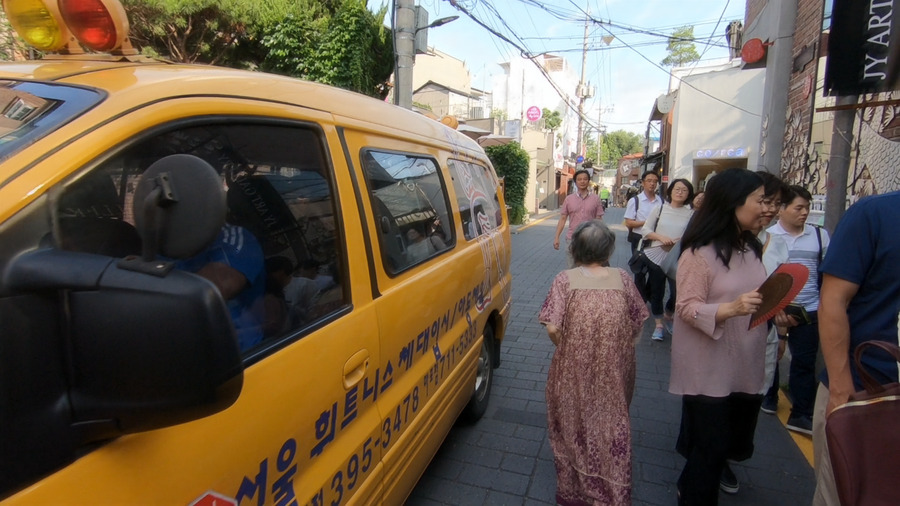}{43.5\,dB} & \wSv{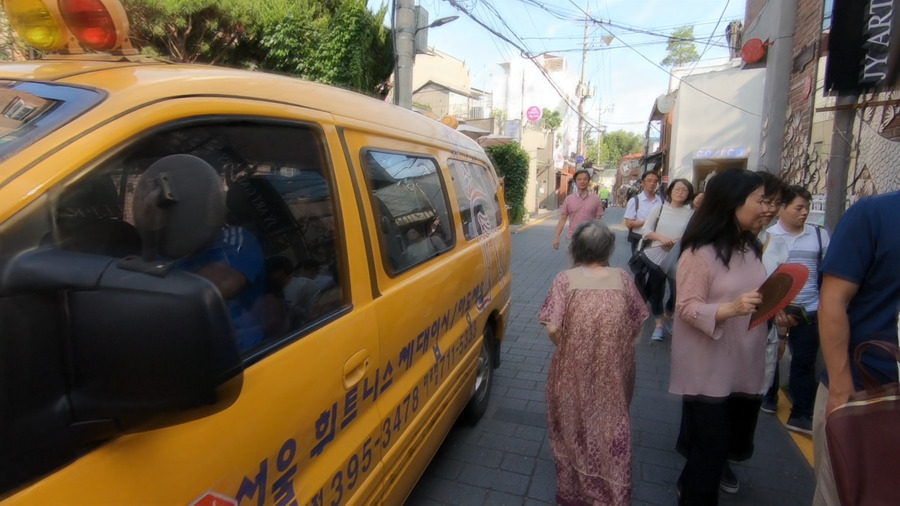}{41.7\,dB} & \wSv{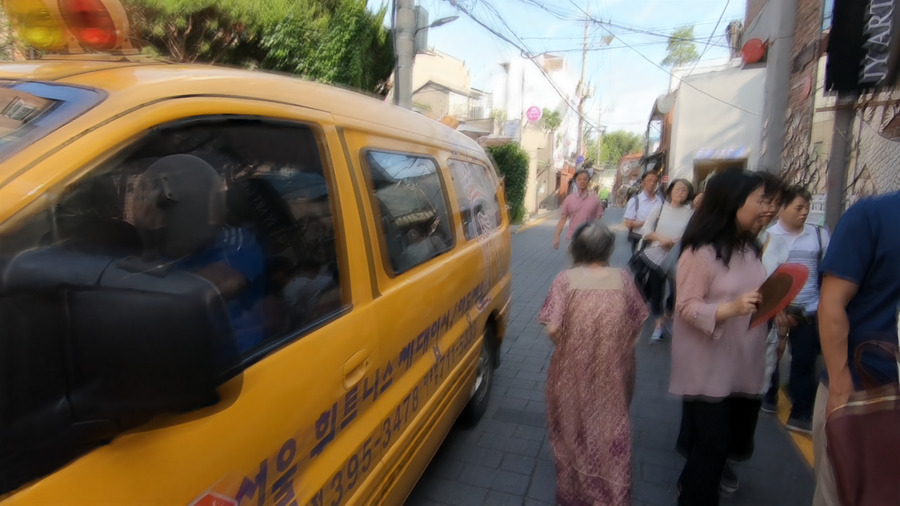}{37.8\,dB} & \wSv{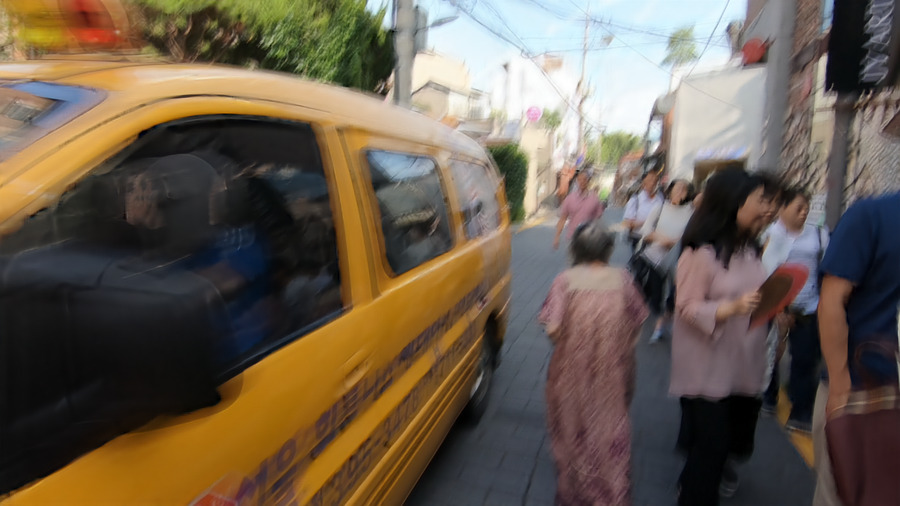}{33.1\,dB} & \wSv{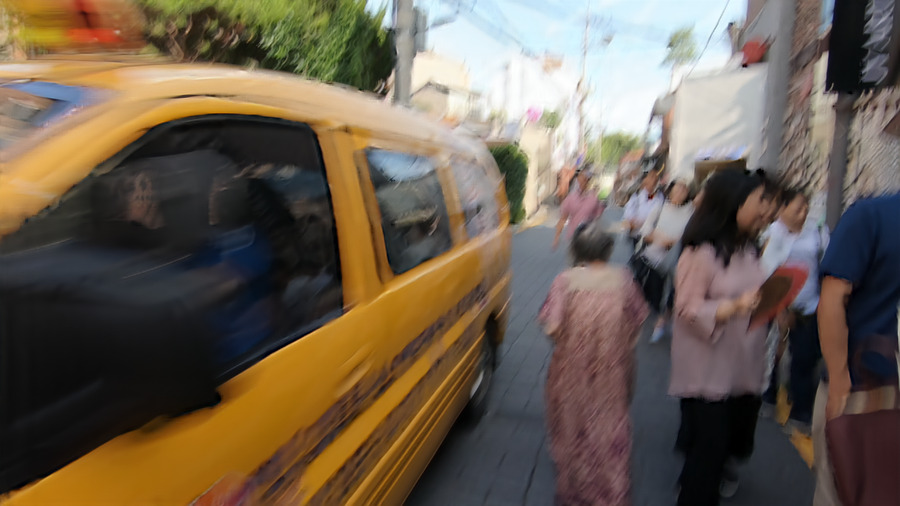}{31.1\,dB} \\[\wgap]
     \wrn{AE-PoS} & \wrl{$s-\hat{s}$} & \wS{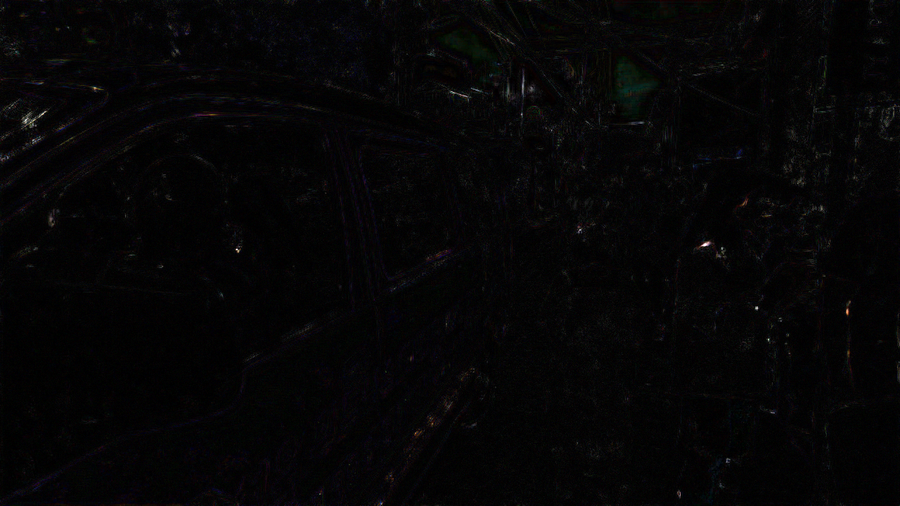} & \wS{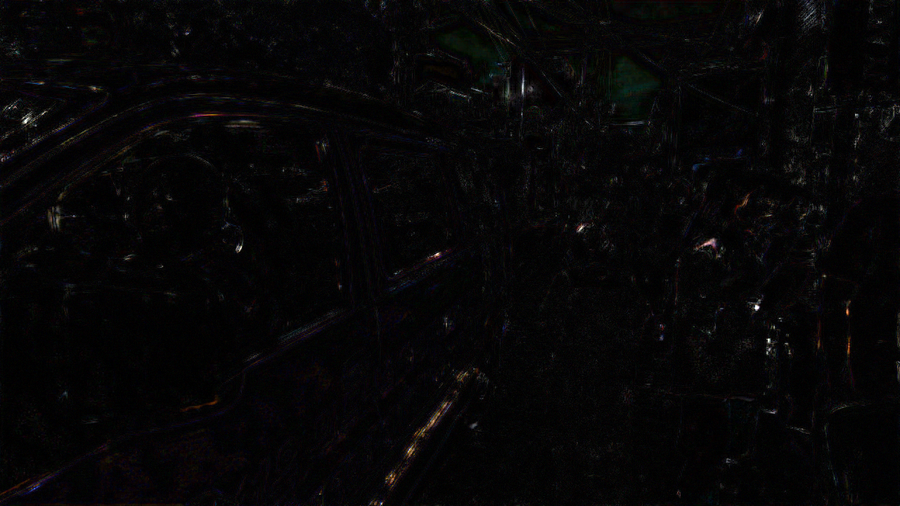} & \wS{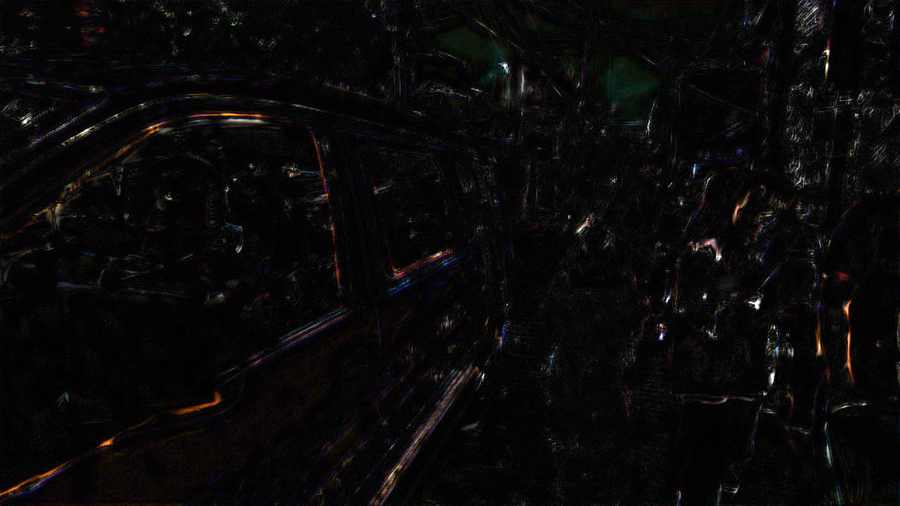} & \wS{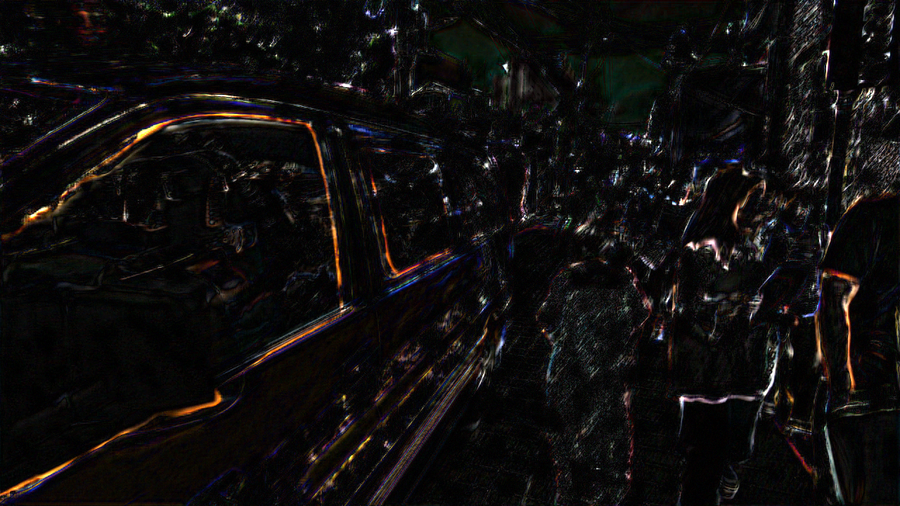} & \wS{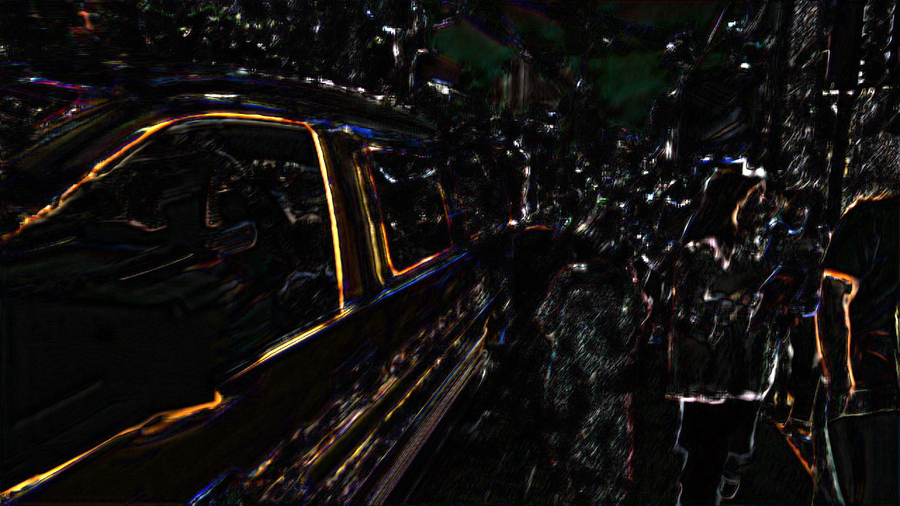} \\
  \end{tabular}
\end{center}
\caption{REDS under increasing blur. Columns show the same scene at five blur
strengths; rows show the input \(s\), each model's reconstruction
\(\hat{s}\) with its PSNR, and the absolute residual
\(\lvert s-\hat{s}\rvert\). The VAE increasingly represents the blurred
component: its reconstruction PSNR rises from \(29.2\) to \(37.3\) dB as the
blur strengthens, causing the residual to decrease. In contrast, AE-PoS
excludes the blur from its learned target set, producing increasingly visible
residuals and a PSNR decrease from \(43.5\) to \(31.1\) dB. All residual
panels use the same intensity gain. On the corresponding clean image, the
reconstruction PSNRs are \(62.4\), \(27.6\), and \(43.9\) dB for AE, VAE,
and AE-PoS, respectively.}
\label{fig:reds-blur-ramp}
\end{figure}

%% file: figs/fig_mix_images.tex
\ifdefined\vsm\else
  \newlength{\vsm}\newlength{\vbg}\newlength{\vgap}\newlength{\vsh}\newlength{\vbh}
  \newlength{\vlabt}\newlength{\vlabs}\newlength{\vpad}
  \newcommand{\vS}[1]{\includegraphics[width=\vsm]{#1}}
  \newcommand{\vB}[1]{\includegraphics[width=\vbg]{#1}}
  \newcommand{\vSv}[2]{\begin{tikzpicture}[inner sep=0]
      \node (im) {\vS{#1}};
      \node[anchor=south east, inner sep=1.2pt, fill=white, font=\footnotesize]
        at (im.south east) {#2};
    \end{tikzpicture}}
  \newcommand{\vrl}[1]{\makebox[\vlabs][c]{\rotatebox{90}{\makebox[\vsh][c]{\large #1}}}}
  \newcommand{\vrb}[1]{\makebox[\vlabt][c]{\rotatebox{90}{\makebox[\vbh][c]{\normalsize #1}}}}
\fi
\setlength{\vgap}{2.5pt}
\setlength{\vlabt}{7.2pt}\setlength{\vlabs}{9.6pt}\setlength{\vpad}{1pt}
\begin{figure}[t]
\begin{center}
  \setlength{\vsm}{\dimexpr (\textwidth - \vlabt - \vlabs - 1pt - 5.0000\vgap) / 5\relax}
  \setlength{\vbg}{\dimexpr 2\vsm + 1.0000\vgap\relax}
  \setlength{\vsh}{1.0000\vsm}\setlength{\vbh}{1.0000\vbg}
  \begin{minipage}[t]{\dimexpr \vlabt + \vgap + \vbg\relax}\vspace{0pt}\raggedright
    \makebox[\vlabt]{}\hspace{\vgap}\makebox[\vbg][c]{\large $s$}\\[2pt]
    \vrb{nominal}\hspace{\vgap}\vB{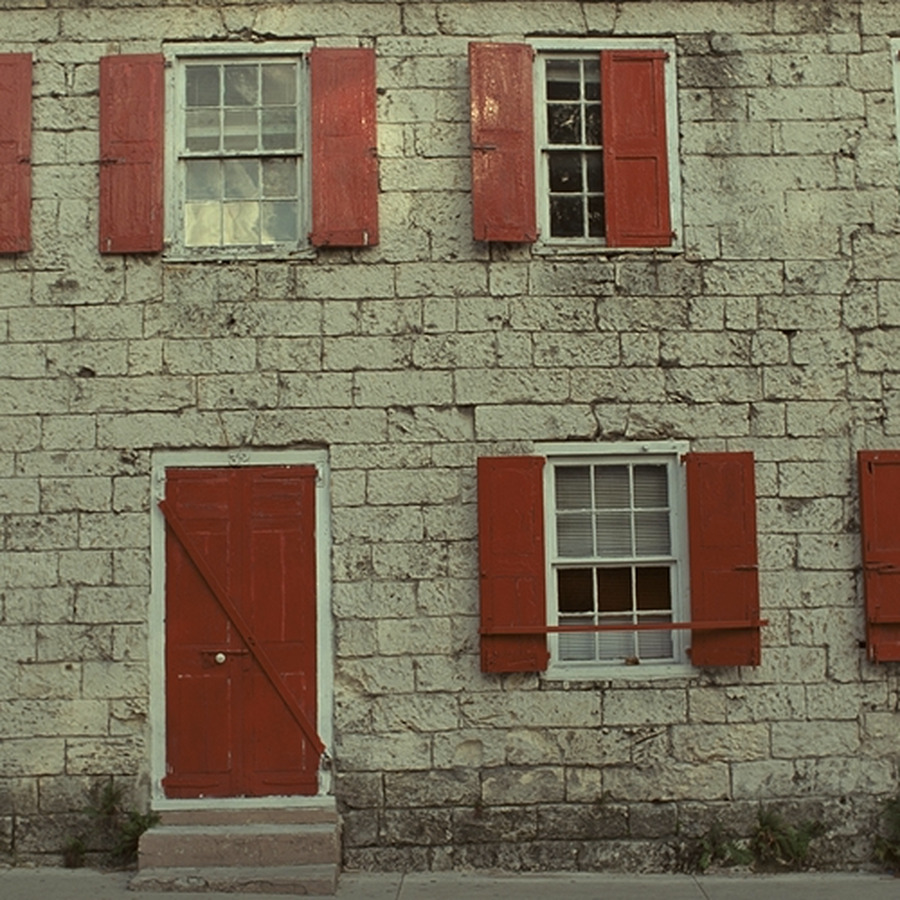}
  \end{minipage}\hfill
  \begin{minipage}[t]{\dimexpr \vlabs + 3\vsm + 3\vgap\relax}\vspace{0pt}\raggedright
    \setlength{\tabcolsep}{0.5\vgap}\renewcommand{\arraystretch}{0}
    \begin{tabular}{@{}c ccc@{}}
       & {\normalsize AE} & {\normalsize VAE} & {\normalsize AE-PoS} \\[2pt]
       \vrl{$\hat{s}$} & \vSv{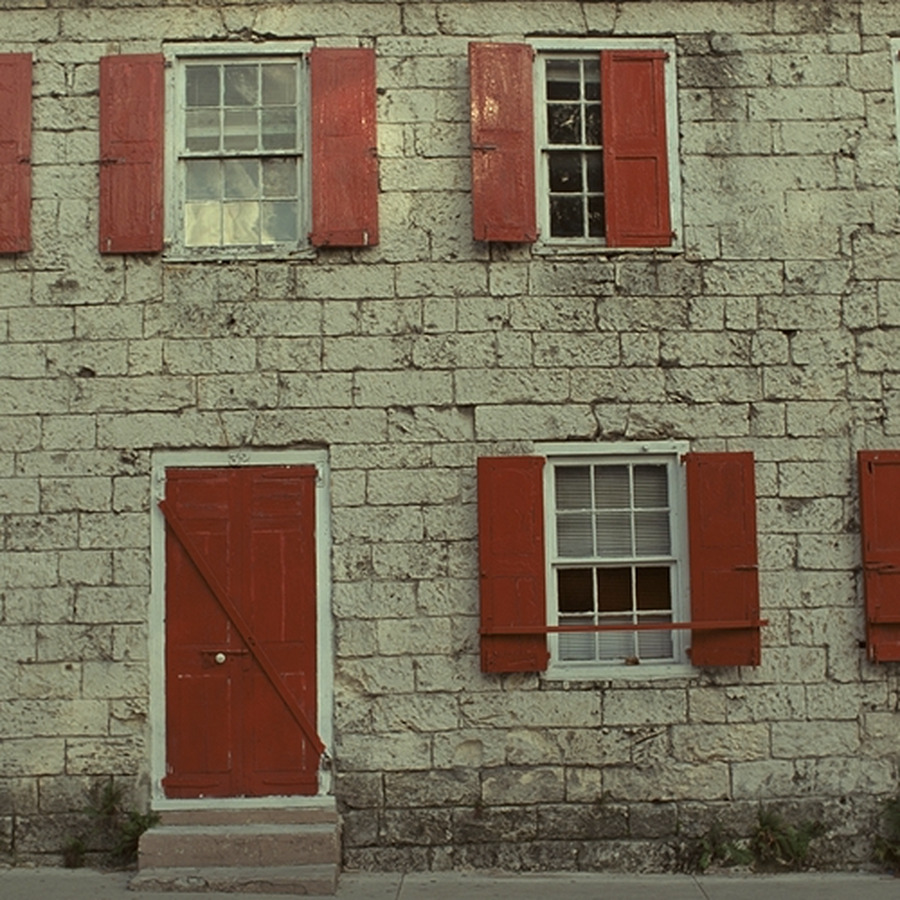}{58.3\,dB} & \vSv{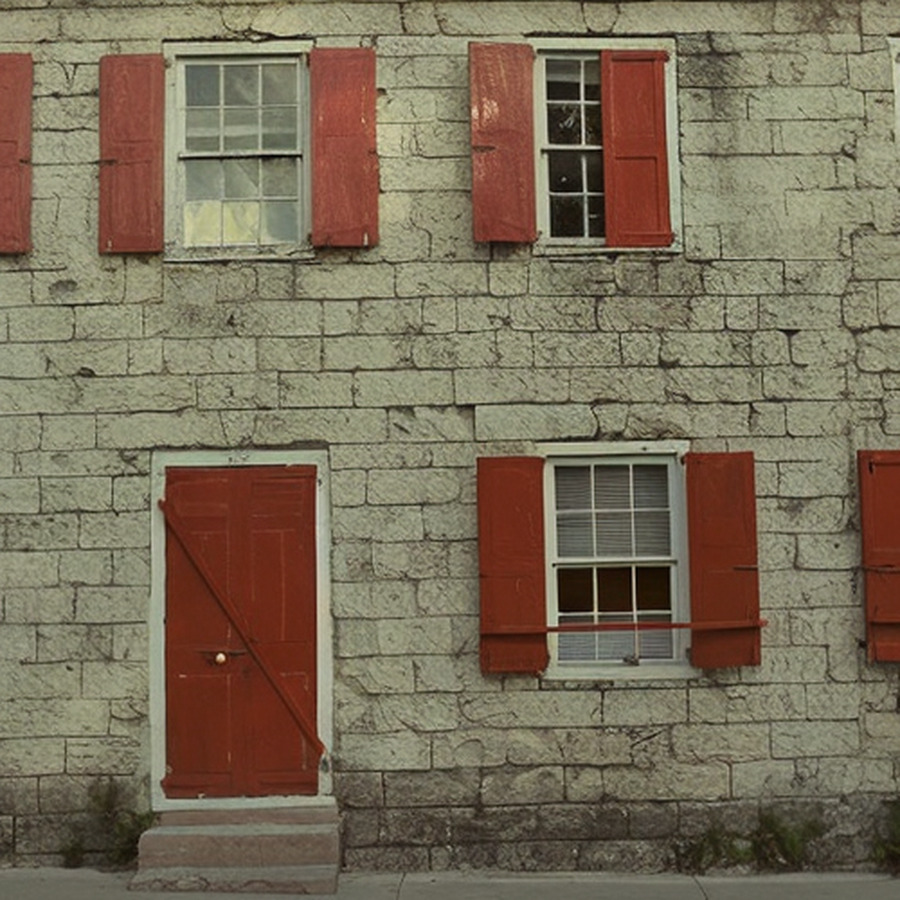}{23.6\,dB} & \vSv{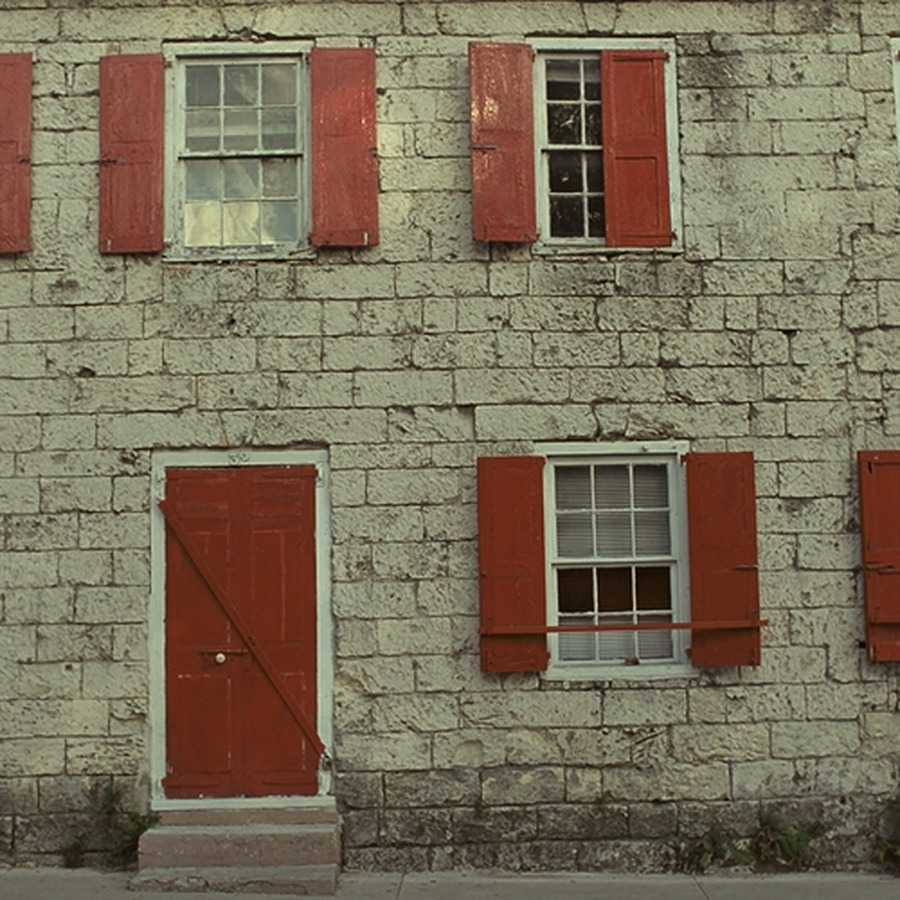}{46.0\,dB} \\[\vgap]
       \vrl{$s-\hat{s}$} & \vS{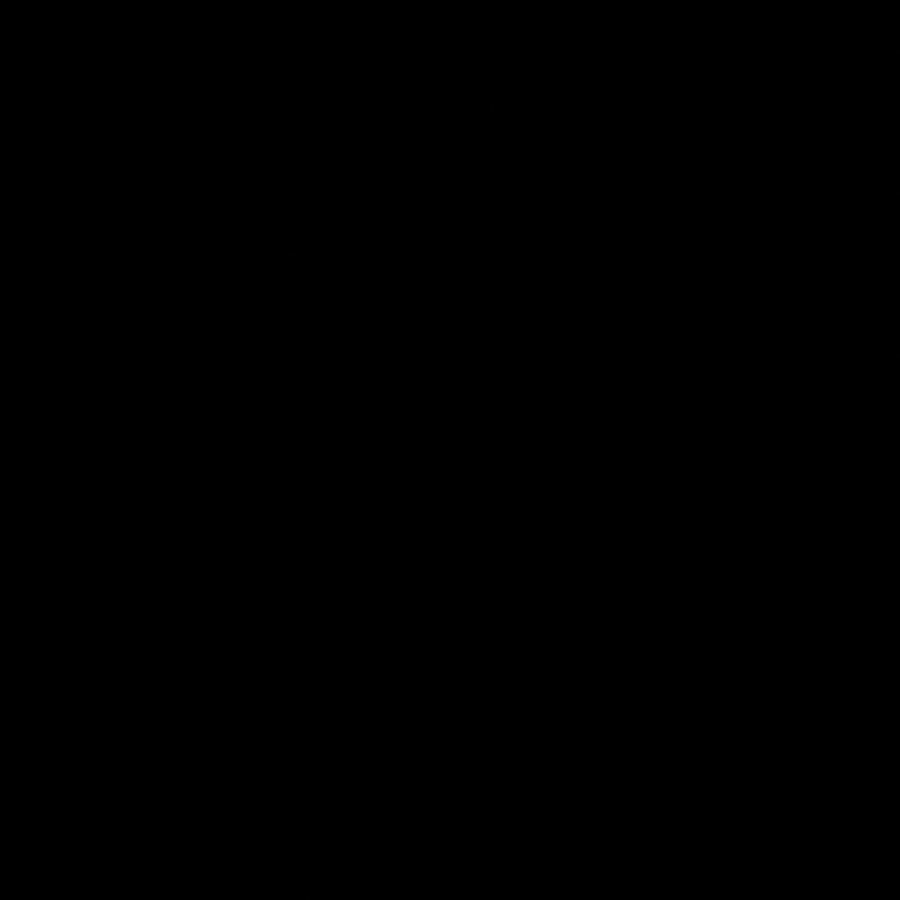} & \vS{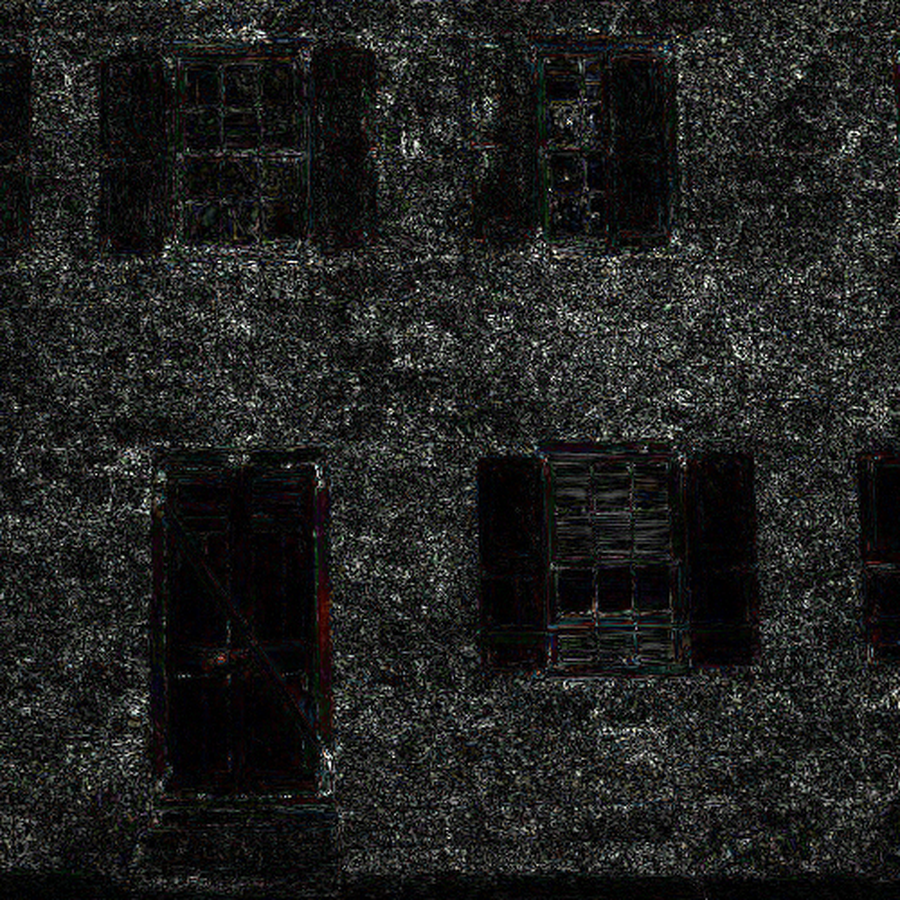} & \vS{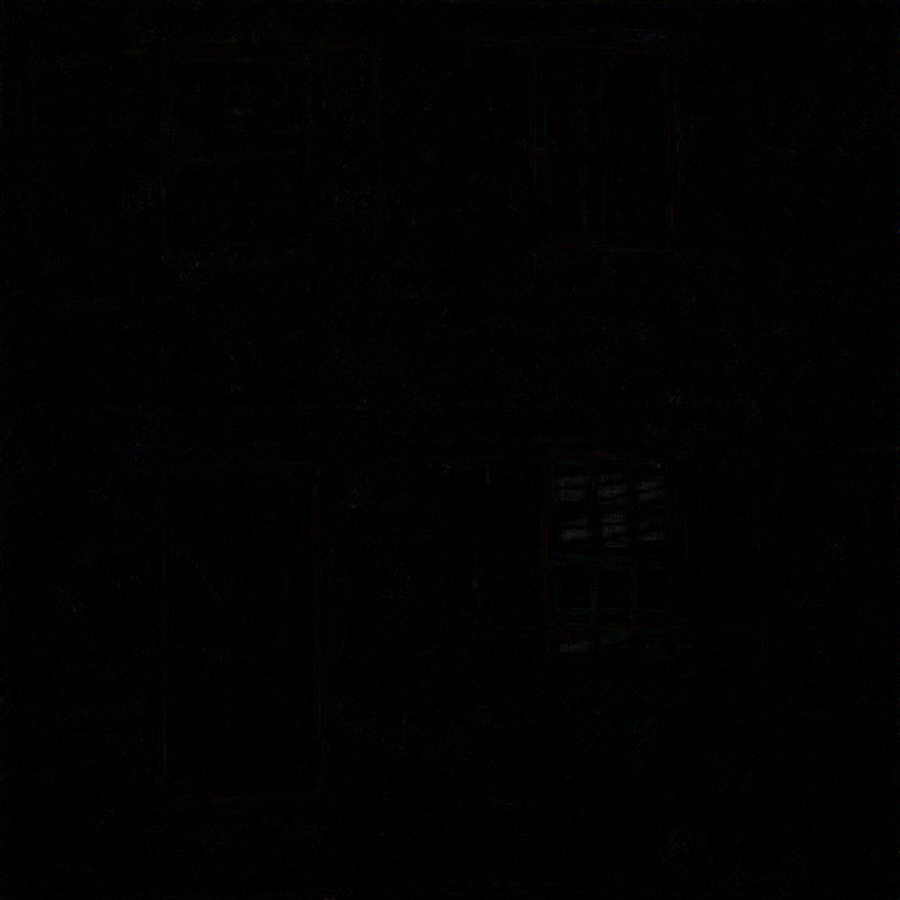} \\
    \end{tabular}
  \end{minipage}

  \vspace{2\vgap}

  \setlength{\vsm}{\dimexpr (\textwidth - \vlabt - \vlabs - 1pt - 5.7787\vgap) / 5\relax}
  \setlength{\vbg}{\dimexpr 2\vsm + 1.7787\vgap\relax}
  \setlength{\vsh}{0.5622\vsm}\setlength{\vbh}{0.5622\vbg}
  \begin{minipage}[t]{\dimexpr \vlabt + \vgap + \vbg\relax}\vspace{0pt}\raggedright
    \vrb{nominal}\hspace{\vgap}\vB{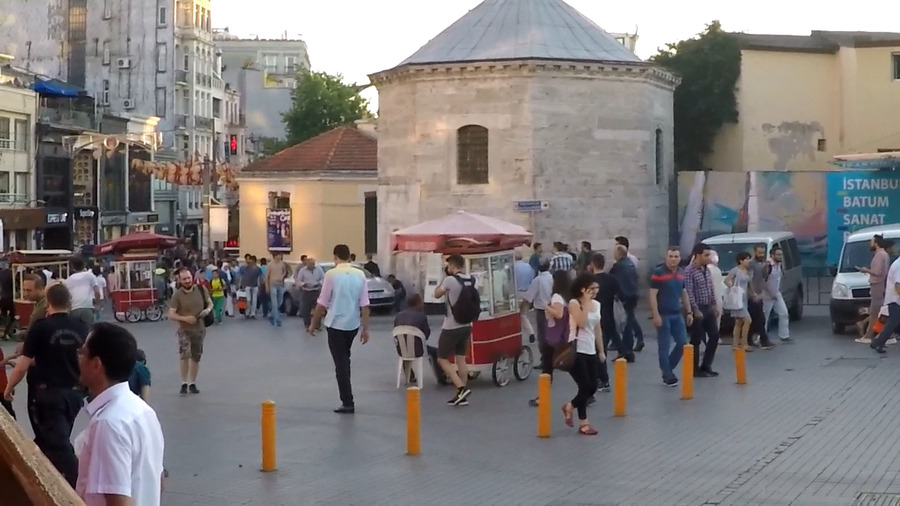}
  \end{minipage}\hfill
  \begin{minipage}[t]{\dimexpr \vlabs + 3\vsm + 3\vgap\relax}\vspace{0pt}\raggedright
    \setlength{\tabcolsep}{0.5\vgap}\renewcommand{\arraystretch}{0}
    \begin{tabular}{@{}c ccc@{}}
       \vrl{$\hat{s}$} & \vSv{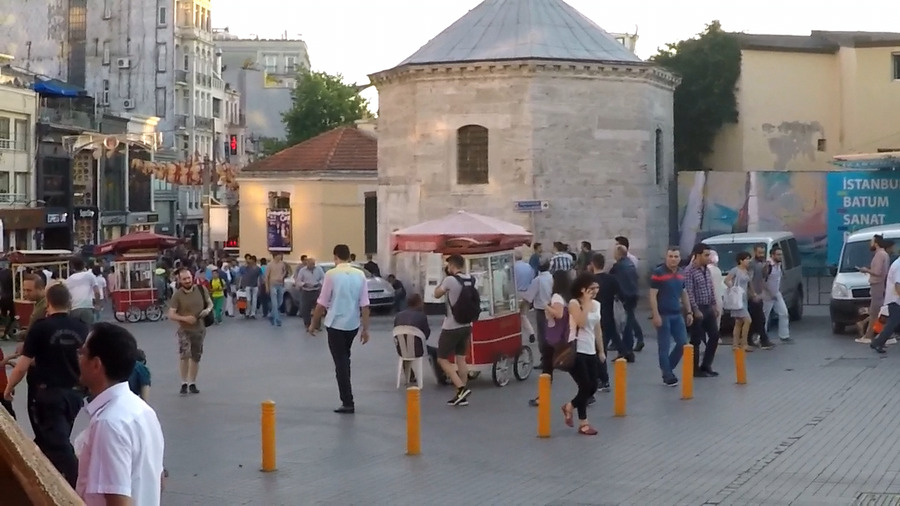}{63.3\,dB} & \vSv{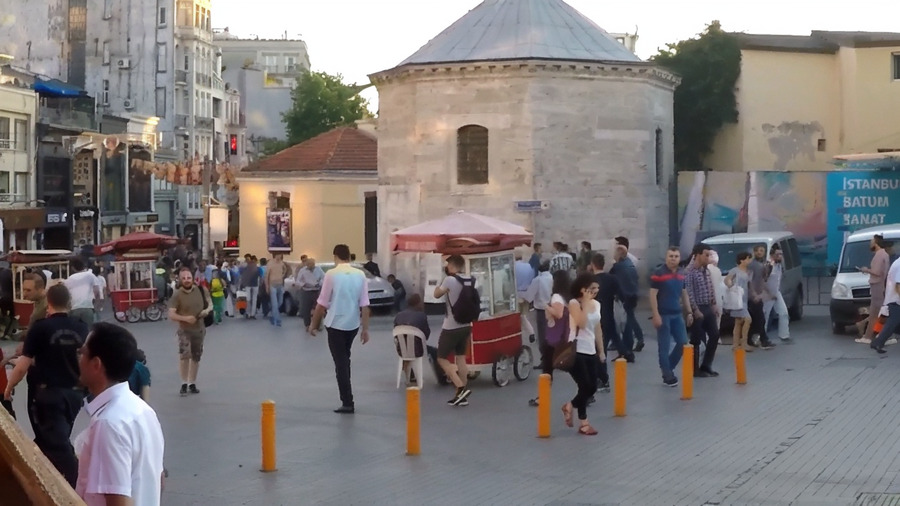}{31.1\,dB} & \vSv{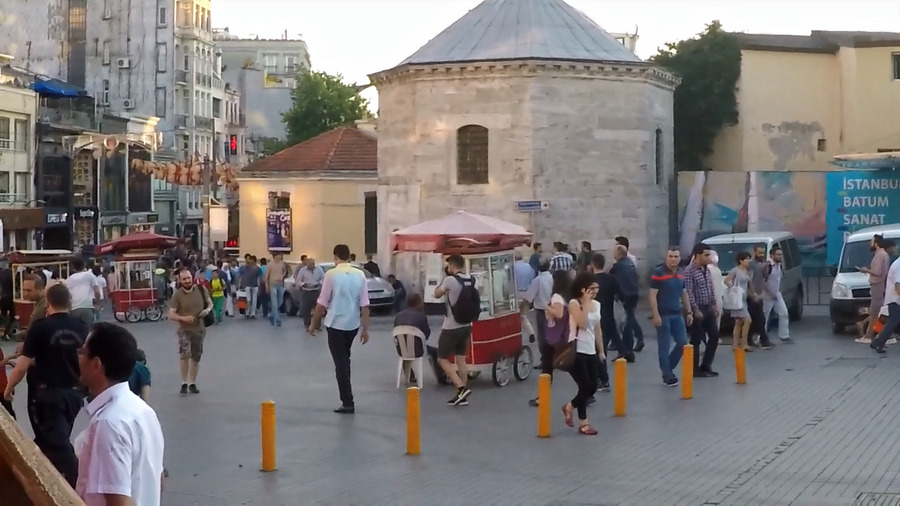}{37.8\,dB} \\[\vgap]
       \vrl{$s-\hat{s}$} & \vS{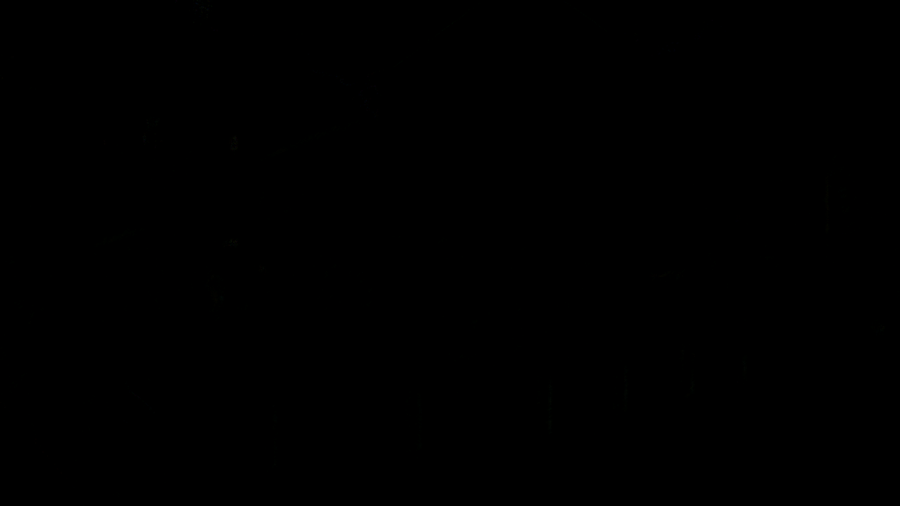} & \vS{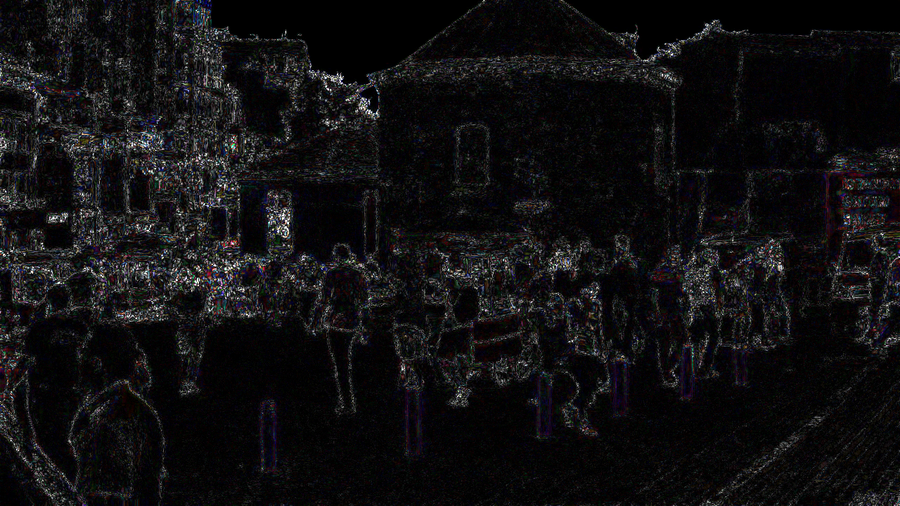} & \vS{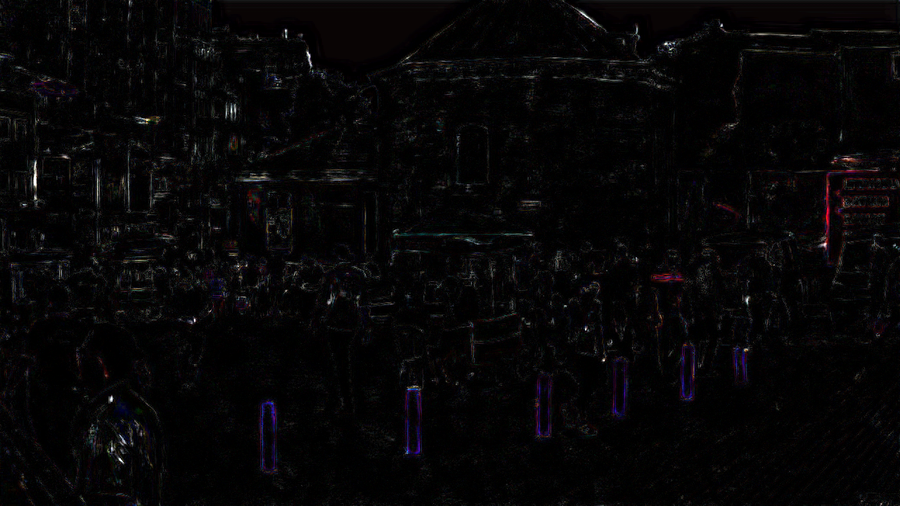} \\
    \end{tabular}
  \end{minipage}

  \vspace{2\vgap}

  \setlength{\vsm}{\dimexpr (\textwidth - \vlabt - \vlabs - 1pt - 5.7787\vgap) / 5\relax}
  \setlength{\vbg}{\dimexpr 2\vsm + 1.7787\vgap\relax}
  \setlength{\vsh}{0.5622\vsm}\setlength{\vbh}{0.5622\vbg}
  \begin{minipage}[t]{\dimexpr \vlabt + \vgap + \vbg\relax}\vspace{0pt}\raggedright
    \vrb{degraded}\hspace{\vgap}\vB{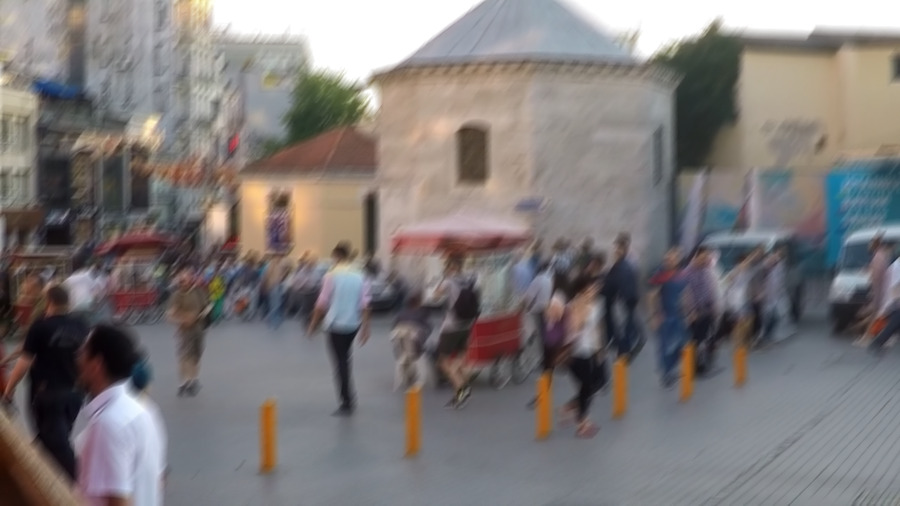}
  \end{minipage}\hfill
  \begin{minipage}[t]{\dimexpr \vlabs + 3\vsm + 3\vgap\relax}\vspace{0pt}\raggedright
    \setlength{\tabcolsep}{0.5\vgap}\renewcommand{\arraystretch}{0}
    \begin{tabular}{@{}c ccc@{}}
       \vrl{$\hat{s}$} & \vSv{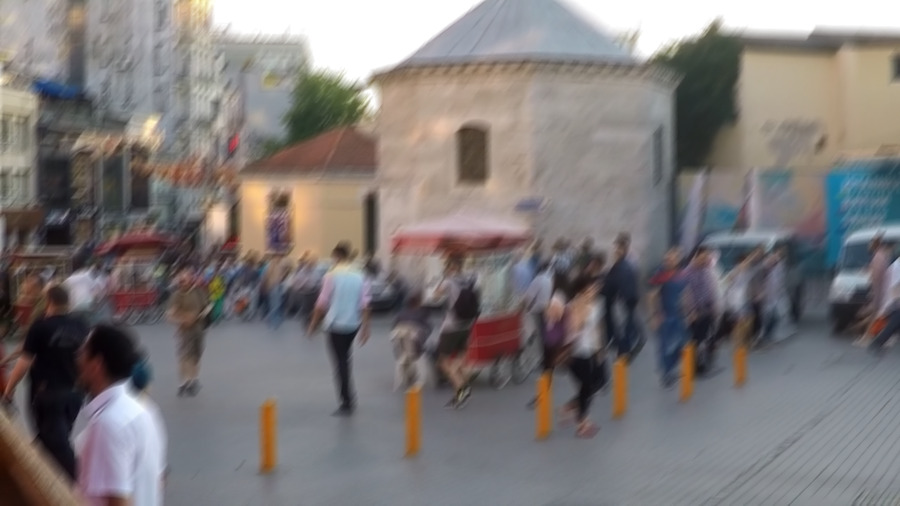}{63.7\,dB} & \vSv{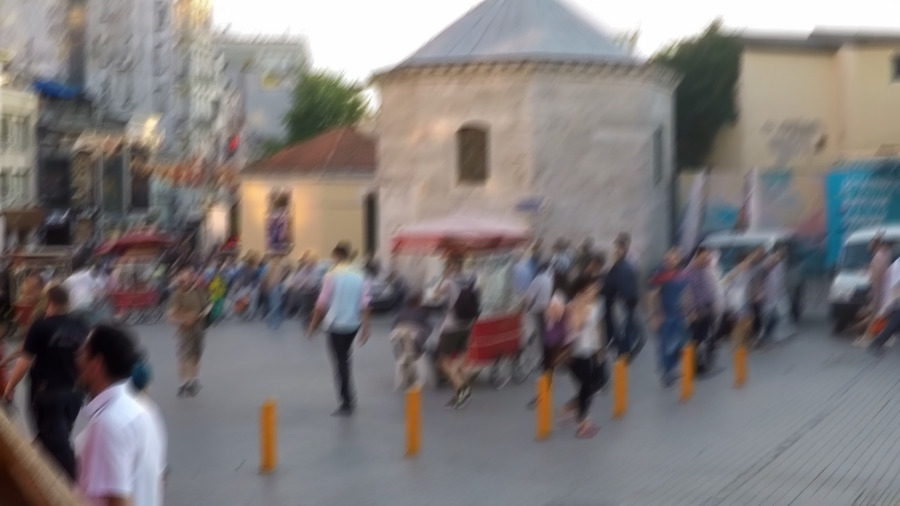}{40.1\,dB} & \vSv{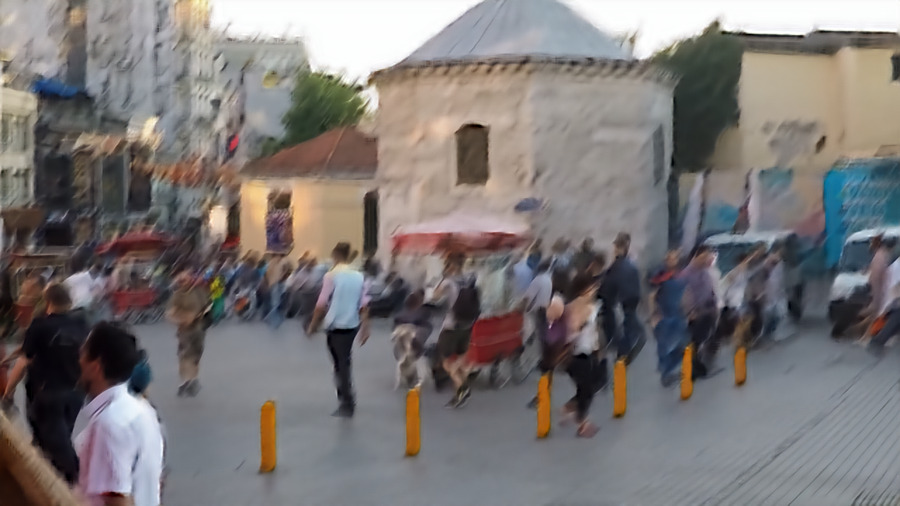}{27.5\,dB} \\[\vgap]
       \vrl{$s-\hat{s}$} & \vS{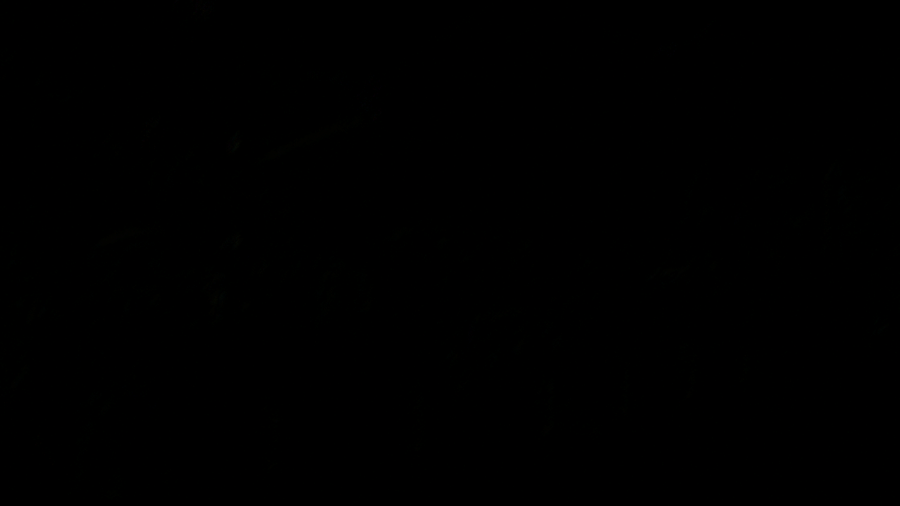} & \vS{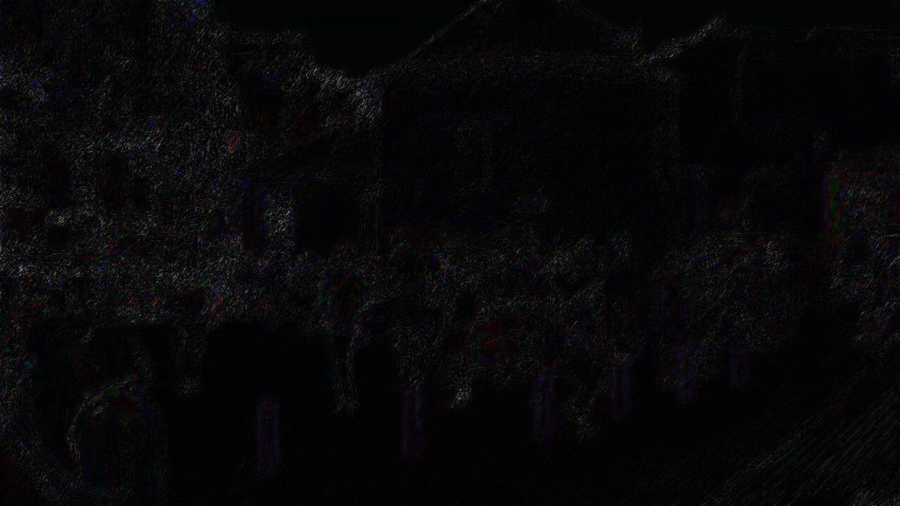} & \vS{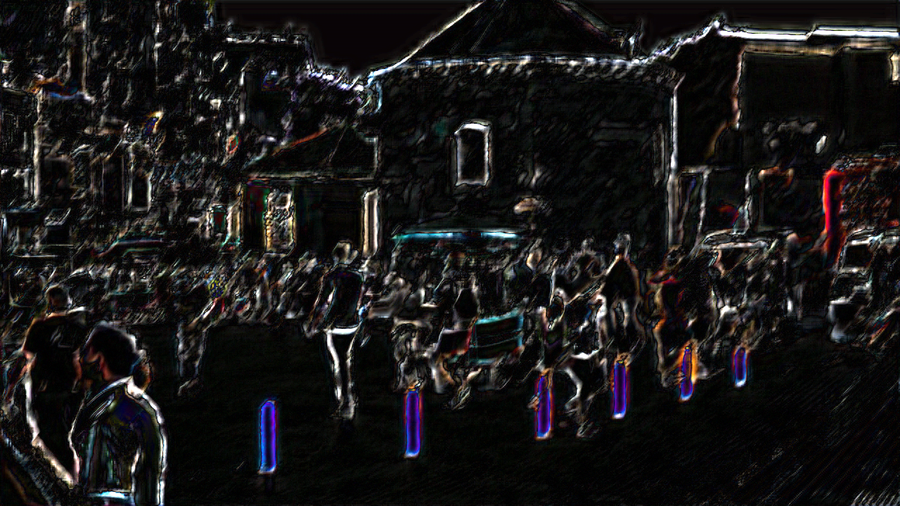} \\
    \end{tabular}
  \end{minipage}

  \vspace{2\vgap}

  \setlength{\vsm}{\dimexpr (\textwidth - \vlabt - \vlabs - 1pt - 5.3333\vgap) / 5\relax}
  \setlength{\vbg}{\dimexpr 2\vsm + 1.3333\vgap\relax}
  \setlength{\vsh}{0.7500\vsm}\setlength{\vbh}{0.7500\vbg}
  \begin{minipage}[t]{\dimexpr \vlabt + \vgap + \vbg\relax}\vspace{0pt}\raggedright
    \vrb{degraded}\hspace{\vgap}\vB{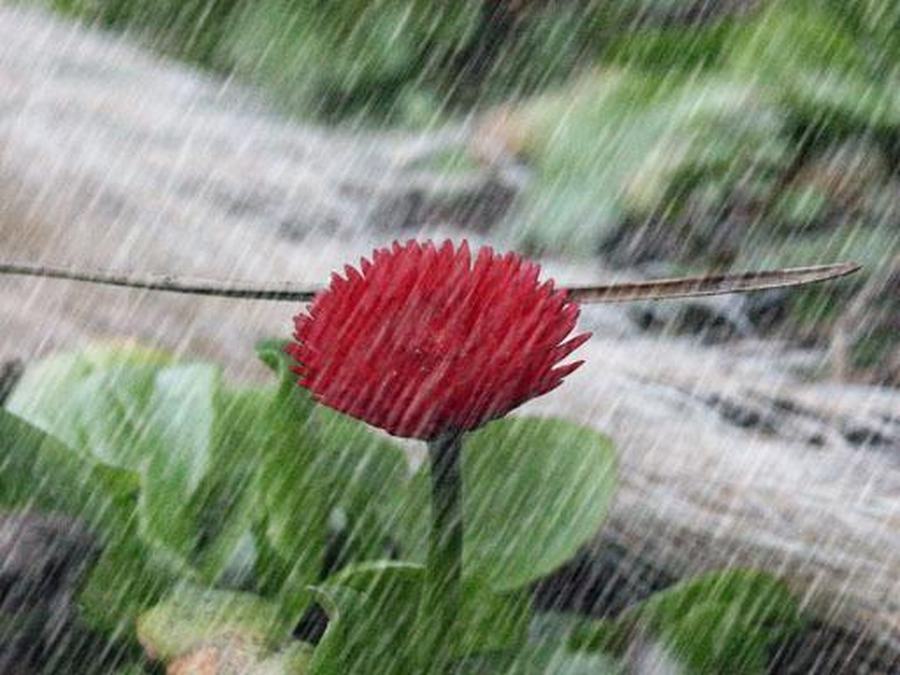}
  \end{minipage}\hfill
  \begin{minipage}[t]{\dimexpr \vlabs + 3\vsm + 3\vgap\relax}\vspace{0pt}\raggedright
    \setlength{\tabcolsep}{0.5\vgap}\renewcommand{\arraystretch}{0}
    \begin{tabular}{@{}c ccc@{}}
       \vrl{$\hat{s}$} & \vSv{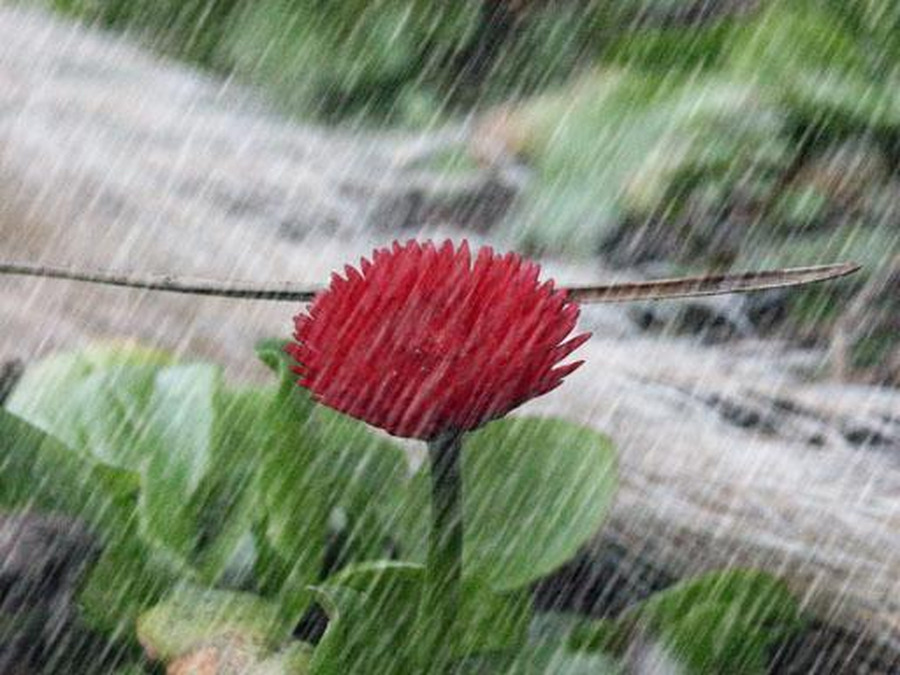}{61.7\,dB} & \vSv{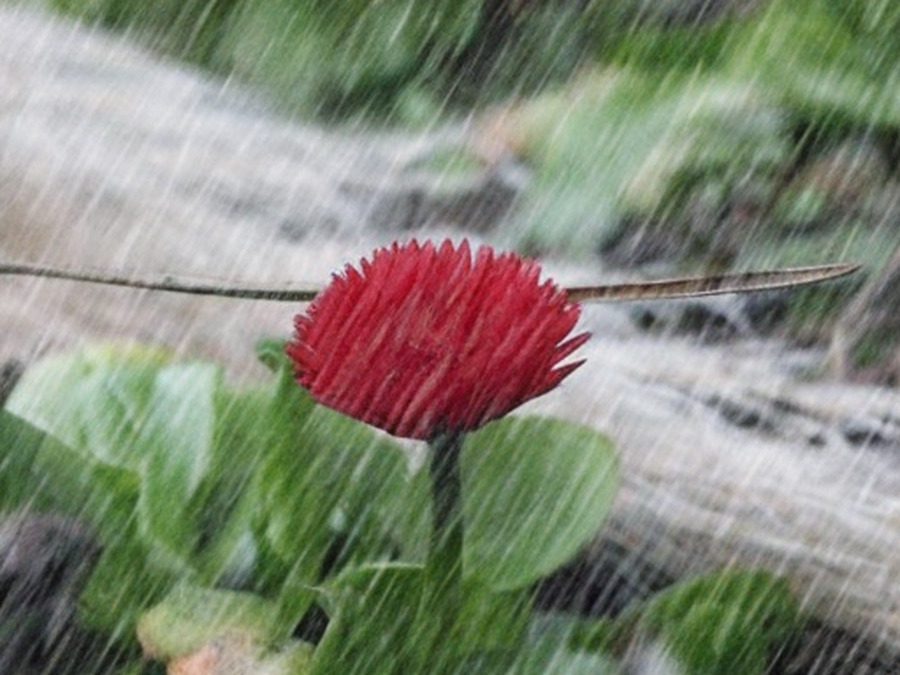}{29.0\,dB} & \vSv{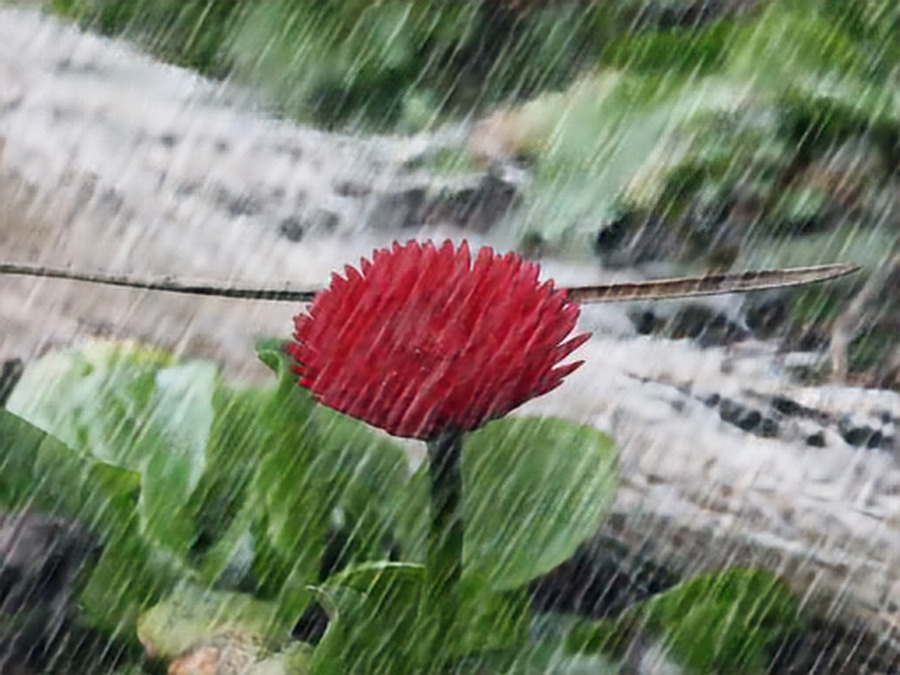}{25.1\,dB} \\[\vgap]
       \vrl{$s-\hat{s}$} & \vS{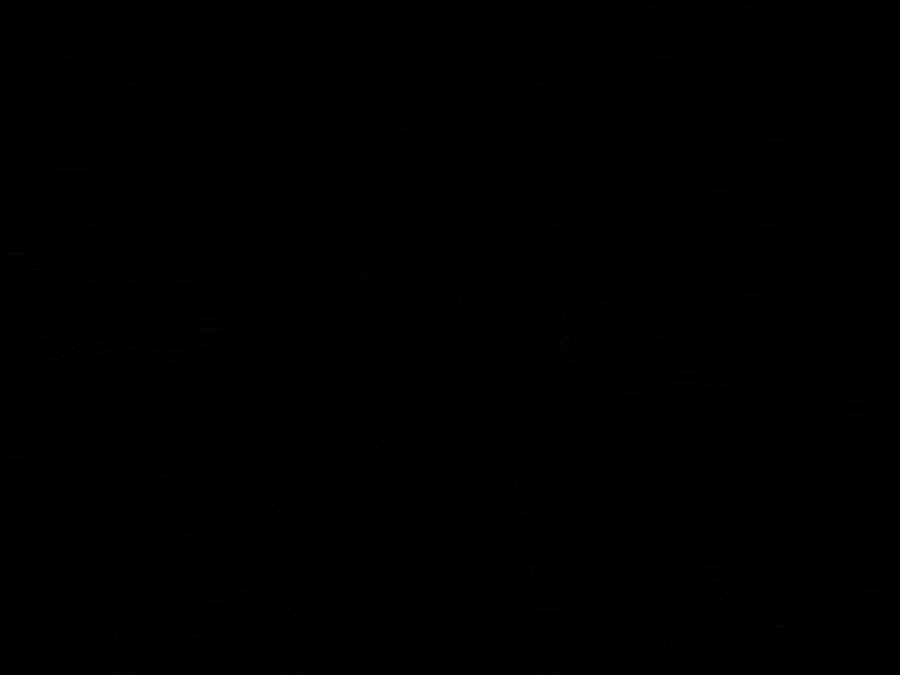} & \vS{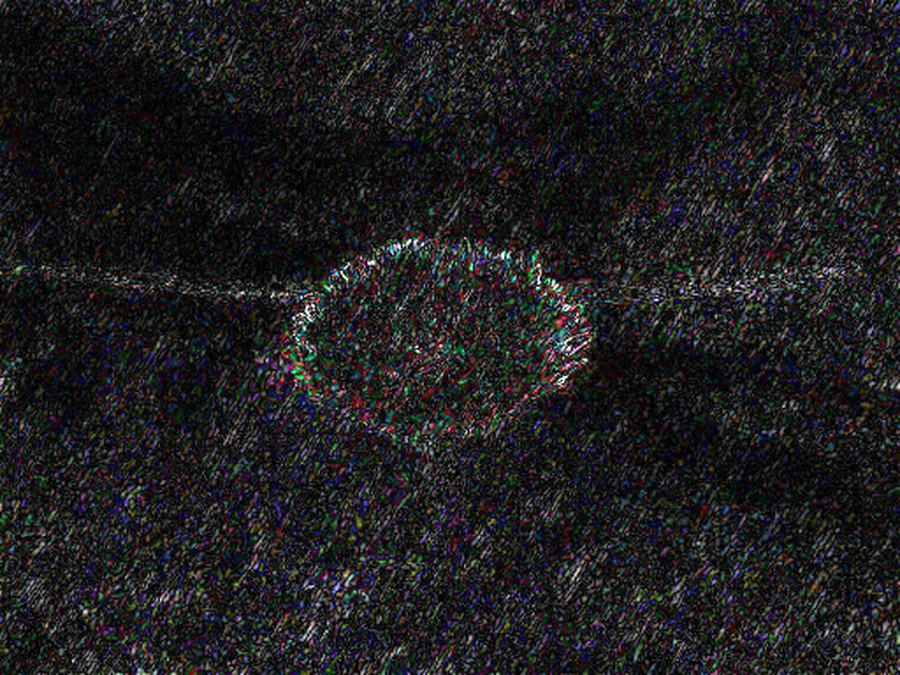} & \vS{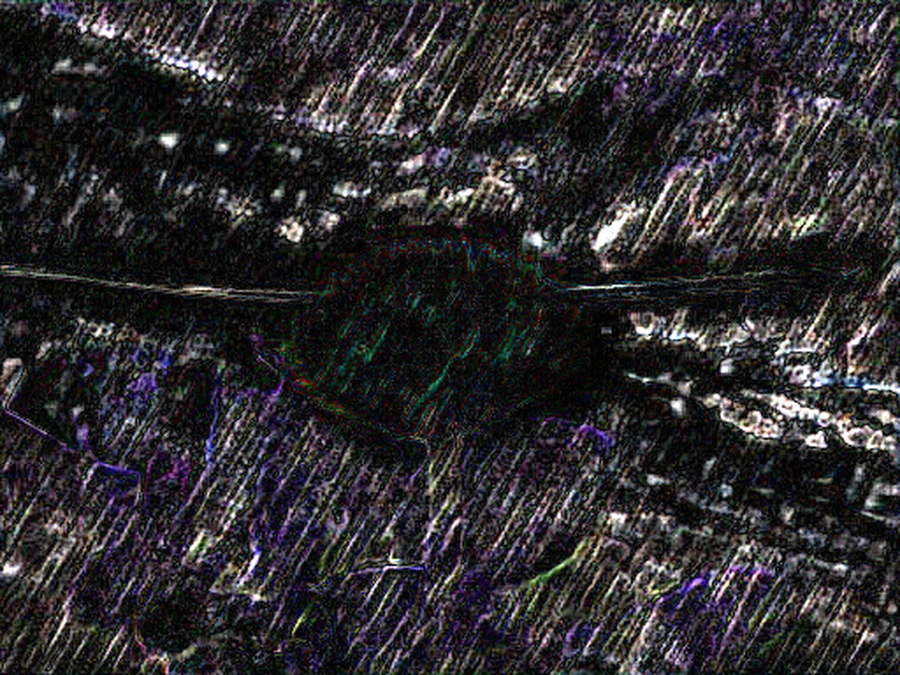} \\
    \end{tabular}
  \end{minipage}
\end{center}
\caption{\textbf{Nominal fidelity and degradation rejection.}
For each input \(s\), the figure shows the reconstruction \(\hat{s}\), its
PSNR, and the absolute residual \(\lvert s-\hat{s}\rvert\) for AE, VAE, and
AE-PoS. From top to bottom, the examples are a nominal Kodak24 image, a
nominal GoPro frame, its motion-blurred counterpart, and a rainy Test2800
image. AE reconstructs both nominal and degraded inputs almost identically,
providing high fidelity but weak anomaly separation. VAE loses substantial
detail on nominal images and represents the motion-blurred input more
accurately. AE-PoS preserves nominal image content while excluding motion
blur and rain from its learned target set, thereby exposing these degradations
in the residual. All residual panels use the same intensity gain.}
\label{fig:image-anomaly}
\end{figure}

%% file: appendices/ECG_Appendix.tex
\section{Transferable ECG Representations}
\label{apd:ecg}

\noindent\textbf{Task and baselines:}\quad
Clinical annotation of 12-lead ECG recordings requires expert interpretation, whereas large collections of unlabelled recordings are readily available. We therefore evaluate whether nested representation learning improves a self-supervised ECG encoder before downstream linear probing. We compare with the published results of ECG-CPC~\citep{AlMasud2025}, ECG-FM~\citep{McKeen2025ECGFMAO}, MERL~\citep{liuzeromerl}, and ST-MEM~\citep{na2024guiding}.

\noindent\textbf{Depth-controlled architecture:}\quad
ST-MEM learns ECG representations by reconstructing masked portions of 12-lead recordings. For a capacity-controlled comparison, Stage~0 follows the ST-MEM design but reduces its encoder from 12 to six Transformer blocks, while retaining the original four-block decoder and standard masking ratio of \(75\%\).
After pretraining, the Stage-0 encoder is frozen, and Stage~1 applies a second six-block-encoder, four-block-decoder network of the same form to its latent representation using masked latent reconstruction. The resulting nested encoder therefore has a total depth of 12 blocks, matching the original ST-MEM
backbone while distributing learning across two successive stages.

\noindent\textbf{Pretraining data:}\quad
Both stages use the same unlabelled pretraining pool: 34,808 recordings from Ningbo~\citep{PhysioNet-ecg-arrhythmia,pollard_physionet_2026,Zheng2020Optimal} and 130,164 recordings from CODE-15~\citep{Ribeiro2021CODE15AL}, giving 164,972 recordings in total. Each signal is high-pass filtered at \(0.67\,\mathrm{Hz}\), low-pass filtered
at \(40\,\mathrm{Hz}\), standardized independently, resampled to
\(250\,\mathrm{Hz}\), and cropped to 2250 samples. Stage~1 is trained only on representations of this same pretraining pool, so no additional ECG data are introduced after Stage~0.

\noindent\textbf{Pretraining:}\quad
For both stages, we use AdamW with learning rate \(3.75\times10^{-5}\), weight decay \(0.05\), and \((\beta_1,\beta_2)=(0.9,0.95)\). Training uses a batch size of \(64\), gradient clipping at norm \(1.0\), and \(150{,}000\) update steps. The learning rate is warmed up linearly for \(15{,}000\) steps and then follows cosine decay to \(10^{-6}\).

\noindent\textbf{Downstream protocol:}\quad
We use the data splits and frozen-encoder linear-probe protocol of \citet{liuzeromerl}. All pretrained parameters are frozen, and only a single linear classification layer is optimized using binary cross-entropy and Adam with learning rate \(10^{-3}\), weight decay \(10^{-4}\), batch size \(16\), and \(100\) training epochs. The learning rate is multiplied by \(0.1\) after epoch \(40\). The checkpoint with the highest validation AUROC is selected, and its test AUROC is reported. Table~\ref{tab:ecg-nested} uses the complete labelled training split and reports the mean over five seeds. The downstream tasks are the PTB-XL superclass, subclass, form, and rhythm tasks~\citep{wagner2020ptb}, together with ICBEB/CPSC-2018~\citep{PhysioNet-challenge2020}.

\noindent\textbf{Results:}\quad
As shown in Table~\ref{tab:ecg-nested}, Stage~1 improves upon Stage~0 on all five downstream tasks and obtains the highest AUROC in every column among the compared methods. Because the two-stage encoder has the same total depth as the original eight-layer ST-MEM backbone, this comparison indicates that the gain arises from successive refinement of the frozen Stage-0 representation rather than from increasing total encoder depth alone.

%% file: appendices/ExtendedLimitations.tex
\section{Expanded Limitations and Future Directions}
\label{apd:limitations}

The first limitation is that the geometry is defined relative to the reconstruction metric \(d\), which we fix to pixel \(\ell_2\) distance to isolate the representation-learning question. Consequently, anomalies that remain close to the nominal union under this metric can be difficult to distinguish. Digit 9 illustrates this limitation in the \(\{4,5,7\}\) experiment, where \(87\%\) of its samples are detected by neither score, while the gap between AE-PoS and transformation-based methods on CIFAR-100 indicates that this issue becomes more pronounced for semantically complex data. Learning task-appropriate metrics or deriving anomaly scores directly from the multilevel geometry, while preserving the metric conditions required by Lemma 1, is therefore an important direction for future work. The second limitation is that the number of nested carving levels is currently
selected manually. The nested range relation suggests a label free stopping
criterion based on the fraction of perturbed held out nominal samples whose
reconstructions continue to violate the Push margin at level \(\ell\). Whether
this quantity can reliably determine when the current range requires further
carving, without access to anomaly examples, remains the main open question
for automatically constructing the nested architecture. The present analysis characterizes representation dynamics relative to nominal data, for which nonlinear orthogonal projection onto the learned union \(\mathcal M\) provides the ideal representation. For anomalous inputs outside the reach of \(\mathcal M\), the nearest nominal point need not be unique, and direct projection onto \(\mathcal M\) therefore does not generally define an ideal representation. We conjecture that important families of structured anomalies, such as specific ECG abnormalities, may lie on or near isometric images \(\{\Phi_\gamma(\mathcal M)\}_{\gamma\in\Gamma}\) of the nominal union. Future work will investigate conditions under which an identifiable inverse transformation \(\Phi_\gamma^{-1}\) maps each anomalous input back to or within the reach of \(\mathcal M\), so that \(P_{\mathcal M}\circ\Phi_\gamma^{-1}\) provides a unique canonical representation and \(\Phi_\gamma\circ P_{\mathcal M}\circ\Phi_\gamma^{-1}\) realizes nonlinear orthogonal projection onto the corresponding structured anomalous manifold \(\Phi_\gamma(\mathcal M)\).